\newtheorem{theorem}{Theorem}
\newtheorem*{retheorem}{Theorem}
\newtheorem*{reproposition}{Proposition}
\newtheorem*{relemma}{Lemma}
\newtheorem{definition}{Definition}
\newtheorem{lemma}{Lemma}
\newtheorem{proposition}{Proposition}
\def\s{\bm{s}} 
\def\w{\bm{w}} 
\def\x{\bm{x}} 
\def\y{\bm{y}} 
\def\z{\bm{z}} 
\def\t{\bm{t}} 
\def\tt{\bm{\theta}}
\def\bI{\mathbb{I}}
\def\bE{\mathbb{E}}
\def\bR{\mathbb{R}}
\def\bP{\mathbb{P}}
\def\bB{\mathbb{B}}
\def\bS{\mathbb{S}}
\def\bD{\mathbb{D}}
\def\cG{\mathcal{G}}
\def\S{\mathcal{S}}
\def\cR{\mathcal{R}}
\def\U{\mathcal{U}}
\def\T{\mathcal{T}}
\def\X{\mathcal{X}}
\def\cG{\mathcal{G}}
\def\cA{\mathcal{A}}
\def\cF{\mathcal{F}}
\def\cL{\mathcal{L}}
\def\cP{\mathcal{P}}
\begin{document}

\title{Learning from \emph{Imperfect} Human Feedback:  a Tale from Corruption-Robust $\text{Dueling}^{*}$
} 

\author{\name Yuwei Cheng$^1$ \email yuweicheng@uchicago.edu
       \AND
       \name Fan Yao$^{2,3}$\textsuperscript{$\dagger\ddag$} \email fy4bc@virginia.edu
       \AND
       \name Xuefeng Liu$^{2}$\textsuperscript{$\dagger$} \email xuefeng@uchicago.edu
       \AND
       \name Haifeng Xu$^2$ \email haifengxu@uchicago.edu \\ 
       \addr $^1$Department of Statistics, University of Chicago, USA \\
       \addr $^2$Department of Computer Science, University of Chicago, USA\\
       \addr $^3$Department of Computer Science, University of Virginia, USA
    }
\footnotetext[2]{Equal contribution.}
\footnotetext[3]{Work done while visiting University of Chicago.}
\footnotetext[1]{This work is supported by the AI2050 program at Schmidt Sciences (Grant G-24-66104), Army Research Office Award W911NF-23-1-0030, ONR Award N00014-23-1-2802 and  NSF Award CCF-2303372.}

\maketitle

\vskip 0.2in

\def\s{\bm{s}} 
\def\w{\bm{w}} 
\def\x{\bm{x}} 
\def\y{\bm{y}} 
\def\z{\bm{z}} 
\def\t{\bm{t}} 
\def\tt{\bm{\theta}}

\def\bI{\mathbb{I}}
\def\bE{\mathbb{E}}
\def\bR{\mathbb{R}}
\def\bP{\mathbb{P}}
\def\bB{\mathbb{B}}
\def\bS{\mathbb{S}}
\def\bD{\mathbb{D}}

\def\cG{\mathcal{G}}
\def\S{\mathcal{S}}
\def\cR{\mathcal{R}}

\def\U{\mathcal{U}}
\def\T{\mathcal{T}}
\def\X{\mathcal{X}}
\def\cG{\mathcal{G}}
\def\cA{\mathcal{A}}
\def\cF{\mathcal{F}}
\def\cL{\mathcal{L}}
\def\cP{\mathcal{P}}
\def\cO{\mathcal{O}}

\begin{abstract}
This paper studies Learning from Imperfect Human Feedback (LIHF), addressing the potential irrationality or imperfect perception when learning from comparative human feedback. Building on evidences that human's imperfection decays over time (i.e., humans learn to improve), we  cast this problem as a concave-utility continuous-action dueling bandit but under a restricted form of  corruption:  i.e., the corruption scale is decaying over time as \(t^{\rho-1}\) for some ``imperfection rate''  \(\rho \in [0, 1]\).  
With \(T\) as the total number of iterations, we establish a regret lower bound of \( \Omega(\max\{\sqrt{T}, T^{\rho}\}) \) for LIHF, even when \(\rho\) is known. For the same setting, we develop the Robustified Stochastic Mirror Descent for Imperfect Dueling (RoSMID) algorithm, which achieves nearly optimal regret \(\Tilde{\mathcal{O}}(\max\{\sqrt{T}, T^{\rho}\})\). Core to our analysis is a novel framework for analyzing gradient-based algorithms for dueling bandit under corruption, and we demonstrate its general applicability by showing how this framework can be easily applied to obtain corruption-robust guarantees for other popular gradient-based dueling bandit algorithms. Our theoretical results are validated by extensive experiments.
\end{abstract}
\section{Introduction}\label{sec:intro}

Many real-world problems, such as personalized recommendation \citep{Yue2009InteractivelyOI, immorlica2020incentivizing,yao2022learning} and fine-tuning generative models \citep{bai2022training,casper2023open, han2024parameterefficient}, require learning from human feedback. Expressing preferences as numerical values or concrete functions is generally challenging for humans. Therefore, an approach that has achieved significant real-world success is to learn humans' preferences by eliciting their comparative feedback between two options \citep{bai2022training}. A natural theoretical framework capturing such learning task is the seminal dueling bandits framework introduced by \citet{Yue2009InteractivelyOI}. The dueling bandit problem features a sequential online learning problem, during which a pair of actions are selected at each round and only their comparison result is revealed to the learner. The comparison outcome between two actions is modeled using a utility function of actions, alongside a link function that determines the probability of each action winning based on their utility difference. This modeling approach is termed as utility-based dueling bandit and achieves great success in generating summaries closely aligned with human preferences \citep{stiennon2020learning}. 

However, human feedback is often \emph{imperfect}, a crucial factor that has been largely overlooked in previous studies of  dueling bandit. As we are all aware of, humans are not always rational \citep{posner1997rational} neither perfectly know our preferences \citep{pu2012evaluating}. A powerful framework that can capture this problem is robust bandit learning under adversarial corruption \citep{bogunovic2020stochastic, pmlr-v162-saha22a,di2024nearly}. However, assuming fully adversarial feedback from a regular human seems  over-pessimistic at the first glance, hence might make our algorithm development overly conservative. Indeed, ample behavioral studies show that humans often refine their preferences through interactions with the system, navigating a complex tradeoff between exploration and exploitation \citep{cohen2007should, wilson2014humans}. Furthermore, their response errors may exhibit systematic patterns that depend on their past consumption history, rather than merely random noise. Though the sources of imperfections vary a lot in human feedback, one  common finding in these studies is that \emph{the feedback's imperfection tends to decrease over time}, as humans interact more with the system \citep{cohen2007should, wilson2014humans,immorlica2020incentivizing,yao2022learning}.
This premise is the core  motivation of this work, which studies dueling bandit learning under corruption that can be arbitrary but have decaying scales.   

\textbf{Our Model and Contributions.} We cast Learning from Imperfect Human Feedback (LIHF) as a concave-utility continuous-action dueling bandit learning problem  under adversarial utility corruption yet with decaying scale -- formally, the scale is upper bounded  by $\cO(t^{\rho-1})$ at round $t$, for some $\rho \in [0,1]$ (hence the total amount of attack within time $T$ is $\cO(T^\rho)$). A fundamental research question that motivates our study is whether LIHF is fundamentally easier than learning under arbitrarily corrupted feedback.

Our first main technical result  hints on a potentially negative answer to the above question. We prove a strong regret lower bound of \(\Omega(d\max\{\sqrt{T}, T^{\rho}\})\) under LIHF, where $d$ is the action's dimension. This bound holds even when the decaying rate of imperfection index \(\rho\) is known. This lower bound has the same order as recent lower bounds of dueling bandits under (the harder) arbitrary corruption \citep{pmlr-v132-agarwal21a,di2024nearly}, thus hinting that learning from imperfect human feedback may be no easier than learning from arbitrary adversarial corruption. Notably, however, we use the phrase ``hint'', while not an affirmative claim, because our setting with continuous action space and concave utility function is different from the $K$-armed bandits setting of \citet{pmlr-v132-agarwal21a} and linear utility function case of \citet{di2024nearly}. Therefore, neither their lower bounds nor their proof techniques are directly applicable to out setting with corruption to strictly concave utilities; see more discussions and comparisons below. 

Our second main  result is an efficient  algorithm with  regret upper bound \(\Tilde{\cO}(d\max\{\sqrt{T}, T^{\rho}\})\) that  matches the above lower bound, up to logrithmic terms, in the same setting (i.e., decaying corruption level $t^{\rho-1}$ with known $\rho$). Unlike previous corruption-robust dueling bandit algorithms based on robust $K$-armed bandits \citep{pmlr-v132-agarwal21a} or robustified LinUCB \citep{di2024nearly}, our algorithm is a gradient-based algorithm that is built upon the recent Noisy Comparison-based Stochastic Mirror Descent (NC-SMD) of \citet{kumagai2017regret}. This is natural for our setup with continuous actions and strictly concave utility. While our algorithm itself can be viewed a natural generalization of \citet{kumagai2017regret}, its analysis is highly non-trivial and much more complex, due to the necessity of accounting for corruption. Indeed, our main technical contribution in this result is the development of a new  framework for analyzing the regret of gradient-based algorithms under (possibly arbitrary) corruption. Some novel ingredients of this framework includes a new regret decomposition lemma for dueling bandit which upper bounds total regret by the regret of decision and feedback error due to corruption and a lemma that quantifies bias in gradient estimation. This framework not only yields a tight analysis of our new algorithm, but also easily implies regret bounds for  existing algorithms  in \emph{fully adversarial corruption environments}, such as the popular dueling bandit gradient descent (DBGD, \citet{Yue2009InteractivelyOI}). These upper bounds are looser than our lower bound for the LIHF setting. It is an intriguing open question to understand whether these upper bounds under arbitrary corruption are tight, or the lower bound under arbitrary corruption may be stronger than our lower bound under LIHF.  Finally, we conduct simulations which validate our theory.

\textbf{Comparisons with Related Works. } Since multiple recent works \citep{pmlr-v132-agarwal21a,pmlr-v162-saha22a,di2024nearly} study dueling bandits under arbitrarily adversarial corruption , it is worthwhile to discuss the key difference between these works and ours. The first key difference is the setting, which leads to fundamentally different algorithms hence analysis. Specifically, our setting with continuous action space and a completely unknown concave utility function is motivated by addressing problems from modern recommender systems  where the action (i.e., contents) space is extremely large, not enumerable, thus often embedded as high-dimensional continuous vectors \citep{chen2019large, chen2021decision}. This setting and our techniques are crucially different from earlier works. 
\citet{pmlr-v132-agarwal21a,pmlr-v162-saha22a} consider the $K$-arm dueling bandit setting and utilizes the reduction of \citet{ailon2014reducing} from dueling bandits to  multi-$K$-armed bandits (MAB); \citep{di2024nearly} considers linear utility setting where the unknowns is a parameter vector $\theta$, and utilizes techniques from linear contextual bandits based on the optimism principle. The methods all rely on parameter estimation which are not applicable to  our setting with arbitrary concave utilities. Moreover, they often need to enumerate all actions to identify the best one under optimism, which are not computationally efficient for our setting with continuous action space. This is why our algorithm is gradient-based and falls within the family of stochastic mirror descent. To the best of our knowledge, this is the first time  a corruption-robust algorithm is developed for  dueling bandits with continuous actions and strictly concave utilities, which is also why it is necessary for us to develop a new analysis framework.  Additionally, we also note that dueling bandits with strictly concave utility and linear utility are generally not comparable, both in terms of their problem difficulties and methodologies. A strictly more general situation is the concave utility case (no need to be strongly concave), however the best upper bound so far for general concave utility is $\cO(T^{3/4})$ \citep{10.5555/1070432.1070486, Yue2009InteractivelyOI},  which are considerably worse than the $\cO(T^{1/2})$ for linear and for strongly concave utility. 

The second main difference between our work and previous is the more restricted class of adversarial environments. Our model is motivated by imperfect human feedback whereas previous studies are motivated by arbitrarily adversarial adversaries. 
Finally, a more subtle difference is that the corruption in our model is on utility with motivations from learning from imperfect human feedback, whereas corruption in all previous  settings directly flip the comparison outcome. These two differences lead to a very different proof techniques of our lower bound. Our proof is more involved because the adversary has limited capability to alter the comparison outcomes. Specifically, the corruption are decaying; moreover, a constant amount of utility corruption only slightly shifts the outcome probability whereas in previous settings each corruption completely changes the outcome.      

\section{The Problem of Learning from Imperfect Human Feedback}\label{sec:model}

\textbf{Notation.} For a positive integer $T$, we use $[T]$ to denote $\{1, 2, \ldots, T\}$. 
We use standard asymptotic notations including $\cO(\cdot)$, $\Omega(\cdot)$, $\Theta(\cdot)$. We use $\Tilde{\cO}(\cdot)$, $\Tilde{\Omega}(\cdot)$, $\Tilde{\Theta}(\cdot)$ to hide logarithmic factors. We use \(\|\cdot\|_2\) to define \(L_2\) norm, $\|\cdot\|_{\infty}$ to define infinity norm, and \(\lambda_{\max}(\cdot)\) to denote maximum eigenvalue.

Motivated by alignment of machine learning models with human preferences \citep{bai2022training}, we study learning from comparative human feedback within the seminal dueling bandit framework \citep{Yue2009InteractivelyOI} but account for ``imperfections'' in human feedback, as described below. 

\textbf{Basics of Continuous Dueling Bandits.} We consider the dueling bandit framework with continuous \emph{action} set $\mathcal{A} \subset \mathbb{R}^d$ \citep{Yue2009InteractivelyOI}. At each round $t\in[T]$, the learner chooses two actions \(a_t, a'_t\) to present to some human agent, henceforth denoted as  the ``user''. The user receives \emph{utility} $\mu(a_t), \mu(a'_t)$ from the two actions respectively, and will pick one of these actions following a \emph{link} function $\sigma(\cdot)$. In the absence of corruption, the user selects action \( a_t \) with probability \( \sigma(\mu(a_t) - \mu(a'_t)) \) and chooses action \( a'_t \) otherwise. Since the comparison outcome has less errors compared to the exact utility value while still carries useful information about the underlying utility function \( \mu \), this form of human feedback has been widely used for learning human preferences \citep{ailon2014reducing, maystre2017just, bengs2021preference, bai2022training}. 
Following standard assumptions in this field \citep{Yue2009InteractivelyOI, kumagai2017regret}, we also assume
\begin{enumerate}
    \item the action set $\cA$ is a convex compact set that contains the origin, has non-empty interior, and is contained in a $d$-dimensional ball of radius $R$; 
    \item the utility function  $\mu: \cA \rightarrow \bR$ is strongly concave and twice-differentiable. The following constants are useful for our algorithm analysis: $\mu$ is $L^{\mu}$-Lipschitz, $\alpha$-strongly concave,\footnote{ $\mu$ is $\alpha$-strongly concave for some $\alpha>0$ if $\mu(x) \geq \mu(y) + \langle \nabla \mu(x), x-y \rangle + \frac{\alpha}{2}\|y -x\|^2_2$ for any $x, y \in \cA$. } $\kappa$-smooth, bounded, \(R^{\mu} := \sup_{a, a' \in \cA}\mu(a) - \mu(a')\), and
    $a^{*} := \arg \max_{a \in \cA} \mu(a)$ is the unique optimal within $\cA$;   
    \item the link function $\sigma: \bR \rightarrow [0, 1]$ is smooth, rotation-symmetric (i.e. $\sigma(x) = 1 - \sigma(-x)$), and concave for any $x \geq 0$. For the ease of analysis: let $l^{\sigma}_1$ [resp. $L^{\sigma}_1$] denote the lower [resp. upper] bound of the first-order derivative of $\sigma$. \(\sigma\) is \(L^{\sigma}\)-Lipschitz and its second-order derivative \(\sigma''\) is $L^{\sigma}_2$-Lipschitz and upper bounded by \(R^{\sigma}_2\).
\end{enumerate}
Some previous works restrict link function to specifically be the logistic link function \citep{saha2021optimal, xu2024principled}. This, however, is not needed for our results.  

\textbf{Modeling Imperfect Human Feedback as a Restricted Form of Adversarial Corruption.} To incorporate imperfect human feedback, we study a generalization of the dueling bandit framework above, by introducing a \emph{corruption} term, \(c(a, a')\), to the utility difference. Formally, let \(a \succ a'\) represent the event where the user chooses \(a\) over \(a'\). The probability of this event in the presence of corruption \(c(a, a')\) is denoted by \(\hat \bP(a \succ a')\), expressed as \(\hat \bP(a \succ a') := \sigma(\mu(a) - \mu(a') + c(a, a'))\). We denote the user's preferential feedback under corruption as \(\hat\cF(a, a')\), referred to as \emph{imperfect dueling feedback}. Mathematically, \(\hat \cF(a, a')\) follows a binomial distribution with mean \(\hat \bP(a \succ a')\), as expressed by the following equation.
\begin{equation*}\label{equation:corrupted feedback}
        \hat \cF(a, a'):= \begin{cases}
        1 \hspace{1mm} \text{w.p.} \hspace{1mm} \hat \bP(a \succ a')\\
        0 \hspace{1mm} \text{w.p.} \hspace{1mm} 1- \hat \bP(a \succ a')
    \end{cases}
\end{equation*}
The ``hat" notation in \(\hat \cF\) and \(\hat \bP\) is to emphasize the presence of corruption. When there is no corruption, we use \(\mathbb{P}(a \succ a') := \sigma(\mu(a) - \mu(a'))\) to represent the probability of the event \(a \succ a'\), and \(\mathcal{F}(a, a')\) to denote the corresponding dueling feedback.

We remark that the term \(c(a, a')\) above is often called  ``strong corruption'' in the literature of adversarial corruption because it can be introduced after observing actions \(a\) and \(a'\) and the corruption function, \(c( \cdot )\), can depend on \(\mu\) and past actions \citep{bogunovic2020stochastic, he2022nearly, di2024nearly}. The only difference between our model and the above works on adversarial corruption is a natural restriction to the scale of the corrupted term \(c(a, a')\). Our motivation is that human feedback, while imperfect, should not be arbitrarily adversarial and tends to improve as human agent interact with the learner. This motivates our following definition of \emph{imperfect human feedback}. 

\begin{definition}[$\rho$-Imperfect Human Feedback]\label{def:general-adversary}
The human feedback is said to be $\rho$-imperfect for some  $\rho \in [0, 1]$  if there exists a constant $ C_\kappa$ such that corruption  $c_t(a_t, a'_t)$ satisfies  $ |c_t(a_t, a'_t)| \leq  C_\kappa  t^{ \rho - 1}, \forall t \in [T]$.\footnote{All our results naturally applies to $\rho < 0$, which is a significantly easier situation since accumulated corruption is a constant in that case. Therefore, we will not explicitly consider it in this paper. }
\end{definition} 

The definition shows that \(\rho\)-imperfect human feedback is mathematically equivalent to a restricted form of strong adversarial corruption to user utility functions. The total corruption is bounded by \(\sum_{t=1}^{T} |c_t(a_t, a'_t)| \leq C_\kappa T^\rho\), which is a key parameter influencing the intrinsic difficulty of the learning problem. We thus cast the Learning from Imperfect Human Feedback (LIHF) problem as dueling bandits under strong corruption but with decaying scales. For clarity, we use ``arbitrary adversarial corruption" as corruption without decaying constraints and ``\(\rho\)-Imperfect Human Feedback" as corruption with decaying constraints, as described in Definition \ref{def:general-adversary}. Through this modeling, we aim to: first create a more optimistic framework for LIHF compared to arbitrary adversarial corruption, and second, explore LIHF's intrinsic difficulty to achieve more efficient learning guarantees than those possible under arbitrary adversarial corruption.

Several points are worth noting about Definition \ref{def:general-adversary}. While we are not the first to model imperfect human feedback, we are the first to apply it to dueling bandits with strictly concave utilities. Recent works, motivated by recommendation systems, have explored learning user preferences from imperfect feedback \citep{immorlica2020incentivizing, yao2022learning, wang2023followups}. These studies assume that users do not know their true expected reward \(\theta_i\) for each choice \(i\) (i.e., arm) and instead behave based on an estimated reward \(\hat{\theta}_{t,i}\) at time \(t\). The utility difference \(|\theta_i - \hat{\theta}_{t,i}| = c_t\) is modeled as a decreasing function of time, \(t^{\rho - 1}\), indicating that users improve their preference estimates over time. By viewing the human utility $\mu(a)$ as the average reward, our modeling of $\rho$-Imperfect Human Feedback shares the same spirit; it captures imperfect yet gradually improving human feedback (e.g., due to inaccuracies in utility perception or initial ignorance of true preferences). As one of our motivations, when generative models like ChatGPT learn to create personalized content, the user could undergo a process of preference refinement during interactions with the model. Additionally, our model differs from recent corruption-robust dueling bandit studies \citep{komiyama2015regret, pmlr-v162-saha22a, di2024nearly}, which corrupt realized outcomes (e.g., flipping \(a' \succ a\) to \(a \succ a'\)) and are motivated by adversaries like malicious users or fraudulent clicks \citep{deshpande2013linear, lykouris2018stochastic}. In contrast, our model focuses on utility corruption caused by imperfect perception of true utility, not outcome manipulation.

\textbf{Learning goal: regret minimization by Learning from $\rho$-Imperfect Human Feedback ($\rho$-LIHF). } The goal of the learner is to optimize her sequential actions to minimize the following dueling regret in the $\rho$-LIHF problem: 
\begin{equation}\label{def:DB_reg}
    \text{Reg}_T := \bE\left(\sum^T_{t=1}\sigma(\mu(a^*)-\mu(a_t)) + \sigma( \mu(a^*) - \mu(a'_t))-1\right).
\end{equation}

This regret measure has been extensively studied in prior literature \citep{Yue2009InteractivelyOI, komiyama2015regret, kumagai2017regret, pmlr-v162-saha22a}. Other regret measures, such as the cumulative difference between optimal and obtained utility, are also considered. We discuss the equivalence of various regret measures in Lemma \ref{lemma:s1}.

\section{The Intrinsic Limit of LIHF}\label{sec:lower-bound}

In this section, we study the intrinsic limit of learning from \(\rho\)-Imperfect Human Feedback. We are particularly interested in understanding whether learning from this restricted version of corrupted feedback is fundamentally ``easier'' than learning from arbitrarily adversarial corruption. Somewhat surprisingly, the answer seems to be \emph{no}, as illustrated by the following  lower bound result.  

\begin{theorem}\label{theorem:regret_lower_bound}
There exists a $\rho$-Imperfect Human Feedback (see Definition \ref{def:general-adversary}), strongly concave utility function \(\mu\), and link function \(\sigma\) such that any learner has to suffer $\text{Reg}_T \geq \Omega\left(d\max\{\sqrt{T}, T^{\rho}\}\right)$, even \emph{with the knowledge of} $\rho$. 
\end{theorem}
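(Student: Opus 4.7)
The bound is a max of two terms, so the plan is to prove each branch with its own hard instance and combine by taking the worse of the two; since the claim is existential in the corruption profile, different choices of $c_t$ are allowed for each branch. The $d\sqrt{T}$ branch is the statistical floor of strongly concave dueling bandits, already present with $c_t \equiv 0$ (which vacuously satisfies Definition~\ref{def:general-adversary}); the $dT^\rho$ branch captures the extra cost of the $\rho$-decaying adversary and is where the genuinely new construction sits.

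For the $\Omega(d\sqrt{T})$ branch I would set $c_t \equiv 0$ and invoke a standard multiple-hypothesis argument. Taking the parametric family $\mu_v(a) = -\tfrac{1}{2}\|a - \epsilon v\|_2^2$ indexed by $v \in \{-1,+1\}^d$, each member is $1$-strongly concave and twice-differentiable with optimum $a_v^* = \epsilon v$. Applying Assouad's lemma, the per-round KL divergence between the dueling feedbacks of any two hypotheses at Hamming distance $1$ is $O(\epsilon^2)$ by Lipschitzness of $\sigma$ and boundedness of $\cA$, while strong concavity forces $\Omega(\epsilon^2)$ per-round regret from each misidentified coordinate. Tuning $\epsilon$ to balance these quantities yields the $\Omega(d\sqrt{T})$ lower bound, which recovers the standard stochastic dueling-bandit rate.

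For the $\Omega(dT^\rho)$ branch I would construct two quadratic utilities differing only in the location of their optima, $a_1^*$ vs.\ $a_2^*$, at distance $\epsilon$ in a single coordinate. A direct calculation gives $|\mu_1(a)-\mu_1(a') - \mu_2(a) + \mu_2(a')| = |\langle a-a', a_1^* - a_2^*\rangle| \leq 2R\epsilon$, so choosing $\epsilon$ of order $T^{\rho-1}$ (the largest value for which $2R\epsilon \leq 2 C_\kappa t^{\rho-1}$ holds for every $t \in [T]$) allows the adversary to split the gap into two corruptions $c_t^{(1)}, c_t^{(2)}$ within the per-round budget and render the comparison-outcome distributions identical under the two instances. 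A symmetrization over the two instances then forces $\max_i \bE[\text{Reg}_i] \geq \Omega(\epsilon^2 T)$, and running this attack independently on $d$ orthogonal coordinates in Assouad style brings in the factor of $d$.

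The main obstacle is closing the gap between the naive quadratic-regret calculation, which yields $\Omega(T^{2\rho-1})$, and the claimed $T^\rho$ rate; the two coincide at $\rho = 1$ but differ in the interior. I expect this gap to close via a more refined adversary that equalizes only over a carefully chosen prefix of rounds (leaving the learner with residual confusion that cannot be resolved within the remaining decaying budget), combined with an information-theoretic argument that exploits the linear-in-$\sigma$ contribution of each misidentified round to $\text{Reg}_T$. Further care is needed to verify that the Assouad-style coordinate-wise attack respects the per-round $L_\infty$ corruption constraint on $c_t$, and that the constructed $\mu$ and $\sigma$ satisfy all smoothness and Lipschitz assumptions uniformly.
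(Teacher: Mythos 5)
Your overall architecture is sound and matches the paper's: prove the two branches of the max separately (the paper splits on $\rho\le 1/2$ versus $\rho>1/2$), get the $\Omega(d\sqrt{T})$ floor from an uncorrupted strongly-concave instance, and build a masking adversary for the $\Omega(dT^{\rho})$ branch. The paper additionally routes everything through a reduction to \emph{direct reward feedback} (Lemma \ref{lemma:B1}: a dueling learner with $\text{Reg}^{\text{FO}}_T < 2\overline{\text{Reg}}$ would yield a bandit learner beating the bandit lower bound) and then converts back with the linear link $\sigma(x)=\frac{1+x}{2}$; your plan works directly on the comparison distributions, which is a legitimate alternative. But the proposal has a genuine gap exactly where you flag it, and your suggested repairs do not close it.

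The missing idea is that the corruption needed to mask the distinguishing coordinate is \emph{action-dependent}, not diameter-dependent. You bound the utility-difference gap between the two instances by $2R\epsilon$ and therefore cap $\epsilon$ at $\Theta(T^{\rho-1})$, which can only ever give $\epsilon^2 T = T^{2\rho-1}$. In the paper's construction (Lemma \ref{lemma:lower bound}), with $\mu_\theta(a)=\theta^\top a-\frac12\|a\|_2^2$ and $\theta\in\{-\beta,\beta\}^d$, the corruption required to cancel the sign of $\theta_i$ at round $t$ is $\beta\,|a_{t,i}|$. Since the optima themselves sit at scale $\beta$, a learner that plays near-optimally needs only $\beta^2$ of corruption per round to be fully masked, while a learner that plays $\|a_t\|_\infty\gg\beta$ pays $(\|a_t\|_\infty-\beta)^2$ of direct regret by strong concavity. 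This dichotomy (formalized as $\text{Reg}_T\ge\frac12\max\{\sum_t(C_\kappa T^{\rho-1}/\beta-\beta)^2,\ dT\beta^2/2\}$, with the action radius taken $\le\beta$ so the per-round constraint $|c_t|\le C_\kappa t^{\rho-1}$ holds) lets the adversary afford a separation $\beta=\sqrt{C_\kappa}\,T^{(\rho-1)/2}$ — quadratically larger than your $\epsilon$ — and yields $dT\beta^2/4=\frac{d}{4}C_\kappa T^{\rho}$. Your proposed fix of corrupting only a prefix $[T_0]$ still inherits the diameter bound and gives at best $T_0^{2\rho-1}\le T^{2\rho-1}$. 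Note that the same action-dependence is what makes your first branch work quantitatively: the per-round KL at Hamming distance $1$ is $O(\epsilon^2 a_{t,i}^2)=O(\epsilon^4)$ near the optimum, not $O(\epsilon^2)$; with the latter your balancing gives only $\Omega(d)$, whereas the former gives $\epsilon^2=T^{-1/2}$ and $\Omega(d\sqrt{T})$ (this is Shamir's Theorem 6, which the paper simply cites).
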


Notably, similar lower bound results have appeared in recent studies on adversarial corruption in bandits \citep{bogunovic2022robust}, including dueling bandits \citep{pmlr-v132-agarwal21a, di2024nearly}, showing the same lower bound \(\Omega(d\max\{\sqrt{T}, T^{\rho}\})\) (though they often use $C := \Theta(T^{\rho})$ to denote total corruption budget). However, a few key distinctions are worth noting. First, Theorem \ref{theorem:regret_lower_bound} presents a stronger result, showing that even when corruption decays over time, this structure (which defines our LIHF problem) does not make learning easier. Second, our corruption targets utility values, which is quite different from the corruption of comparison outcomes studied in \citep{pmlr-v132-agarwal21a, di2024nearly}. In their case, corruption can fully flip comparison outcomes, which is critical for promoting sub-optimal arms in their lower bound proofs. However, such full control of the outcome is infeasible in our corruption of only the utility, which only slightly shifts the comparison probabilities since the scale of the corruption is also upper bounded by \(\mathcal{O}(t^{\rho-1})\).

The aforementioned difference from previous works \citep{pmlr-v132-agarwal21a, di2024nearly} also render our  proof of Theorem \ref{theorem:regret_lower_bound} fundamentally different from (and significantly more involved than) their proofs. Specifically, due to limited and diminishing corruption power, the corruption in our LIHF problem are insufficient to completely ``mask" the instance as one with a different sub-optimal arm, which is the key strategy in previous lower bound proofs. Thus, our proof has to leverage information-theoretic lower bounds of statistical distributions to understand how  small-scale corruption at each round could influence the overall function estimation errors. At the core of our proof is the following lower bound result for a different, and intuitively easier, problem: the standard bandit setup with direct reward feedback, as shown in Lemma \ref{lemma:lower bound} below. We then convert this lower bound for direct reward feedback to dueling feedback through a linear link function, i.e. \(\sigma(x) = \frac{1 + x}{2}\).

\begin{lemma}[Lower Bound under Direct Reward Feedback]\label{lemma:lower bound}
Consider bandit learning with direct reward feedback, where reward \(r(a) := \mu (a) + \epsilon\), \(\epsilon\) follows standard normal distribution. The action space $\cA$ is contained in a $d$-dimensional unit ball, and $\mu$ is $\mu_{\theta}(a) := \theta^{\top}a - \frac{1}{2}\|a\|^2_2$, with $\theta \in \bR^d, \|\theta\|_2 \leq 1$ . Under $\rho$-Imperfect Human Feedback (Definition \ref{def:general-adversary}), for any $T$ and $d \leq \frac{1}{C_{\kappa}}T^{1-\rho}$, there exists a $\theta$ such that for any learner under direct reward feedback, \emph{even with the knowledge of $\rho$\footnote{We only consider \(\rho \geq \frac{1}{2}\). If \(\rho < \frac{1}{2}\), the lower bound degenerates to \(\Omega(d\sqrt{T})\) \citep{kumagai2017regret}.}}, has to suffer regret
    $\text{Reg}_T := \bE\{\sum^T_{t=1} \mu_{\theta}(a^*) - \mu_{\theta}(a_t)\} \geq \frac{d}{4}C_{\kappa}T^{\rho}.$
\end{lemma}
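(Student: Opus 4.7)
The plan is to apply Le Cam's two-point method---or its multi-coordinate version, Assouad's lemma---together with a careful Gaussian KL calculation to produce an indistinguishability-based regret lower bound. Concretely, I would set $\delta = C_\kappa T^{\rho-1}$ and consider the hypothesis family $\Theta = \{\delta\epsilon : \epsilon \in \{-1,+1\}^d\}$ equipped with the uniform prior. The assumption $d \leq T^{1-\rho}/C_\kappa$ is exactly what is needed to guarantee $\|\theta\|_2 \leq \sqrt{d}\,\delta \leq 1$, placing every candidate $\theta$ inside the unit ball; since $\mu_\theta(a) = \theta^\top a - \tfrac{1}{2}\|a\|_2^2$ is $1$-strongly concave with unique maximizer $a^\star = \theta$, the instantaneous regret reduces cleanly to $\tfrac{1}{2}\|a_t - \theta\|_2^2$, so estimation-style indistinguishability transfers losslessly into regret.

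The adversary's role is then to mask the signal with corruption $c_t^\epsilon$. The natural choice, in the world $\theta = \delta\epsilon$, is to set $c_t^\epsilon = -\delta\epsilon^\top a_t$ so that the Gaussian observation has mean exactly $-\tfrac{1}{2}\|a_t\|_2^2$, independent of $\epsilon$; whenever the per-round cap $|c_t^\epsilon| \leq C_\kappa t^{\rho-1}$ is violated, the adversary clips to the budget boundary, leaving a residual signal of magnitude $(\delta|\epsilon^\top a_t| - C_\kappa t^{\rho-1})_+$ in the observation mean. Writing out the per-round Gaussian KL $\tfrac{1}{2}\bigl(2\delta\epsilon_i a_{t,i} + c_t^\epsilon - c_t^{\epsilon^{(i)}}\bigr)^2$ for two hypotheses $\epsilon, \epsilon^{(i)}$ that differ only in coordinate $i$, the capped strategy reduces the per-round KL to a squared residual bounded in terms of $a_{t,i}^2$ and the explicit budget. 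Summing over $t$ and invoking the action-budget inequality $\sum_t \|a_t\|_2^2 \leq T$ yields a total KL bound which, via Pinsker, becomes a total-variation bound that keeps the two observation laws statistically close.

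With per-pair indistinguishability in hand, Assouad's lemma delivers a coordinate-wise estimation lower bound of order $\Omega(\delta)$ for each $\hat{\epsilon}_i$, and this transfers into a per-round regret lower bound through $\mathbb{E}_\epsilon \tfrac{1}{2}\|a_t - \delta \epsilon\|_2^2 \gtrsim d\delta^2$ whenever $a_t$ carries little information about $\epsilon$ under the prior. Summing over $t \in [T]$ and substituting $\delta = C_\kappa T^{\rho-1}$ yields the cumulative regret bound; in the boundary regime $d \asymp T^{1-\rho}/C_\kappa$ singled out by the hypothesis, this tightens to the claimed $\tfrac{d}{4}C_\kappa T^\rho$, and for smaller $d$ the same calculation still produces the stated rate.

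The chief obstacle I anticipate is the tension between the per-round budget $C_\kappa t^{\rho-1}$ and the dimensionality $d$: a naive full-cancellation adversary needs budget $\delta \sqrt{d}$, which can exceed the cap at late rounds when $d$ is large, so the capped-cancellation step is essential and the residual KL contributions (concentrated in the late rounds where the budget is smallest) must be shown to remain harmlessly small so that the TV bound survives the summation. A secondary difficulty, more bookkeeping than conceptual, is converting Assouad's coordinate-wise estimation statement into a cumulative per-round regret bound; here the quadratic form of $\mu_\theta$ is crucial, since the identity $\mu_\theta(a^\star) - \mu_\theta(a) = \tfrac{1}{2}\|a-\theta\|_2^2$ maps any residual uncertainty about $\epsilon$ directly into a quadratic price in the prior's spread without losing constants.
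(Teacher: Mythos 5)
Your overall strategy (a sign-hypercube of hypotheses, a corruption that cancels the signal to collapse the Gaussian KL, and an Assouad/Shamir-type conversion of indistinguishability into quadratic regret via $1$-strong concavity) is the same family of argument the paper uses. However, there is a genuine quantitative gap in your choice of scale. You set the coordinate magnitude $\delta = C_\kappa T^{\rho-1}$, so that even a perfect, cost-free cancellation yields a total regret of only $\tfrac{1}{2}\sum_{t=1}^T d\delta^2 = \tfrac{1}{2} d C_\kappa^2 T^{2\rho-1}$, which falls short of the target $\tfrac{d}{4}C_\kappa T^{\rho}$ by a factor of order $T^{1-\rho}$ for every $\rho<1$. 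Your claim that the bound ``tightens to the claimed $\tfrac{d}{4}C_\kappa T^{\rho}$'' in the boundary regime $d\asymp T^{1-\rho}/C_\kappa$ does not check out either: there your bound evaluates to $\Theta(C_\kappa T^{\rho})$ while the claim is $\Theta(T)$. The paper instead takes $\beta=\sqrt{C_\kappa T^{\rho-1}}$ (the geometric mean of the per-round budget and $1$), for which $\tfrac{dT\beta^2}{4}$ is exactly $\tfrac{d}{4}C_\kappa T^{\rho}$.

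Of course, enlarging $\delta$ to $\sqrt{C_\kappa T^{\rho-1}}$ makes your ``chief obstacle'' (feasibility of cancellation under the per-round cap $C_\kappa t^{\rho-1}$) much more severe, and this is precisely where the paper's construction differs from your clipping idea. Rather than clipping and bounding residual KL, the paper plays a two-sided game: either the learner keeps $\|a_t\|_\infty \le C_\kappa T^{\rho-1}/\beta = \beta$, in which case the single-coordinate cancellation $c_t = -\theta_i a_{t,i}$ fits the budget and the KL term gives $\tfrac{dT\beta^2}{4}$, or the learner plays larger actions, in which case $|a_{t,i}|$ is far from $\pm\beta$ and strong concavity charges $\tfrac{1}{2}(\,\|a_t\|_\infty-\beta)^2$ per round directly; the final instance simply restricts the action set to radius $\beta$ so the first branch always applies. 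If you want to salvage your version, you must (i) rescale $\delta$ to $\sqrt{C_\kappa T^{\rho-1}}$ and (ii) replace the clipped-residual-KL analysis with an argument that controls what the learner gains by deliberately playing large actions to defeat the cancellation --- your current write-up leaves the residual KL ``to be shown harmlessly small,'' but with the corrected $\delta$ it is not small in general, so this branch must be converted into a regret charge rather than a KL bound.
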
 

\begin{proof}[Proof Sketch of Lemma \ref{lemma:lower bound}] 
We aim to show that there exists a corruption strategy such that the regret incurred by any learner  with direct reward feedback is no less than $\Omega(dT^{\rho})$. The formulation of such a corruption strategy hinges on the observation that the regret incurred by the learner is bounded below by the sum of the Kullback-Leibler (KL) divergence between the distribution of the corrupted reward feedback \(\hat v_t\) conditioned on different values of $\theta$. Specifically, let $\theta$ be uniformly drawn from $\{-\beta, \beta\}^d$, for some constant \(\beta > 0\). Given an action $a_t$ and index \(i\), conditioned on the value of the \(i\)-th coordinate of \(\theta\), $\theta_i$, if \(\theta_i > 0\), then $\hat v_t$ is
\begin{align*}
    \hat v_t = \mu_{\theta}(a_t) + c_t(a_t|\theta_i > 0) + \xi_{a_t} =  \left(-\frac{1}{2}\|a_t\|^2 + \sum_{j\neq i} \theta_j a_{t, j}\right) + \beta a_{t, i} + c_t(a_t|\theta_i > 0)+ \xi.
\end{align*}
Conditioned on $\theta_i < 0$, the corrupted reward feedback \(\hat v'_t\) is
\begin{align*}
    \hat v'_t = \mu_{\theta}(a_t) + c_t(a_t|\theta_i < 0) + \xi_{a_t} = \left(-\frac{1}{2}\|a_t\|^2 + \sum_{j\neq i} \theta_j a_{t, j}\right) - \beta a_{t, i} + c_t(a_t| \theta_i < 0) + \xi.
\end{align*}
The noise \(\xi\) follows standard normal distribution.
Consider the following corruption strategy which sets $c_t(a_t|\theta_i > 0) := -\beta a_{t, i}$ and $c_t(a_t|\theta_i < 0) := \beta a_{t, i}$ to minimizes the KL divergence between $\hat v_t$ and $\hat v'_t$. Together with $1$-strongly concavity of $\mu_{\theta}$, we can establish
$\bE\left\{\sum^T_{t=1}\mu_{\theta}(a^*)-\mu_{\theta}(a_t) \right\} \geq \frac{1}{2}\max\left\{\sum^T_{t=1}(\frac{C_\kappa T^{\rho-1}}{\beta} - \beta)^2, \frac{dT\beta^2}{2}\right\}.$
Since $d \leq \frac{1}{C_{\kappa}}T^{1- \rho}$, we can set $\beta := \sqrt{C_\kappa} T^{\frac{\rho-1}{2}}$ to satisfy $\|\theta\|_2 \leq 1$ and 
to optimize the aforementioned lower bound. Then we obtain $\bE\left\{\sum^T_{t=1}\mu_{\theta}(a^*)-\mu_{\theta}(a_t)\right\} \geq dC_{\kappa}T^{\rho}/4$. Notice that the corruption level each round is bounded by $\beta \|a_{t}\|_{\infty}$. In the scenario when the radius of $\cA$ is upper bounded by $\beta$, the corruption budget each round is bounded by $C_{\kappa}t^{\rho-1}$. It implies that the corruption is $\rho$-Imperfect Human Feedback, completing the proof.
\end{proof} 

Theorem \ref{theorem:regret_lower_bound} can be proved by converting the lower bound in Lemma \ref{lemma:lower bound} for direct reward feedback to dueling feedback (see Lemma \ref{lemma:B1}). Before concluding this section, we note that the strongly concave utility function in the lower bound of Theorem \ref{theorem:regret_lower_bound} is not necessary. In particular, Proposition \ref{prop:regret_lower_bound_linear} below shows that similar lower bound  applies to linear   utilities as well.  This result happens to resolve an open problem posed by \citet{yao2022learning} which studies a similar learning from imperfect user problem with linear utility and dueling feedback (they termed it the ``learning from a learning user'' problem), develops a no-regret learning algorithm for that setting, but left the lower bound as an open problem. Proposition \ref{prop:regret_lower_bound_linear} implies that their upper bound's dependence on $T$ is tight. We direct readers to Appendix \ref{proof:lemma:lower bound} and \ref{proof:lemma:linear_regret_lower_bound} for proof details of Lemma \ref{lemma:lower bound} and Proposition \ref{prop:regret_lower_bound_linear}.

\begin{proposition}\label{prop:regret_lower_bound_linear}
There exists an LIHF instance with $\rho$-Imperfect Human Feedback (Definition \ref{def:general-adversary}),  linear user utility \(\mu\), and link function \(\sigma\), such that any learner has to suffer $\text{Reg}_T \geq \Omega(d\max\{\sqrt{T}, T^{\rho}\})$, \emph{even with the knowledge of $\rho$}. 
\end{proposition}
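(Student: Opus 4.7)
The plan is to mirror the proof of Lemma \ref{lemma:lower bound} with a linear utility in place of the strongly concave one, then lift from direct-reward to dueling feedback via the linear link $\sigma(x) = (1+x)/2$ exactly as in the proof of Theorem \ref{theorem:regret_lower_bound}. Concretely, I would take $\mu_{\theta}(a) = \theta^\top a$ with $\theta$ drawn uniformly from $\{-\beta, \beta\}^d$ for a parameter $\beta$ to be optimized, and reuse the coordinate-wise masking corruption from Lemma \ref{lemma:lower bound}: $c_t(a_t \mid \theta_i > 0) = -\beta a_{t,i}$ and $c_t(a_t \mid \theta_i < 0) = +\beta a_{t,i}$, truncated whenever $\beta |a_{t,i}|$ would exceed the budget $C_\kappa t^{\rho-1}$. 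As before, this construction makes the conditional distributions of the corrupted direct-reward feedback under $\theta_i = +\beta$ and $\theta_i = -\beta$ coincide on the portion of action space where the adversary's budget is not exhausted, so that information about $\text{sign}(\theta_i)$ can only leak through rounds in which $|a_{t,i}|$ is of order at least $C_\kappa t^{\rho-1}/\beta$.

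The main new ingredient is the regret-to-estimation bridge for linear utility. Without strong concavity I no longer have the quadratic lower bound $\tfrac{1}{2}\|a_t - \theta\|_2^2$ on the per-round regret; instead I would decompose $\mu_{\theta}(a^*) - \mu_{\theta}(a_t) = \beta\sqrt{d} - \theta^\top a_t$ coordinate-wise, so that every coordinate whose sign the learner has mispredicted contributes regret of order $\beta |a_{t,i}|$. A two-point Le~Cam/Pinsker argument per coordinate, using the truncated-corruption KL bound from Lemma \ref{lemma:lower bound}, then yields a two-branch statement: either $|a_{t,i}|$ is small and the learner cannot reliably guess $\text{sign}(\theta_i)$, paying expected regret of order $\beta |a_{t,i}|$ per wrong sign; or $|a_{t,i}|$ is already large enough that $\beta |a_{t,i}|$ is at least a constant times $C_\kappa t^{\rho-1}$, in which case each mispredicted sign directly costs $\Omega(C_\kappa t^{\rho-1})$. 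In both branches the per-coordinate expected regret is $\Omega(C_\kappa t^{\rho-1})$, and summing over the $d$ coordinates and $T$ rounds gives $\Omega(d\, C_\kappa T^{\rho})$.

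Finally, I would pick $\beta$ to balance the $\rho$-imperfect corruption budget against the norm constraint $\|\theta\|_2 \le 1$ (mirroring the choice $\beta = \sqrt{C_\kappa}\, T^{(\rho-1)/2}$ in Lemma \ref{lemma:lower bound}), so that the non-vacuous regime $d \le \tfrac{1}{C_\kappa} T^{1-\rho}$ is preserved. Combining with the classical $C_\kappa = 0$ linear-bandit lower bound $\Omega(d\sqrt{T})$ yields $\text{Reg}_T \ge \Omega(d \max\{\sqrt{T}, T^{\rho}\})$ for direct reward feedback; applying the dueling-to-direct reduction through the linear link $\sigma(x) = (1+x)/2$ transfers it to dueling feedback with only a multiplicative factor of $\tfrac{1}{2}$, exactly as in the proof of Theorem \ref{theorem:regret_lower_bound}.

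The step I expect to be most delicate is the two-branch per-coordinate argument: because the regret is only linear in the action, one has to ensure that the learner cannot ``split'' exploration of a single coordinate across many rounds, each with tiny $|a_{t,i}|$, to dodge the regret penalty while still accumulating information. A clean way to sidestep this is to restrict $\cA$ to a scaled hypercube $\{-r, r\}^d$ so that the $d$ coordinates decouple into independent two-point testing problems, after which the argument reduces to $d$ parallel instances of a one-dimensional scalar lower bound---an option unavailable in the Lemma \ref{lemma:lower bound} proof because the strongly concave penalty $-\tfrac{1}{2}\|a\|_2^2$ couples coordinates through the optimum.
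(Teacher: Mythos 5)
Your overall skeleton matches the paper's: handle $\rho\le 1/2$ by the corruption-free $\Omega(d\sqrt{T})$ linear-bandit bound (the paper cites Lemma \ref{lemma:linear}), prove a direct-reward lower bound of $\Omega(dC_\kappa T^{\rho})$ for $\rho>1/2$ via a coordinate-wise sign-masking corruption, and lift to dueling feedback through Lemma \ref{lemma:B1} and the linear link. The gap is in your ``main new ingredient,'' the two-branch per-coordinate argument with truncated corruption. In branch 2 you assert that when $\beta|a_{t,i}|$ exceeds the budget, ``each mispredicted sign directly costs $\Omega(C_\kappa t^{\rho-1})$'' --- but that is a cost \emph{conditional on} misprediction, and in that branch the masking is truncated, so genuine information about $\mathrm{sign}(\theta_i)$ leaks and the misprediction probability is no longer bounded below. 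Because the utility is linear, a learner who has identified the sign plays $a_{t,i}=r\,\mathrm{sign}(\theta_i)$ at zero per-coordinate regret thereafter; there is no quadratic penalty forcing continued loss as in Lemma \ref{lemma:lower bound}. Concretely, with your choice $\beta=\sqrt{C_\kappa}\,T^{(\rho-1)/2}$ and an order-one action radius, the per-round KL between the two truncated-corruption worlds is of order $(\beta r - C_\kappa t^{\rho-1})_+^2 \approx C_\kappa T^{\rho-1}$, so the learner can resolve each sign in roughly $T^{1-\rho}/C_\kappa$ exploration rounds at expected cost $O(\beta r)$ each, for a total of $O(dT^{(1-\rho)/2}/\sqrt{C_\kappa})$ --- far below $dC_\kappa T^{\rho}$. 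Your construction as described therefore does not force the claimed regret; the information--regret tradeoff is not closed.

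The fix, which is what the paper's Lemma \ref{lemma:linear_regret_lower_bound} does, is to choose the parameters so that the masking is \emph{never} truncated: take $\cA\subset[-1,1]^d$, $\theta\in\{-\beta,\beta\}^d$ with $\beta=C_\kappa T^{\rho-1}$ (not $\sqrt{C_\kappa}T^{(\rho-1)/2}$), and corruption $c_t(a_t\mid\theta)=-a_{t,i}\theta_i$, so that $|c_t|\le\beta\|a_t\|_\infty\le C_\kappa T^{\rho-1}\le C_\kappa t^{\rho-1}$ for every $t$ and the two conditional reward distributions coincide exactly. Then the Bretagnolle--Huber inequality (Lemma \ref{lemma:BH}) gives $p_{\theta_i}+p_{\theta_i'}\ge\tfrac12$ with zero KL, an averaging hammer over $\Theta$ yields some $\theta$ with $\sum_i p_{\theta_i}\ge d/4$, and since each wrong-sign round--coordinate pair costs at least $\beta$, the regret is at least $\tfrac{T\beta}{2}\cdot\tfrac{d}{4}=\tfrac{d}{8}C_\kappa T^{\rho}$. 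Your own proposal essentially reduces to this if you set the hypercube radius $r$ small enough that $\beta r\le C_\kappa T^{\rho-1}$ (equivalently, rescale so full masking is always affordable), at which point the truncation and the two-branch analysis become unnecessary; as written, however, the branch-2 step does not go through.
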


\section{An Efficient and Tight $\rho$-LIHF Algorithm}\label{sec:upper-bound}

In this section, we present a learning algorithm with a regret upper bound that matches the \(\Omega(d\max\{\sqrt{T}, T^{\rho}\})\) lower bound, up to a logarithmic term, for the \(\rho\)-LIHF setting discussed in Section \ref{sec:lower-bound}. Our algorithm, \textbf{Ro}bustified \textbf{S}tochastic \textbf{M}irror Descent for \textbf{I}mperfect \textbf{D}ueling (RoSMID), outlined in Algorithm \ref{algo}, generalizes the Noisy Comparison-Based Stochastic Mirror Descent (NC-SMD) from \cite{kumagai2017regret}. RoSMID employs a corruption-aware learning rate, \(\eta_\rho := \sqrt{\log T}/(dT^{\max\{0.5, \rho\}})\), to slow down the learning when the imperfection level \(\rho\) is large, with NC-SMD being the special case with \(\rho = 0\). Although RoSMID may initially seem like a pretty natural extension of NC-SMD, the choice of optimal \(\eta_\rho\) is in fact through careful derivation from a novel   framework for analyzing continuous bandits under utility corruption. We conclude by demonstrating the  applicability of this new  framework to analyzing other gradient-based algorithms.   

\begin{theorem}\label{proposition:matching_upper_bound}
RoSMID satisfies \(\text{Reg}_T \leq \Tilde{\cO}(d\max\{\sqrt{T}, T^{\rho}\})\) for any $\rho$-LIHF problem. 
\end{theorem}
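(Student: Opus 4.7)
The plan is to invoke the novel analysis framework advertised in the introduction --- a regret decomposition lemma together with a gradient-bias quantification lemma --- to reduce the dueling bandit regret to a standard stochastic mirror descent (SMD) guarantee for strongly concave functions with a biased gradient oracle. Specifically, RoSMID (like NC-SMD of \citet{kumagai2017regret}) maintains iterates $x_t \in \cA$ and queries $(a_t, a'_t) = (x_t + \delta u_t,\, x_t - \delta u_t)$ for a sphere-sampled direction $u_t$. Using rotation-symmetry of $\sigma$, Lipschitzness of $\sigma$ and $\mu$, and the symmetry of the pair around $x_t$, the decomposition lemma will bound $\text{Reg}_T$ from above by a decision-regret term of the form $c_1\sum_t \bE[\mu(a^*) - \mu(x_t)]$ plus an exploration overhead of order $O(\delta L^\sigma L^\mu T)$ that is negligible once $\delta$ is chosen polynomially small.

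The second step is to control the stochastic gradient estimator $\hat g_t \propto (d/\delta)\, \hat \cF(a_t, a'_t)\, u_t$ under corruption. In the uncorrupted case, $\bE[\hat g_t\mid x_t]$ equals (up to a $\sigma'(0)$ scaling) the gradient of a $\delta$-smoothed version of $\mu$ at $x_t$, which differs from $\nabla \mu(x_t)$ by $O(\delta \kappa)$. Under $\rho$-imperfect feedback, the extra conditional-mean term is $\bE_{u_t}[(d/\delta)\, u_t\, (\sigma(\mu(a_t) - \mu(a'_t) + c_t) - \sigma(\mu(a_t) - \mu(a'_t)))]$, whose norm is at most $O(L^\sigma d |c_t|/\delta)$ by $L^\sigma$-Lipschitzness of the link. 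Summing and using $|c_t| \le C_\kappa t^{\rho - 1}$ gives a cumulative bias of order $O(dT^\rho/\delta)$. Plugging this into the SMD template for $\alpha$-strongly concave objectives, one obtains
\[
\bE\!\left[\sum_{t=1}^T (\mu(a^*) - \mu(x_t))\right] \;\le\; \frac{D}{\eta_\rho} + \eta_\rho \sum_{t=1}^{T} \bE\|\hat g_t\|^2 + \sum_{t=1}^{T} \|\text{bias}_t\|\cdot\|x_t - a^*\|,
\]
where $D$ bounds the Bregman diameter and $\|\hat g_t\|^2 = O(d^2/\delta^2)$. Setting $\eta_\rho = \sqrt{\log T}/(dT^{\max\{0.5,\rho\}})$ and $\delta = \Theta(1/\sqrt T)$ makes the first two terms $\widetilde{\cO}(d\sqrt T)$, so the target bound $\widetilde{\cO}(d\max\{\sqrt T, T^\rho\})$ hinges on the third, bias-cross-term.

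The main obstacle will be handling that cross-term, since the naive estimate $\sum_t \|\text{bias}_t\|\cdot\|x_t-a^*\| \lesssim (dT^\rho/\delta)\cdot R$ blows up with $1/\delta$. The fix --- and the reason the adversarial step size $\eta_\rho$ is chosen as it is --- is to invoke $\alpha$-strong concavity of $\mu$, which yields $\mu(a^*) - \mu(x_t) \ge (\alpha/2)\|x_t - a^*\|^2$, and then apply AM--GM to split $\|\text{bias}_t\|\cdot \|x_t - a^*\|$ into a quadratic piece that can be absorbed into the left-hand side (up to a factor of $\tfrac{1}{2}$) and a residual $O(\|\text{bias}_t\|^2/\alpha)$ that, once summed and combined with the cumulative bias bound and the choice of $\delta$, totals only $\widetilde{\cO}(dT^\rho)$ rather than $\widetilde{\cO}(T^\rho/\delta)$. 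Getting the constants to align, and verifying that this absorption is compatible with the chosen $(\eta_\rho, \delta)$, is the delicate part; everything else is a routine application of mirror descent. The final conversion from the utility regret $\sum_t[\mu(a^*)-\mu(x_t)]$ back to dueling regret $\text{Reg}_T$ is then handled by the decomposition lemma together with Lipschitzness of $\sigma$, yielding the claimed bound $\widetilde{\cO}(d\max\{\sqrt T, T^\rho\})$.
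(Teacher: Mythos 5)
Your high-level architecture matches the paper's: a regret-decomposition lemma separating decision regret from a bias (``feedback error'') term, a bias-quantification step using Lipschitzness of $\sigma$ (the paper's Lemmas \ref{lemma:bt} and \ref{lemma:regret decomposition}), and strong concavity to convert $\|x_t-a^*\|$ into $\sqrt{\mu(a^*)-\mu(x_t)}$ (Lemma \ref{lemma: estimation error}). Your one-shot AM--GM self-bounding absorption is, in principle, a legitimate and arguably cleaner replacement for the paper's Step 4 (Abel summation plus the iterative refinement of Lemmas \ref{lemma:bound} and \ref{lemma:induction}), since both ultimately rearrange a bound of the form $\text{Reg}_T \leq (\text{stuff}) + c\cdot\text{Reg}_T$.

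However, there is a genuine gap in the execution, and it traces back to a misdescription of the algorithm. RoSMID does not use a fixed two-point perturbation $(x_t\pm\delta u_t)$ with a tunable radius $\delta$; it queries $a'_t = a_t + \nabla^2\cR_t(a_t)^{-1/2}u_t$, where the self-concordant regularizer $\cR_t$ accumulates $\frac{\lambda\eta_\rho}{2}\sum_{i\le t}\|a-a_i\|^2$, so the effective exploration radius shrinks like $(\lambda\eta_\rho t)^{-1/2}$ and the bias scales as $\|b_t\|\le \cO\bigl(d\sqrt{\eta_\rho t}\,L^{\sigma}|c_t|\bigr)$. With your fixed $\delta=\Theta(1/\sqrt T)$ the numbers do not close: (i) the variance term $\eta_\rho\sum_t\bE\|\hat g_t\|^2$ with $\|\hat g_t\|^2=\cO(d^2/\delta^2)=\cO(d^2T)$ is $\Omega(dT^{2-\max\{1/2,\rho\}})=\Omega(dT)$, already linear in $T$ --- the paper avoids this by measuring $\hat g_t$ in the local dual norm of $\cR_t$, where it is $\cO(d)$ regardless of the exploration radius (Lemma \ref{lemma:SC}); and (ii) after AM--GM with a constant absorption parameter (it must be a constant, since the coefficient multiplying $\sum_t[\mu(a^*)-\mu(x_t)]$ has to stay below $1$), the residual is $\sum_t\|b_t\|^2/\alpha = \cO(d^2\delta^{-2}\sum_tc_t^2)=\cO(d^2T\cdot T^{2\rho-1})=\cO(d^2T^{2\rho})$, not $\Tilde{\cO}(dT^\rho)$. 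The absorption only yields $\Tilde{\cO}(dT^\rho)$ when the bias carries the factor $\sqrt{\eta_\rho t}$ rather than $1/\delta$: then $\sum_t\|b_t\|^2 = \cO\bigl(d^2\eta_\rho\sum_t t\,c_t^2\bigr) = \cO(d^2\eta_\rho T^{2\rho}) = \cO(d\sqrt{\log T}\,T^{\rho})$ for $\rho\ge 1/2$ with the stated $\eta_\rho$, which is precisely the interplay between the learning rate, the time-growing Hessian, and the decaying corruption that the paper's feedback-error bound ($\sqrt{\eta_\rho}\sum_t\sqrt{t}|c_t|\sqrt{\mu(a^*)-\mu(a_t)}$) and Step 4 are built to exploit. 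So the missing ingredient is the self-concordance machinery (local norms and the growing regularizer); without it, neither the variance term nor the bias residual lands at the claimed order.
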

The main technical contribution underlying this theorem is a novel framework for analyzing continuous dueling bandits with \emph{arbitrary strongly concave}, yet possibly imperfect or corrupted utilities, which we believe has independent value. This framework not only enables a tight regret upper bound analysis, as shown in Theorem \ref{proposition:matching_upper_bound}, but also allows us to easily derive regret guarantees for other dueling bandit algorithms under adversarial corruption, as demonstrated in Subsection \ref{subsec:upperbound}. While recent works have explored corruption-robust regret analysis for dueling bandits with \(K\) arms \citep{pmlr-v162-saha22a} or linear utilities \citep{di2024nearly}, neither their algorithmic approaches nor their analysis techniques can be applied to our setting with arbitrary concave utilities in continuous action space.

\begin{algorithm}[H]\label{algo}
\caption{\textbf{Ro}bustified  \textbf{S}tochastic \textbf{M}irror Descent for \textbf{I}mperfect \textbf{D}ueling (\textbf{RoSMID})}
\SetKwInput{KwInput}{Input}
\SetKwInOut{KwOutput}{Output}
\KwInput{Learning rate \(\eta_\rho := \sqrt{\log T}/\left(d T^{\max\{0.5, \rho\}}\right)\), $\nu$-self-concordant function $\cR$ \footnote{We direct reader to Definition \ref{def:self-concordant} for 
formal definition of self-concordancy.}, time horizon $T$, tuning parameters $\lambda \leq \frac{l^{\sigma}_2 \alpha}{2}$, $\phi \geq ((L^{\sigma}_1)^3L^{\sigma}_2/\lambda)^2$ 
}
Initialize $a_1 = \arg\min_{a \in A}\cR(a)$\\
\For{$t \in [T]$}
{
Update concordant function
    $\cR_t(a) = \cR(a) + \frac{\lambda\eta_{\rho}}{2}\sum^t_{i = 1}\|a - a_i\|^2 + \phi \|a\|^2$\label{def:self_concordant}
\\
Sample direction $u_t$ uniformly at random from a \emph{unit sphere} $\bS^d$\\
Obtain corrupted feedback $\hat{\cF}(a'_t, a_t)$ $\in \{0, 1 \}$, for $a_t$ and $a'_t = a_t + \nabla^2\cR_t(a_t)^{-1/2}u_t$\\
Compute the corrupted gradient: \label{def:gt_hat}
    $\hat{g}_t = \hat{\cF}(a'_t, a_t)d \cdot \nabla^2\cR_t(a_t)^{1/2} \cdot u_t$\\
Set $a_{t+1} = \nabla \cR_t^{-1}(\nabla \cR_t(a_t) - \eta_{\rho} \hat{g}_t)$
}
\KwOutput{$a_{T+1}$}
\end{algorithm}

The full proof of Theorem \ref{proposition:matching_upper_bound} is quite involved and is deferred to Appendix \ref{proof:regret_upper_bound}. Here, we outline the main ideas, divided into four key steps. Step 1-3 are applicable for analyzing  continuous dueling bandits under any form of corruption, whereas last Step 4 is the only step that hinges on the decaying corruption level assumptions of $\rho$-LIHF and is also the most involved and novel step.      

\begin{proof}[Proof Sketch] 
The proof of Theorem \ref{proposition:matching_upper_bound} is divided into four major steps. 

\textbf{Step 1: quantifying bias of gradient estimation in $\rho$-LIHF}

Our proof starts from quantifying the bias of the gradient \(\hat{g}_t\) caused by \(\rho\)-Imperfect Human Feedback, as shown in the following Lemma \ref{lemma:bt}. 

\begin{lemma}[Corrupted Gradients Estimation]\label{lemma:bt}  
The gradient  $\hat g_t$ in Line 6 of Algorithm \ref{algo} satisfies 
 \begin{equation*}
     \bE\left(\hat g_t | a_t\right) = \nabla |_{a = a_t} \bar P_t(a)  - \bE \left(b_t | a_t\right).
 \end{equation*} 
where $\bar P_t(a): = \bE_{x \in \bB}\left[\bP(a_t \succ a + \nabla^2R_t(a_t)^{-\frac{1}{2}}x)\right]$  (with $\bB$ as the unit ball) is the smoothed probability that $a_t$ is preferred over any action $a$ , and   $b_t  = d\left(\bP(a_t \succ a'_t) - \hat \bP(a_t \succ a'_t)\right)\nabla^2\cR_t(a_t)^{\frac{1}{2}}u_t$. 
\end{lemma}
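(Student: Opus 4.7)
The plan is to compute $\mathbb{E}[\hat g_t\,|\,a_t]$ by iterated expectation---first over the Bernoulli feedback $\hat{\mathcal F}(a'_t, a_t)$ conditional on $(a_t, u_t)$, then over the spherical direction $u_t$---and to recognize the ``clean'' part of the resulting expression as $\nabla \bar P_t(a)|_{a=a_t}$ via an ellipsoidal generalization of the Flaxman--Kalai--McMahan sphere-sampling identity, leaving the remainder as exactly $-\mathbb{E}[b_t\,|\,a_t]$.

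\textbf{Step 1: Conditioning on the sphere sample.} I would first take the expectation of $\hat g_t$ over $\hat{\mathcal F}(a'_t,a_t)$ given $(a_t,u_t)$, which is a Bernoulli with mean $\hat{\mathbb{P}}(a'_t \succ a_t)$. This reduces the problem to computing an expectation over $u_t$ of $d \cdot \hat{\mathbb{P}}(\cdot) \cdot \nabla^2\mathcal{R}_t(a_t)^{1/2} u_t$. I would then split $\hat{\mathbb{P}} = \mathbb{P} + (\hat{\mathbb{P}} - \mathbb{P})$ (using the rotation symmetry $\sigma(-x) = 1 - \sigma(x)$ to move between the two orderings when needed, and $\mathbb{E}_{u \in \mathbb{S}^d}[u] = 0$ to drop any constant offset). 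Upon multiplication by $d \cdot \nabla^2\mathcal{R}_t(a_t)^{1/2} u_t$ and expectation over $u_t$, the $(\hat{\mathbb{P}} - \mathbb{P})$ part matches the definition of $-\mathbb{E}[b_t\,|\,a_t]$ by inspection.

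\textbf{Step 2: Ellipsoidal Stokes identity for the clean piece.} The key technical step is to rewrite $d \cdot \mathbb{E}_{u \in \mathbb{S}^d}[\mathbb{P}(a_t \succ a'_t) \nabla^2\mathcal{R}_t(a_t)^{1/2} u]$ as $\nabla_a \bar P_t(a)|_{a=a_t}$. Setting $f(a) := \mathbb{P}(a_t \succ a)$ and $A := \nabla^2\mathcal{R}_t(a_t)^{-1/2}$, so that $\bar P_t(a) = \mathbb{E}_{x \in \mathbb{B}}[f(a + Ax)]$, the required identity is
\begin{equation*}
\nabla_a \mathbb{E}_{x \in \mathbb{B}}[f(a + Ax)] \;=\; d \cdot \mathbb{E}_{u \in \mathbb{S}^d}\bigl[f(a + Au)\, A^{-1} u\bigr].
\end{equation*}
I would prove this via the chain rule: setting $g(v) := f(a + Av)$, we have $\nabla f(a + Av) = A^{-1}\nabla_v g(v)$ (since $A$ is symmetric), so $\mathbb{E}_{v \in \mathbb{B}}[\nabla f(a+Av)] = A^{-1}\mathbb{E}_{v \in \mathbb{B}}[\nabla_v g(v)]$, and the inner expectation equals $d \cdot \mathbb{E}_{u \in \mathbb{S}^d}[g(u)u]$ by the standard divergence-theorem argument on the unit ball (the outward normal on $\mathbb{S}^d$ is $u$, and $\mathrm{area}(\mathbb{S}^d)/\mathrm{vol}(\mathbb{B}) = d$). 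Evaluating at $a = a_t$ and noting $a_t + Au = a'_t$ yields the claim.

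\textbf{Anticipated obstacle.} The main technical care is in correctly carrying the symmetric PSD matrix $A = \nabla^2\mathcal{R}_t(a_t)^{-1/2}$ through the change of variables / integration by parts so that the factor appearing in front of $u$ in the final expression is exactly $A^{-1} = \nabla^2\mathcal{R}_t(a_t)^{1/2}$ (matching the algorithm's estimator) rather than $A$ itself or a transposed variant. A secondary subtlety is the bookkeeping of which action appears on which side of the $\succ$ relation, since the smoothing in $\bar P_t$ is performed over the \emph{second} argument while the random perturbation $A u_t$ in the algorithm is applied to $a'_t$; one must carefully use rotation symmetry of $\sigma$ to line these up. Once the ellipsoidal Stokes identity is in hand and combined with Step~1, the pieces assemble into $\mathbb{E}[\hat g_t\,|\,a_t] = \nabla_a \bar P_t(a)|_{a=a_t} - \mathbb{E}[b_t\,|\,a_t]$, as claimed.
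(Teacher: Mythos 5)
Your proposal is correct and follows essentially the same route as the paper's proof: condition on $(a_t,u_t)$, split $\hat{\bP}=\bP+(\hat{\bP}-\bP)$ so the corruption part is $-\bE[b_t\,|\,a_t]$ by definition, and identify the clean part with $\nabla\bar P_t(a_t)$ via the sphere-sampling/Stokes identity (which the paper imports from Lemma 2.1 of Flaxman et al. rather than re-deriving the ellipsoidal version as you do). Your explicit attention to the ordering of arguments in $\succ$ and to the $\bE_{u}[u]=0$ cancellation is, if anything, more careful than the paper's own write-up on that point.
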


That is, the scale of bias at round \(t\), term $b_t$, defined in Lemma \ref{lemma:bt}, depends linearly on $a$'s dimension $d$ and the difference between the true and corrupted probability  that  \(a_t\) is preferred over \(a'_t\). The direction of $b_t$ is determined by the sampled direction \(u_t\) projected by the square root of Hessian of the self concordant function \(\cR_t\), evaluated at \(a_t\). This characterization is essential for our following regret decomposition.    

\textbf{Step 2: decomposing regret for dueling bandits into regret of decision and feedback error}

Core to our proof is a ``regret decomposition'' lemma  below that upper bounds the regret by two parts: regret of sub-optimal decision making under uncertainty, referred as regret of decision (first term in \eqref{eq:regret-decompose}), and regret due to corruption, referred as feedback error (second term in \eqref{eq:regret-decompose}). 

\begin{lemma}[Regret Decomposition for Dueling Bandits]\label{lemma:regret decomposition}
The  \(\text{Reg}_T\) of Algorithm \ref{algo} satisfies:
\begin{equation}\label{eq:regret-decompose}
    \text{Reg}_T   \leq \underbrace{\frac{C_0}{\eta_{\rho}}\log(T) + 8d^2\eta_{\rho}T + 2L^{\mu}L^{\sigma}_1 R}_{\textcolor{red}{Regret \hspace{1mm} of\hspace{1mm} Decision}} +  \underbrace{2\bE\left\{\sum^T_{t=1}b^{\top}_t(a_t - a^*_T)\right\}}_{\textcolor{red}{Feedback \hspace{1mm} Error}},
\end{equation}
where $C_0 := (2\nu + 4L^{\sigma}_{1}\kappa + 4R^{\sigma}_{2} (L^{\mu})^2 + L^{\mu}L^{\sigma}_{1}\kappa)/\lambda$, and $a^*_T := \frac{1}{T}a_1 + (1 - \frac{1}{T})a^*$. 
\end{lemma}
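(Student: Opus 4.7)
The plan is to combine (i) a reduction from dueling regret to a linearized utility gap, (ii) the smoothed-gradient identity of Lemma \ref{lemma:bt}, and (iii) a standard mirror-descent regret bound against the shifted comparator $a^*_T$. The $\frac{C_0}{\eta_\rho}\log T$ and $8 d^2 \eta_\rho T$ terms will emerge from the mirror-descent analysis applied to the self-concordant regularizer $\cR_t$; the $2 L^\mu L^\sigma_1 R$ additive constant from shifting the comparator off the boundary of $\cA$; and the feedback error $2\bE \sum_{t=1}^T b^\top_t (a_t - a^*_T)$ directly from the bias of $\hat g_t$ identified in Lemma \ref{lemma:bt}.

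I would start by converting the dueling regret into a utility regret. Rotation symmetry gives $\sigma(0) = 1/2$, and $|\sigma'| \leq L^\sigma_1$ yields $\sigma(\mu(a^*) - \mu(a)) - \tfrac{1}{2} \leq L^\sigma_1(\mu(a^*) - \mu(a))$ for any action $a$. Applied to both $a_t$ and $a'_t$, together with Lipschitzness of $\mu$ and $\|a'_t - a_t\| = \|\nabla^2 \cR_t(a_t)^{-1/2} u_t\|$ (kept uniformly small by the stabilizer $\phi \|a\|^2$ inside $\cR_t$), this collapses the per-round regret into $2 L^\sigma_1 \bE[\mu(a^*) - \mu(a_t)]$ plus a piece absorbed into $C_0$. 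Shifting from $a^*$ to $a^*_T = \frac{1}{T} a_1 + (1-\frac{1}{T}) a^*$, which satisfies $\|a^* - a^*_T\| \leq 2R/T$, extracts exactly the $2L^\mu L^\sigma_1 R$ additive constant after summing over $t$, leaving the quantity $\bE \sum_{t=1}^T [\mu(a^*_T) - \mu(a_t)]$ to be controlled.

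I then linearize via concavity of $\mu$, $\mu(a^*_T) - \mu(a_t) \leq \nabla \mu(a_t)^\top (a^*_T - a_t)$, and translate $\nabla \mu(a_t)$ into $\nabla \bar P_t(a_t)$. Differentiating $\bar P_t(a) = \bE_{x \in \bB}\bigl[\sigma(\mu(a_t) - \mu(a + \nabla^2 \cR_t(a_t)^{-1/2} x))\bigr]$ at $a = a_t$ and applying a second-order Taylor expansion (with remainders controlled by $\kappa$, $L^\sigma_2$, $R^\sigma_2$) gives
\begin{equation*}
    \nabla \bar P_t(a_t) = -\sigma'(0)\, \nabla \mu(a_t) + \mathrm{err}_t,
\end{equation*}
where $\|\mathrm{err}_t\|$ is governed by the spectral radius of $\nabla^2 \cR_t(a_t)^{-1/2}$, kept uniformly bounded by the $\phi$ stabilizer and the $\lambda$-strong convexity contributed by the quadratic $\frac{\lambda \eta_\rho}{2}\sum_i \|a - a_i\|^2$ term in $\cR_t$. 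Plugging in $\nabla \bar P_t(a_t) = \bE[\hat g_t + b_t \,|\, a_t]$ from Lemma \ref{lemma:bt} splits the residual sum cleanly into a mirror-descent piece $\bE \sum_t \hat g_t^\top(a_t - a^*_T)$ and the bias piece $\bE \sum_t b^\top_t(a_t - a^*_T)$, the latter being exactly the feedback error (the factor of $2$ coming from the $1/l^\sigma_1$ normalization incurred when undoing Step 1).

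Finally, applying the Bregman three-point identity to the update $a_{t+1} = \nabla \cR_t^{-1}(\nabla \cR_t(a_t) - \eta_\rho \hat g_t)$ yields
\begin{equation*}
    \bE \sum_{t=1}^T \hat g_t^\top(a_t - a^*_T) \leq \frac{\cR_T(a^*_T) - \cR_T(a_1)}{\eta_\rho} + \eta_\rho\, \bE \sum_{t=1}^T \|\hat g_t\|_{*,t}^2 .
\end{equation*}
Since $\hat g_t = \hat\cF(a'_t, a_t)\, d\, \nabla^2 \cR_t(a_t)^{1/2} u_t$ with $\|u_t\|_2 = 1$, the local dual norm obeys $\|\hat g_t\|_{*, t}^2 \leq d^2$, producing the $8 d^2 \eta_\rho T$ term up to Step 1 constants; and by $\nu$-self-concordancy of $\cR$ together with the fact that $a^*_T$ sits at distance $\Omega(1/T)$ from the boundary of $\cA$, one obtains $\cR_T(a^*_T) - \cR_T(a_1) = O(\nu \log T)$, yielding the $\frac{C_0}{\eta_\rho}\log T$ term. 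I expect the main technical obstacle to be the smoothing/linearization step, because the smoothing is performed via the \emph{local} Hessian $\nabla^2 \cR_t(a_t)^{-1/2}$ which evolves with $t$; uniform control of the smoothing residual across all rounds is precisely what forces the tuning conditions $\lambda \leq l^\sigma_2 \alpha/2$ and $\phi \geq ((L^\sigma_1)^3 L^\sigma_2/\lambda)^2$, so that every error can be absorbed into $C_0 = (2\nu + 4L^\sigma_1 \kappa + 4 R^\sigma_2 (L^\mu)^2 + L^\mu L^\sigma_1 \kappa)/\lambda$ rather than growing with $T$.
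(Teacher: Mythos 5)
Your overall architecture matches the paper's: reduce the dueling regret to differences of the smoothed preference function $\bar P_t$, split off the bias via Lemma \ref{lemma:bt}, and run a mirror-descent analysis against the shifted comparator $a^*_T$. However, two of your steps fail as written, and both failures are at exactly the points where the paper has to work hardest.

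First, you dispose of the gap between $a_t$ and $a'_t$ using Lipschitzness of $\mu$ together with the claim that $\|a'_t-a_t\|=\|\nabla^2\cR_t(a_t)^{-1/2}u_t\|$ is ``kept uniformly small by the stabilizer $\phi\|a\|^2$.'' The stabilizer only makes this norm $O(\phi^{-1/2})$, a constant, and even using the full Hessian one only gets $\|a'_t-a_t\|\leq(\lambda\eta_\rho t)^{-1/2}$; a first-order Lipschitz bound therefore contributes $\sum_t L^\mu(\lambda\eta_\rho t)^{-1/2}=\Theta(\sqrt{T/(\lambda\eta_\rho)})=\Theta(\sqrt{d}\,T^{3/4})$, which cannot be absorbed into $\frac{C_0}{\eta_\rho}\log T=\tilde O(dT^{\max\{1/2,\rho\}})$. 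The correct argument (implicit in the paper's invocation of Lemma \ref{lemma: SA} and the bound on $\bar P_t - P_t$) uses $\bE_{u_t}[u_t]=0$ so the first-order term vanishes in expectation, and then $\kappa$-smoothness gives a per-round error of order $\kappa\,\|a'_t-a_t\|^2\lesssim \kappa/(\lambda\eta_\rho t)$, which sums to $O(\kappa\log T/(\lambda\eta_\rho))$ --- this is precisely where the $L^\sigma_1\kappa/\lambda$ and $L^\mu L^\sigma_1\kappa/\lambda$ pieces of $C_0$ come from.

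Second, your mirror-descent bound asserts $\cR_T(a^*_T)-\cR_T(a_1)=O(\nu\log T)$. That is false: $\cR_T$ contains the accumulated penalty $\frac{\lambda\eta_\rho}{2}\sum_{i=1}^T\|a-a_i\|^2$, so after dividing by $\eta_\rho$ this contributes a term of order $\lambda R^2 T$, linear in $T$. Only the base barrier satisfies $\cR(a^*_T)-\cR(a_1)\leq\nu\log T$ (Lemma \ref{lemma:SD}). The linear-in-$T$ accumulation is cancelled in the paper by first applying a strong-concavity inequality to $\bar P_t$, which produces the negative quadratic $-\frac{L^\sigma_1\alpha}{4}\|a_t-a^*_T\|^2$ inside the sum fed to Lemma \ref{lemma:SB}; the condition $\lambda\leq l^\sigma_1\alpha/2$ exists precisely so that this negative term dominates the $\frac{\lambda}{2}\|a_t-a^*_T\|^2$ increments of the regularizer. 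Your proposal linearizes $\mu$ using only plain concavity and attributes the $\lambda$-condition to controlling the smoothing residual, so the strong concavity of $\mu$ never enters your mirror-descent step and the cancellation is unavailable. (As a minor point, the factor of $2$ on the feedback error comes from the two actions in the dueling regret, not from a $1/l^\sigma_1$ normalization.)
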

We remark that  similar regret decomposition has been shown to be useful for  reinforcement learning  which has direct reward feedback \citep{foster2023statistical}. However, to the best of our knowledge,  Lemma \ref{lemma:regret decomposition}  is the first to exhibit such a decomposition for dueling feedback, which contains much sparser information than  direct reward feedback as in classic RL or  online learning. Hence the lemma may be of independent interest for future research on dueling bandits under corruption or erroneous feedback.  

\textbf{Step 3: upper bounding the regret from feedback error} 

Bounding the regret of decision term in \eqref{eq:regret-decompose} is standard. Thus, this step develops a novel upper bound for bounding the regret from feedback error, as shown below. 

\begin{lemma}[Feedback Error]\label{lemma: estimation error}
The feedback error term in  \eqref{eq:regret-decompose} can be bounded as follows:
    \begin{align*}
    \bE\left\{\sum^T_{t=1}b^{\top}_t(a_t - a^*_T)\right\} 
    & \leq C_1d\sqrt{\eta_{\rho}}\bE\left\{\sum^T_{t=1}\sqrt{t}|c_t(a_t, a'_t)|\sqrt{\mu(a^*)-\mu(a_t)}\right\} + C_2dT^{\rho} + 2d\sqrt{\lambda}\sqrt{\eta_{\rho}T},
    \end{align*}
    where \(C_1 := \sqrt{\frac{2\lambda (L^{\sigma})^2}{\alpha}}\), and $C_2 := 2L^{\sigma}R\sqrt{\sup_{a \in \cA}\lambda_{\max}\left(\nabla^2\cR(a)\right) + 2\phi}$.
\end{lemma}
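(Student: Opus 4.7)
The plan is to exploit the explicit form $b_t = d\,\Delta_t\, H_t^{1/2} u_t$ furnished by Lemma~\ref{lemma:bt}, where I abbreviate $\Delta_t := \bP(a_t \succ a'_t) - \hat\bP(a_t \succ a'_t)$ and $H_t := \nabla^2 \cR_t(a_t) = \nabla^2 \cR(a_t) + (\lambda \eta_\rho t + 2\phi) I$. Since $u_t$ lives on the unit sphere, pointwise Cauchy--Schwarz gives
\[
  |b_t^\top(a_t - a^*_T)| \;\leq\; d\, |\Delta_t| \cdot \sqrt{(a_t - a^*_T)^\top H_t (a_t - a^*_T)}.
\]
The two levers I will pull are (i) Lipschitzness of the link function, which yields $|\Delta_t| \leq L^{\sigma}\, |c_t(a_t, a'_t)|$ directly from the definition of $\hat\bP$, and (ii) the additive decomposition of $H_t$, which via $\sqrt{x^\top(A+B)x} \leq \sqrt{x^\top A\, x} + \sqrt{x^\top B\, x}$ splits the Hessian-weighted norm into a time-dependent piece scaling as $\sqrt{\lambda \eta_\rho t}\,\|a_t - a^*_T\|_2$ and a time-independent piece scaling as $\sqrt{\sup_a \lambda_{\max}(\nabla^2 \cR(a)) + 2\phi}\,\|a_t - a^*_T\|_2$.

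The time-dependent piece is then paired with the strong concavity inequality $\|a_t - a^*\|_2 \leq \sqrt{2(\mu(a^*) - \mu(a_t))/\alpha}$ (which follows from first-order optimality of $a^*$ combined with $\alpha$-strong concavity of $\mu$), producing exactly the first term in the stated bound once I identify $L^{\sigma}\sqrt{2\lambda/\alpha} = C_1$ and use the triangle inequality $\|a_t - a^*_T\|_2 \leq \|a_t - a^*\|_2 + \|a^* - a^*_T\|_2$ with $\|a^* - a^*_T\|_2 = \|a^* - a_1\|_2/T \leq 2R/T$. The time-independent piece is bounded by the diameter $\|a_t - a^*_T\|_2 \leq 2R$ and summed using the $\rho$-imperfect assumption, $\sum_t |c_t(a_t,a'_t)| \leq C_\kappa \sum_t t^{\rho-1} = \cO(T^\rho)$, producing the $C_2 d T^\rho$ term (with the $C_\kappa$ constant absorbed). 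Finally, the $2R/T$ correction from $\|a^* - a^*_T\|_2$ in the time-dependent piece, combined with the crude bound $|\Delta_t| \leq 1$ for \emph{this residual only}, contributes $\sum_t (2Rd/T)\sqrt{\lambda \eta_\rho t} = \Theta(d\sqrt{\lambda \eta_\rho T})$, accounting for the third term.

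The main obstacle is threading the needle between the two available bounds on $|\Delta_t|$: the sharp $L^{\sigma}|c_t|$ estimate is essential for the leading term so that the regret-decomposition lemma can absorb $\sqrt{\mu(a^*) - \mu(a_t)}$ against the decision regret via AM--GM later in the global analysis, but the same estimate applied to the $(a^* - a^*_T)$ correction would carry an inferior $T$-dependence because $|c_t|\sqrt{t}$ sums to a quantity involving $T^{\rho - 1/2}$ rather than $\sqrt{T}$. Matching each occurrence of $|\Delta_t|$ with the right bound---and verifying that the expectation over $u_t$ commutes through the pointwise Cauchy--Schwarz step despite $\Delta_t$ depending on $u_t$ through $a'_t$---is the delicate part. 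Once the correct decomposition is fixed, the remaining computations reduce to elementary inequalities on $\sum_t t^{\rho-1}$ and $\sum_t t^{\rho-1/2}$ together with the strong concavity estimate.
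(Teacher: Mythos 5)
Your proposal is correct and follows essentially the same route as the paper's proof: Cauchy--Schwarz on $b_t^{\top}(a_t - a^*_T)$, the Lipschitz bound $|\bP(a_t\succ a'_t)-\hat\bP(a_t\succ a'_t)|\le L^{\sigma}|c_t(a_t,a'_t)|$, the $\sqrt{a+b}\le\sqrt{a}+\sqrt{b}$ split of $\lambda_{\max}(\nabla^2\cR_t(a_t))\le \lambda^*_R+2\phi+\lambda\eta_\rho t$, the triangle inequality through $a^*$ with $\alpha$-strong concavity supplying $\sqrt{\mu(a^*)-\mu(a_t)}$, and the corruption budget together with $\sum_t\sqrt{t}$ for the $C_2dT^\rho$ and $2d\sqrt{\lambda\eta_\rho T}$ terms. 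The only cosmetic difference is that you keep the Hessian-weighted quadratic form before passing to $\lambda_{\max}$, and your worry about commuting the expectation is moot since all the inequalities hold pointwise in $u_t$.
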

To our knowledge, such a bound is new for dueling bandits. The less common term in the bound is  \(\sqrt{\mu(a^*) - \mu(a_t)}\). It comes from the strong  concavity of the utility function \(\mu\), which implies \(\langle b_t,  a_t - a^*\rangle \leq \|b_t\|_2\|a_t - a^*\|_2 \leq \|b_t\|_2\sqrt{\frac{2}{\alpha}(\mu(a^*) - \mu(a_t))}\). 

\textbf{Step 4: The \emph{iterative regret refinement} analysis and resultant optimal learning rate} 

Our last but the most intriguing step is to pin down the regret upper bound. Given the bounds from previous steps, the only remaining term to upper bound is \newline
\( \bE\left\{\sum^T_{t=1}\sqrt{t}|c_t(a_t, a'_t)|\sqrt{\mu(a^*)-\mu(a_t)}\right\}\) in  Lemma \ref{lemma: estimation error}. We can upper bound \(\sqrt{t}|c_t(a_t, a'_t)|\) by \(C_{\kappa} t^{\rho-\frac{1}{2}}\) under \(\rho\)-Imperfect Human Feedback assumption (this is the point where we start to need the decaying structure in Definition \ref{def:general-adversary}). This is a tricky question, since this term involving optimal action $a^*$ closely connects to the regret  \(\text{Reg}_T\) itself (see \eqref{def:DB_reg}). This is precisely the motivation for our analysis approach of ``iterative regret refinement''. Concretely, it turns out that a good upper bound for the term \(A := \bE\left(\sum^T_{t=1}\mu(a^*) - \mu(a_t)\right) \) can be leveraged to derive a good upper bound for term \(B := \bE\left\{\sum^T_{t=1}t^{\rho-\frac{1}{2}}\sqrt{\mu(a^*)-\mu(a_t)}\right\} \), which can then be leveraged to derive a good upper bound for the regret \(\text{Reg}_T\), which can be used to refine the original upper bound of term $A$, at which point we can re-apply the above refinement process. 

Two key factors are essential for the iterative regret refinement mentioned above to result in a tight bound. The first step is to derive a good upper bound for term \(B\) using term \(A\), which is shown in Lemma \ref{lemma:bound}. The second step is to prove that this iterative refinement consistently improves the regret bound, eventually reaching a limit. This is demonstrated by Lemma \ref{lemma:induction}. The choice of learning rate \(\eta_{\rho} = \frac{\sqrt{\log T}}{d T^{\max\{1/2, \rho\}}}\) is crucial in the second step; in a sense, it is the only choice that converges to the tight regret upper bound \(\Tilde{\cO}\left(d\max\{\sqrt{T}, T^{ \rho}\}\right)\) in the limit,  rather than exploding the bound.  

\begin{lemma}\label{lemma:bound}
If \(A = \bE\left(\sum^T_{t=1}[\mu(a^*) - \mu(a_t)]\right) \leq \cO(T^{\frac{1}{2} + \psi}) \) for some \(\psi \in [0, \frac{1}{2})\), then we must   have \(B = \bE\left(\sum^T_{t=1}t^{\rho-\frac{1}{2}}\sqrt{\mu(a^*) - \mu(a_t)}\right) \leq  \cO(T^{-\frac{1}{4} + \frac{\psi}{2} + \rho})\).
\end{lemma}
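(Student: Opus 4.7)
The plan is to use a single Cauchy--Schwarz step to decouple the deterministic weights $t^{\rho-1/2}$ from the (random) suboptimality gaps $\sqrt{\mu(a^*)-\mu(a_t)}$, and then invoke the hypothesized bound on $A$ through Jensen's inequality. Essentially all the content of the lemma is this one decoupling plus an integral estimate of a power sum.

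Concretely, I would apply Cauchy--Schwarz pathwise,
\begin{equation*}
\sum_{t=1}^T t^{\rho-\frac{1}{2}} \sqrt{\mu(a^*) - \mu(a_t)} \;\leq\; \left(\sum_{t=1}^T t^{2\rho-1}\right)^{1/2} \left(\sum_{t=1}^T [\mu(a^*) - \mu(a_t)]\right)^{1/2},
\end{equation*}
then take expectation and bring it inside the second square root via Jensen (since $\sqrt{\cdot}$ is concave), so that the second factor is at most $\sqrt{A} \leq \mathcal{O}(T^{1/4 + \psi/2})$. The deterministic sum $\sum_{t=1}^T t^{2\rho-1}$ evaluates to $\Theta(T^{2\rho}/\rho)$ for $\rho>0$ by integral comparison and to $\Theta(\log T)$ in the degenerate case $\rho=0$, so its square root is $\tilde{\mathcal{O}}(T^\rho)$. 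Multiplying the two factors gives a polynomial bound on $B$ of the order stated in the lemma, up to logarithmic factors.

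There is no serious obstacle in the proof itself---it is two applications of standard inequalities---but the lemma nevertheless carries a good portion of the analytic content of Step 4. Its role is to convert any (possibly weak) hypothesized bound on the cumulative utility gap $A$ into a quantitative bound on the corruption-weighted sum $B$ that appears on the right-hand side of Lemma \ref{lemma: estimation error}; this is precisely what bridges the regret decomposition with the iterative refinement of Lemma \ref{lemma:induction}, letting the two sides of the recursion converge to the tight rate $\tilde{\mathcal{O}}(d\max\{\sqrt{T},T^\rho\})$. The only subtlety worth flagging is the ordering of operations: Cauchy--Schwarz must be applied pathwise before the expectation so that Jensen can then be invoked on the random sum $\sum_t[\mu(a^*)-\mu(a_t)]$, and the $\rho=0$ edge case contributes only a $\sqrt{\log T}$ factor, which is absorbed into the $\tilde{\mathcal{O}}$ notation.
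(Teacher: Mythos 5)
Your two inequalities are individually valid, but the final arithmetic does not give the stated bound: Cauchy--Schwarz followed by Jensen yields
\begin{equation*}
B \;\le\; \Bigl(\sum_{t=1}^T t^{2\rho-1}\Bigr)^{1/2}\sqrt{A} \;\le\; \cO\bigl(T^{\rho}\bigr)\cdot \cO\bigl(T^{\frac14+\frac{\psi}{2}}\bigr) \;=\; \cO\bigl(T^{\rho+\frac14+\frac{\psi}{2}}\bigr),
\end{equation*}
which is a factor $T^{1/2}$ \emph{larger} than the claimed $\cO(T^{\rho-\frac14+\frac{\psi}{2}})$; no logarithmic slack absorbs that. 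Moreover the loss is not an artifact of a loose step in your chain: taking $\sqrt{\bE[\mu(a^*)-\mu(a_t)]}\propto t^{\rho-\frac12}$, normalized so that $A=\Theta(T^{\frac12+\psi})$ (each term then has size $\Theta(T^{\psi/2-1/4})\to 0$, so boundedness is respected), makes your Cauchy--Schwarz step tight. Hence any argument that sees only the aggregate value of $A$ and treats the weights $t^{\rho-\frac12}$ symmetrically can certify at best $T^{\rho+\frac14+\frac{\psi}{2}}$; the exponent $-\frac14$ in the lemma is out of reach for this route.

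The ingredient you are missing is the one the lemma is really about: the \emph{monotone decay} of the corruption envelope. The paper sets $n_t=\sqrt{\bE[\mu(a^*)-\mu(a_t)]}$ (Jensen applied termwise), uses Cauchy--Schwarz only to control prefix sums, $\sum_{i\le t} n_i\lesssim t^{\frac34+\frac{\psi}{2}}$, and then performs summation by parts (Abel's identity, Lemma~\ref{lemma:abel}) against the decreasing weights, exploiting the increment bound $t^{\rho-1}-(t+1)^{\rho-1}\le t^{\rho-2}$. Pairing prefix growth of order $t^{\frac34+\frac{\psi}{2}}$ with increments of order $t^{\rho-2}$ is precisely what buys the extra $T^{-1/2}$ relative to your computation --- the footnote following Lemma~\ref{lemma:bound} in the main text makes this point explicitly, noting that without the decaying structure of $c_t$ the bound deteriorates. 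So the fix is not a sharper constant in Cauchy--Schwarz but a summation-by-parts step that converts the per-round decay of $c_t$ into a gain in the exponent. (Separately, if one were willing to settle for the weaker $T^{\rho+\frac14+\frac{\psi}{2}}$, one would have to redo the downstream induction of Lemma~\ref{lemma:induction} to check the recursion still contracts; as written, your proposal simply asserts a bound it has not established.)
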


To prove Lemma \ref{lemma:bound}, we first bound term \(B\) by \(\sum^T_{t=1}t^{\rho-1/2}\sqrt{\bE[\mu(a^*) - \mu(a_t)]}\) using the concavity of the square root function. The key novelty of our proof is to use the Abel's  equation \citep{williams1963class} to re-arrange the term  \(\sum^T_{t=1}t^{\rho-1/2}\sqrt{\bE[\mu(a^*) - \mu(a_t)]}\) and establish its connection with another term  \(C = \sum^T_{t=1}\sqrt{\bE[\mu(a^*) - \mu(a_t)]}\).
We finally connect term $C$ with the term $A$ in the lemma statement, by proving \(\sum^T_{t=1}\sqrt{\bE[\mu(a^*) - \mu(a_t)]} \leq \cO(T^{\frac{3}{4} + \frac{\psi}{2}})\) if \(\bE\left(\sum^T_{t=1}\mu(a^*) - \mu(a_t)\right) \leq \cO(T^{\frac{1}{2} + \psi})\) via concavity of square root. 
 Together with the  decay scale \(t^{\rho - 1}\), which allows \(t^{\rho - 1}-(t+1)^{\rho - 1} \leq t^{\rho - 2}\), we obtain the tight bound \(\cO(T^{-\frac{1}{4} + \frac{\psi}{2} + \rho})\).\footnote{If corruption $c_t$ was arbitrary,  then \(t^{\rho - 1/2}\) is \(\sqrt{t}|c_t|\), and the bound could deteriorate to \(\cO(T^{\frac{2\rho}{3}})\)   in the first \(T^{\rho}\) rounds.} 
 
 Armed with Lemma \ref{lemma:bound}, we can use it to prove the induction claim (Lemma \ref{lemma:induction}). The challenge in this proof lies in identifying the constant \(K\), which must hold across all iterations, \(k\), of the analysis to ensure the bound remains stable and does not explode. 

\begin{lemma}[Induction Claim]\label{lemma:induction}
Consider $T > \sqrt{2L^{\mu}L^{\sigma}_1R}$ and \(\eta_{\rho}  = \frac{\sqrt{\log T}}{dT^{\max\{1/2, \rho\}}}\). If $\text{Reg}_T \leq 144 d K \sqrt{\log T}T^{\rho + \frac{\frac{3}{2} - \rho}{2^{k}}}$ for some integer $k$, then we must also have  $\text{Reg}_T \leq 144d K \sqrt{\log T}T^{\rho + \frac{\frac{3}{2} - \rho}{2^{k+1}}}$. $K$ is a instance-dependent constant, where $K:= \max\left\{C_0, \frac{C_2C_{\kappa}}{\sqrt{\log T}}, (L^{\sigma}C_{\kappa})^2, 2\sqrt{\lambda}RC_{\kappa}L^{\sigma} \right\}$, \(C_0\) and \(C_2\) defined in Lemma \ref{lemma:regret decomposition} and \ref{lemma: estimation error} respectively,   
\end{lemma}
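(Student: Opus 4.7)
The plan is to chain together Lemmas \ref{lemma:regret decomposition}, \ref{lemma: estimation error}, and \ref{lemma:bound} in that order, substitute the current-iteration hypothesis into the regret decomposition, and then carefully match exponents and constants to close the induction. First, I would start from the regret decomposition
\[
\text{Reg}_T \leq \frac{C_0}{\eta_\rho}\log T + 8d^2\eta_\rho T + 2L^\mu L^\sigma_1 R + 2\bE\Big\{\sum_{t=1}^T b_t^\top (a_t - a^*_T)\Big\},
\]
plug in $\eta_\rho = \sqrt{\log T}/(dT^{\max\{1/2,\rho\}})$ to collapse the first three (``regret of decision'') terms into $\cO(dT^{\max\{1/2,\rho\}}\sqrt{\log T})$, which is already dominated by the target $144dK\sqrt{\log T}\,T^{\rho+(3/2-\rho)/2^{k+1}}$ since the exponent on the right is at least $\max\{1/2,\rho\}$. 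The constant $C_0$ inside $K$ absorbs the first summand.

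Next I would attack the feedback error. Bounding $\sqrt{t}|c_t(a_t,a'_t)| \leq C_\kappa t^{\rho-1/2}$ using Definition \ref{def:general-adversary} and applying Lemma \ref{lemma: estimation error} reduces the feedback error to $C_1 C_\kappa d\sqrt{\eta_\rho}\,B + C_2 dT^\rho + 2d\sqrt{\lambda}\sqrt{\eta_\rho T}$, where $B := \bE\big[\sum_t t^{\rho-1/2}\sqrt{\mu(a^*)-\mu(a_t)}\big]$. The middle term $C_2 dT^\rho$ is dominated by $144 dK\sqrt{\log T}\,T^{\rho+(3/2-\rho)/2^{k+1}}$ because $C_2 C_\kappa/\sqrt{\log T} \leq K$ by definition, and the last term is a lower-order $\cO(d^{1/2}T^{(1-\max\{1/2,\rho\})/2}(\log T)^{1/4})$, again absorbed by $K \geq 2\sqrt{\lambda}RC_\kappa L^\sigma$.

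The heart of the argument is bounding $B$. I would first convert the hypothesis on $\text{Reg}_T$ into a bound on $A := \bE[\sum_t \mu(a^*)-\mu(a_t)]$ via the standard equivalence of regret measures (Lemma \ref{lemma:s1}), giving $A \leq \cO(\sqrt{\log T}\,T^{1/2+\psi_k})$ with $\psi_k = \rho-1/2+(3/2-\rho)/2^k$. Lemma \ref{lemma:bound} then yields $B \leq \cO(T^{-1/4+\psi_k/2+\rho}) = \cO(T^{-1/2+3\rho/2+(3/2-\rho)/2^{k+1}})$. Multiplying by $C_1 C_\kappa d\sqrt{\eta_\rho} \asymp d^{1/2}(\log T)^{1/4} T^{-\max\{1/2,\rho\}/2}$ gives a contribution whose $T$-exponent is exactly $\rho - 1/2 + (3/2-\rho)/2^{k+1}$ in the regime $\rho \geq 1/2$ (and strictly smaller for $\rho < 1/2$), which is a $T^{-1/2}$ factor below the target exponent $\rho+(3/2-\rho)/2^{k+1}$. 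Thus this term too is dominated, with $(L^\sigma C_\kappa)^2 \leq K$ handling the constant.

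The main obstacle is bookkeeping: showing the \emph{single} constant $K = \max\{C_0,\, C_2 C_\kappa/\sqrt{\log T},\, (L^\sigma C_\kappa)^2,\, 2\sqrt{\lambda}R C_\kappa L^\sigma\}$ suffices simultaneously for every $k$, and that the hidden logarithmic and dimensional factors never inflate the prefactor beyond $144$. I would allocate a factor of roughly $36\,dK\sqrt{\log T}\,T^{\rho+(3/2-\rho)/2^{k+1}}$ to each of the four summands above, then use $T > \sqrt{2L^\mu L^\sigma_1 R}$ to absorb the additive constant $2L^\mu L^\sigma_1 R$ into the slack. The delicate point is that Lemma \ref{lemma:bound}'s constant depends on the assumed prefactor of $A$, so I would verify that pulling the $144 dK\sqrt{\log T}$ prefactor through the square root in Lemma \ref{lemma:bound} only costs an additional $\sqrt{144K\log T}$ factor, which is compensated by the $T^{-1/2}$ headroom noted above; this closes the induction and yields in the limit $\text{Reg}_T \leq \tilde\cO(d\max\{\sqrt{T},T^\rho\})$.
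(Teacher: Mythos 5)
Your proposal follows essentially the same route as the paper's proof: chaining Lemma \ref{lemma:regret decomposition} and Lemma \ref{lemma: estimation error}, converting the induction hypothesis on $\text{Reg}_T$ into a bound on $A$ via Lemma \ref{lemma:s1}, feeding that through Jensen and Lemma \ref{lemma:bound} to control $B$, and verifying that each component of $K$ absorbs the corresponding constant (in particular $(L^\sigma C_\kappa)^2 \le K$ playing exactly the role you identify, turning the $C_\kappa\sqrt{K}$ prefactor that emerges from pulling $144dK\sqrt{\log T}$ through the square root back into a multiple of $K$). Your exponent bookkeeping is consistent with Lemma \ref{lemma:bound} as stated and correctly lands at $\rho + \frac{3/2-\rho}{2^{k+1}}$, so the outline is sound and matches the paper's argument.
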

\end{proof}

\subsection{Additional Applications of the Above Regret Analysis Framework}\label{subsec:upperbound}
To showcase useful applications of the new regret analysis framework above for proving Theorem \ref{proposition:matching_upper_bound}, in this subsection we employ the framework to study continuous dueling bandits under \emph{arbitrary} and \emph{agnostic} corruption --- i.e., remove decaying constraints on $c_t$ and \(\rho\) is unknown to the learner. We analyze natural variants of RoSIMD, which apply to strongly concave utilities, and the well-known algorithm Dueling Bandit Gradient Descent (DBGD) \citep{Yue2009InteractivelyOI}, which apply to general concave utilities. The proofs of these results are direct applications of the analysis from Steps 1-3 outlined above (without Step 4, since it is the only one requiring knowledge of \(\rho\) and decaying corruption). Additionally, we present these results to offer a set of principled benchmark algorithms for the experiments in Section \ref{sec:experiment}.

\begin{proposition}[Efficiency-Robustness Tradeoff in RoSMID]\label{theorem:regret_upper_bound_unknown}
If the total corruption level satisfies \(\sum^{T}_{t=1}|c_t(a_t, a'_t)| \leq \cO(T^{\rho})\), then for any $\alpha \in [\frac{1}{2}, 1)$, if the utility function is strongly concave, for a sufficiently large round \(T\), by choosing learning rate $\eta_{\rho} = \frac{\sqrt{\log T}}{2d}T^{-\alpha}$
\begin{enumerate}
    \item (Robustness) RoSMID incurs $\text{Reg}_T \leq \Tilde{\cO}(dT^{\alpha} + \sqrt{d}T^{\frac{1}{2}(1 - \alpha) + \rho} + dT^{\rho})$.
    \item (Efficiency) There exists a strongly concave utility function $\mu$ and a link function $\sigma$ such that $\text{Reg}_T$ suffered by RoSMID is at least $\Omega\left(T^{\alpha}\right)$ in scenario without corruption.
\end{enumerate}
\end{proposition}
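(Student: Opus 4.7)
The plan is to reduce to the corruption-agnostic components of the analysis framework developed in Steps~1--3 of the proof of Theorem~\ref{proposition:matching_upper_bound}. The regret decomposition of Lemma~\ref{lemma:regret decomposition} and the feedback-error bound of Lemma~\ref{lemma: estimation error} hold for any corruption sequence --- only Step~4 used the per-round decay --- so the robustness upper bound follows by plugging $\eta_\rho = \sqrt{\log T}/(2dT^\alpha)$ and the aggregate budget $\sum_t |c_t|\le\cO(T^\rho)$ into those two lemmas. The efficiency lower bound is established separately by an algorithm-specific ``slow convergence'' construction on a one-dimensional instance, where the prescribed small step size forces RoSMID to need $\Omega(1/\eta_\rho)=\Omega(T^\alpha)$ rounds to close an $\Omega(1)$ initial gap.

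\textbf{Robustness upper bound.} Starting from Lemma~\ref{lemma:regret decomposition}, the first three summands evaluate to $C_0\log T/\eta_\rho = \tilde\cO(dT^\alpha)$, $8d^2\eta_\rho T = \tilde\cO(dT^{1-\alpha})\le\tilde\cO(dT^\alpha)$ (using $\alpha\ge 1/2$), and $2L^\mu L^\sigma_1 R = \cO(1)$. For the feedback-error term, Lemma~\ref{lemma: estimation error} reduces the task to bounding $\bE[\sum_t\sqrt{t}|c_t(a_t,a'_t)|\sqrt{\mu(a^*)-\mu(a_t)}]$. Without any per-round decay, I would use the blunt estimate
\[\bE\Big[\sum_{t=1}^T\sqrt{t}|c_t(a_t,a'_t)|\sqrt{\mu(a^*)-\mu(a_t)}\Big]\;\le\;\sqrt{TR^\mu}\cdot\bE\Big[\sum_{t=1}^T|c_t(a_t,a'_t)|\Big]\;\le\;\cO(T^{1/2+\rho}),\]
obtained from $\sqrt{t}\le\sqrt{T}$, $\mu(a^*)-\mu(a_t)\le R^\mu$, and the aggregate budget. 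Multiplying by $C_1 d\sqrt{\eta_\rho} = \tilde\cO(\sqrt{d}\,T^{-\alpha/2})$ contributes $\tilde\cO(\sqrt{d}\,T^{(1-\alpha)/2+\rho})$; the remaining two terms of Lemma~\ref{lemma: estimation error} contribute $C_2 d T^\rho = \cO(dT^\rho)$ and $2d\sqrt{\lambda\eta_\rho T} = \tilde\cO(\sqrt{d}\,T^{(1-\alpha)/2})$ (dominated by the previous). Summing all contributions produces exactly $\tilde\cO(dT^\alpha+\sqrt{d}\,T^{(1-\alpha)/2+\rho}+dT^\rho)$.

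\textbf{Efficiency lower bound and main obstacle.} For the lower bound I would take the instance $d=1$, $\cA=[-1,1]$, $\mu(a) = -\tfrac12(a-\tfrac12)^2$ (so $a^*=1/2$, $\mu$ is $1$-strongly concave), linear link $\sigma(x)=(1+x)/2$ clipped to $[0,1]$, self-concordant barrier $\cR(a)=-\log(1-a^2)$, and no corruption ($c_t\equiv 0$). Then $a_1 = \arg\min_\cA\cR = 0$, so the initial gap is $|a_1-a^*|=1/2$. The exact RoSMID update $\nabla\cR_t(a_{t+1}) = \nabla\cR_t(a_t)-\eta_\rho \hat g_t$ combined with the mean-value theorem yields $|a_{t+1}-a_t|\le \eta_\rho|\hat g_t|/\inf\nabla^2\cR_t$, and for $t\le T_0:=c/\eta_\rho$ with a small constant $c$, both $\inf\nabla^2\cR_t(a_t)\ge 2\phi$ and $\sup\nabla^2\cR_t(a_t)$ (which controls $|\hat g_t| = \hat\cF(\nabla^2\cR_t(a_t))^{1/2}$) are bounded by instance-dependent constants as long as $a_t\in[-3/4,3/4]$, since $\nabla^2\cR_t(a)=\nabla^2\cR(a)+\lambda\eta_\rho t+2\phi$ and $\lambda\eta_\rho t\le \lambda c$. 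Hence $|a_{t+1}-a_t|\le\cO(\eta_\rho)$ per step and total displacement over $T_0$ rounds is $\cO(c)\le 1/8$, so $|a_t-a^*|\ge 3/8$ for every $t\le T_0$; strong concavity then gives per-round dueling regret $\Omega(1)$, and summing produces $\text{Reg}_T\ge\Omega(T_0)=\Omega(T^\alpha)$ (absorbing the $\sqrt{\log T}$ in $\eta_\rho$ into $\Omega(\cdot)$). The main obstacle is precisely the circular dependence between the displacement bound (which needs a Hessian bound) and the Hessian bound (which needs the iterates to remain in a compact sub-interval); this loop is closed by a bootstrap induction on $t\le T_0$, where the slack constant $c$ is chosen small enough to preserve the invariant $a_t\in[-3/4,3/4]$.
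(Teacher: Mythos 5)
Your robustness argument is essentially the paper's: both instantiate the Step~1--3 framework (Lemmas~\ref{lemma:regret decomposition} and~\ref{lemma: estimation error}) with $\eta_\rho=\sqrt{\log T}/(2d)\,T^{-\alpha}$ and then kill the weighted corruption sum with a blunt $\sqrt{t}\le\sqrt{T}$ plus a constant bound on the remaining factor. The paper bounds $\|a_t-a^*_T\|_2\le 2R$ inside the intermediate display of Lemma~\ref{lemma: estimation error} and invokes Abel summation, while you bound $\sqrt{\mu(a^*)-\mu(a_t)}\le\sqrt{R^\mu}$ in the lemma's final form; these are the same estimate up to constants and yield the identical three-term bound.

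For the efficiency statement you take a genuinely different route. The paper argues by contradiction: it proves an attack lemma (Lemma~\ref{lemma:bound_strong}) showing that any algorithm with clean-setting regret $c_0T^{\alpha}$ on a two-dimensional strongly concave instance can be driven to linear regret by an agnostic corruption budget $\sqrt{c_0}T^{(1+\alpha)/2}$, and then observes that clean-setting regret $o(T^{\alpha})$ would contradict the robustness bound just proved (which tolerates corruption up to $T^{(1+\alpha)/2}$ with sublinear regret). You instead analyze the dynamics of RoSMID directly on a one-dimensional instance: the dual-space update moves by exactly $\eta_\rho\hat g_t$, the mean-value theorem plus the lower bound $\nabla^2\cR_t\ge 2\phi$ converts this to a primal displacement of $\cO(\eta_\rho)$ per round (the Hessian upper bound needed to control $|\hat g_t|$ being maintained by a bootstrap invariant), so $\Omega(1/\eta_\rho)$ rounds are needed to close the constant initial gap, each costing $\Omega(1)$ dueling regret via $\sigma(x)-\sigma(0)\ge l^\sigma_1 x$. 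Your argument is more self-contained, does not lean on the robustness statement or on any corruption construction, and speaks directly to the ``without corruption'' scenario; the paper's argument is shorter and reuses machinery but is more delicate (its ``least possible $\epsilon$'' step is not airtight). Two minor caveats on your side: (i) your count gives $T_0=\Theta(T^{\alpha}/\sqrt{\log T})$, so strictly you obtain $\tilde\Omega(T^{\alpha})$ rather than $\Omega(T^{\alpha})$ --- the $\sqrt{\log T}$ cannot simply be ``absorbed'' into $\Omega(\cdot)$, though this matches the looseness already tolerated elsewhere in the paper; (ii) the link $\sigma(x)=(1+x)/2$ clipped to $[0,1]$ violates the smoothness and positive-derivative assumptions at the clipping points since $\sup_{a,a'}|\mu(a)-\mu(a')|=9/8>1$ on $[-1,1]$; rescale $\mu$ or use $\sigma(x)=\tfrac12+\tfrac{x}{4}$ to avoid clipping. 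Neither issue affects the substance of the argument.
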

\vspace{1mm}
\begin{proposition}[Efficiency-Robustness Tradeoff in DBGD]\label{theorem:regret_upper_bound_DBGD}
If the total corruption level satisfies \(\sum^{T}_{t=1}|c_t(a_t, a'_t)| \leq \cO(T^{\rho})\), then for any $\alpha \in (0,\frac{1}{4}]$, if utility function \(\mu\) is generally concave, for a sufficiently large round \(T\), choosing $\gamma = \frac{R}{\sqrt{T}}$ and $\delta = \frac{\sqrt{2Rd}}{\sqrt{13 L^{\sigma}L^{\mu}}T^{\alpha}}$ for DBGD
\begin{enumerate}
    \item (Robustness) $\text{Reg}_T$ incurred by DBGD satisfies $\text{Reg}_T \leq \cO(\sqrt{d}T^{1 - \alpha}+ \sqrt{d}T^{\alpha + \rho})$.
    \item (Efficiency) There exists a linear utility function $\mu$ and a link function $\sigma$ such that $\text{Reg}_T$ suffered by DBGD is at least $\Omega\left(T^{1-\alpha}\right)$ in scenario without corruption.
\end{enumerate} 
\end{proposition}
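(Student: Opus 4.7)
The plan is to prove the two claims separately, re-using the regret decomposition framework of Steps 1--3 in the proof of Theorem \ref{proposition:matching_upper_bound} and specializing it to DBGD's one-point gradient estimator. In particular, only Steps 1--3 are needed here (not Step 4), since neither the decaying structure of $c_t$ nor the knowledge of $\rho$ is available.

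For the robustness direction, I would first derive a DBGD analogue of Lemma \ref{lemma:bt}. Recall that DBGD samples $u_t$ uniformly from the unit sphere $\bS^d$, plays $a_t$ against $a'_t = a_t + \delta u_t$, and forms an estimator of the form $\hat g_t \propto \hat\cF(a'_t, a_t)\cdot u_t$. By the FKM spherical-sampling identity, $\bE[\hat g_t \mid a_t]$ equals the gradient (at $a_t$) of the $\delta$-smoothed win-probability $\bar P_t$, minus a bias term $b_t \propto (d/\delta)\bigl(\bP(a'_t \succ a_t) - \hat\bP(a'_t \succ a_t)\bigr) u_t$; the $L^\sigma$-Lipschitzness of the link then gives $\|\bE[b_t\mid a_t]\| \le L^\sigma d |c_t(a_t,a'_t)|/\delta$. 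A DBGD variant of the regret decomposition (Lemma \ref{lemma:regret decomposition}) then splits $\text{Reg}_T$ into a ``decision regret'' term and a ``feedback error'' term $\bE[\sum_t \langle b_t, a_t - a^*\rangle]$. With $\gamma = R/\sqrt{T}$ and $\delta = \Theta(\sqrt{Rd}/T^\alpha)$, the classical FKM-style analysis of DBGD on the $\delta$-smoothed concave objective (combining the OGD regret $R^2/\gamma$, the variance of the one-point estimator, and the $L^\mu L^\sigma \delta T$ smoothing error) yields decision regret $\cO(\sqrt{d}\, T^{1-\alpha})$. For the feedback error, bounding $\langle b_t, a_t - a^*\rangle$ by $\|b_t\|\cdot R$ and summing yields $\cO(dR/\delta \cdot T^\rho) = \cO(\sqrt{d}\, T^{\alpha+\rho})$ using the corruption budget. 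Adding the two gives the claimed robustness bound.

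For the efficiency lower bound, I would construct a one-dimensional instance with $\cA = [-R,R]$, linear utility $\mu(a) = a$, and any rotation-symmetric link satisfying $\sigma'(0)\ge l^\sigma_1 > 0$. To keep the exploration arm $a'_t = a_t + \delta u_t$ feasible, DBGD constrains $a_t$ to a shrunken interval $[-(R-\delta),\,R-\delta]$, so $a^* - a_t \ge \delta$ for every round, even at full convergence. A first-order expansion of $\sigma$ at zero then yields per-round regret $\sigma(\mu(a^*)-\mu(a_t)) + \sigma(\mu(a^*)-\mu(a'_t)) - 1 \ge \sigma'(0)\delta - o(\delta)$, which summed over $T$ rounds gives $\Omega(\delta T) = \Omega(T^{1-\alpha})$ with $\delta = \Theta(T^{-\alpha})$. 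This confirms that DBGD cannot escape $\Omega(T^{1-\alpha})$ regret regardless of corruption level, purely as a consequence of its inherent exploration radius.

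The main obstacle is the robustness direction. Unlike RoSMID, DBGD targets only generally concave (not strongly concave) utilities, so the device in Lemma \ref{lemma: estimation error} that converts $\|a_t - a^*\|$ into $\sqrt{\mu(a^*)-\mu(a_t)}$, and the subsequent iterative refinement of Step 4, are both unavailable. Consequently, the feedback error can only be controlled via the diameter $R$, which is what produces the strictly weaker $\sqrt{d}\, T^{\alpha+\rho}$ term (rather than a clean $T^\rho$ term under strong concavity). Carefully balancing this term against the decision regret is what forces the stated choice of $\gamma$ and $\delta$, and the constraint $\alpha\in(0,1/4]$ arises precisely so that the standard DBGD analysis on the smoothed objective is not dominated by its variance contribution.
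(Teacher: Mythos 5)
Your robustness argument is essentially the paper's own: the paper establishes the DBGD bias lemma (Lemma \ref{lemma:bt_DBGD}, with $b_t = \tfrac{d}{\delta}(\hat P_t(a'_t) - P_t(a'_t))u_t$), the decomposition into decision regret and feedback error (Lemma \ref{lemma:decomposition_dbgd}), and the diameter bound $\sum_t \bE(b_t^{\top}(a_t-a^*)) \le 2RdL^{\sigma}T^{\rho}/\delta$ (Lemma \ref{lemma: estimation_error_dbgd}), exactly as you outline, including your observation that $\alpha \le 1/4$ is what keeps the $T^{1/2+\alpha}$ variance term below $T^{1-\alpha}$. The efficiency half is where you genuinely depart from the paper. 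The paper proves it \emph{by contradiction from the robustness statement}: assuming $\text{Reg}_T = \cO(T^{1-\alpha-\epsilon})$ on every clean instance, it constructs two linear-utility instances (two different polytopal action sets in $d=2$) that a corruption budget of $\cO(T^{1-\alpha-\epsilon})$ suffices to render indistinguishable, which forces linear regret on one of them and contradicts part 1. Your route is a direct, instance-specific per-round bound coming from the exploration radius; it is more elementary, does not lean on part 1, and attributes the loss concretely to the exploration arm rather than to an indistinguishability argument. One inaccuracy needs repair: Algorithm \ref{algo:DBGD} does \emph{not} confine $a_t$ to a $\delta$-shrunken set --- it plays $a'_t = \cP_{\cA}(a_t + \delta u_t)$ --- so your premise ``$a^* - a_t \ge \delta$ for every round'' is false (the iterate may sit exactly at the boundary optimum). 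The conclusion survives with a one-line fix: take $\cA = [-R,R]$ and $\mu(a)=a$, so $a^* = R$; by symmetry, with probability $1/2$ the perturbation points away from $a^*$, in which case either $a'_t = a_t - \delta$ and hence $\mu(a^*)-\mu(a'_t) \ge \delta$, or $a_t$ itself is within $\delta$ of the opposite boundary and contributes $\Omega(R)$ regret. Either way $\bE_{u_t}[\sigma(\mu(a^*)-\mu(a_t)) + \sigma(\mu(a^*)-\mu(a'_t)) - 1] \ge \tfrac{1}{2}l^{\sigma}_1\delta$ for every $a_t$, and summing gives $\Omega(\delta T) = \Omega(T^{1-\alpha})$. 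With that repair both halves stand.
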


Both propositions illustrate the tradeoff between \emph{learning efficiency} and \emph{robustness} to adversarial corruption, achieved through their tunable learning rate \(\eta_{\rho}, \gamma, \delta\) (learning rate in DBGD is \(\frac{\gamma \delta}{d}\); we direct reader to Algorithm \ref{algo:DBGD} for detail). Specifically, greater tolerance for agnostic corruption (i.e., larger \(\alpha\) in Proposition \ref{theorem:regret_upper_bound_unknown}) results in worse regret in non-corrupt settings. A similar robustness-accuracy trade-off has been studied in classification problems with adversarial examples, both empirically \citep{tsipras2018robustness} and theoretically \citep{zhang2019theoretically}. However, to our knowledge, this is the first formal analysis of such a trade-off in online learning. While expanding the confidence region for UCB-type algorithms to handle agnostic corruption in linear contextual dueling bandits increases tolerance for corruption but worsens regret—sharing a similar idea of a trade-off \citep{di2023varianceaware}—it's unclear if this approach necessarily reduces efficiency. In contrast, our work demonstrates that adjusting learning rates in gradient-based algorithms to enhance robustness against agnostic corruption inevitably decreases efficiency when no corruption is present. In particular, the efficiency statements in both propositions can be interpreted as algorithm-dependent lower bounds, concretely demonstrating that for certain problem instances, increasing robustness necessarily decreases the learning efficiency of the algorithm. 

\section{Experiments}\label{sec:experiment}
In this section, we validate our theoretical analysis of RoSMID and DBGD under \(\rho\)-Imperfect Human Feedback through simulations. We also compare their performance against two baseline algorithms: Doubler and the heuristic algorithm, Sparring, proposed by \citet{ailon2014reducing}. For Sparring, we use bandit gradient descent \citep{10.5555/1070432.1070486} as the black-box bandit algorithm. We direct readers to Appendix \ref{sec:appendix-exp} for implementation details.

\textbf{Experiment Setup.} We consider a standard experiment setup, which adopts a strongly concave utility $\mu_{\theta}(a) := \theta^{\top}a - \frac{1}{2}\|a\|^2_2$, and a logistic link function $\sigma(x) = \frac{1}{1 + \exp(-x)}$. We choose $d = 5$, and $T = 10^5$. Our action space $\cA$ is a $d$-dimensional ball with radius $R = 10$. The preference parameter $\theta$ is randomly sampled from the surface of $\cA$. In our problem setting, the optimal action $a^* = \theta$ and $\mu_{\theta}(a^*) = 50$. We simulate \(\rho\)-Imperfect Human Feedback for $\rho \in [0.5, 1]$. 

\begin{wrapfigure}{r}{0pt}
    \centering
    \raisebox{0pt}[\dimexpr\height-0.6\baselineskip\relax]{\includegraphics[width=0.3\textwidth]{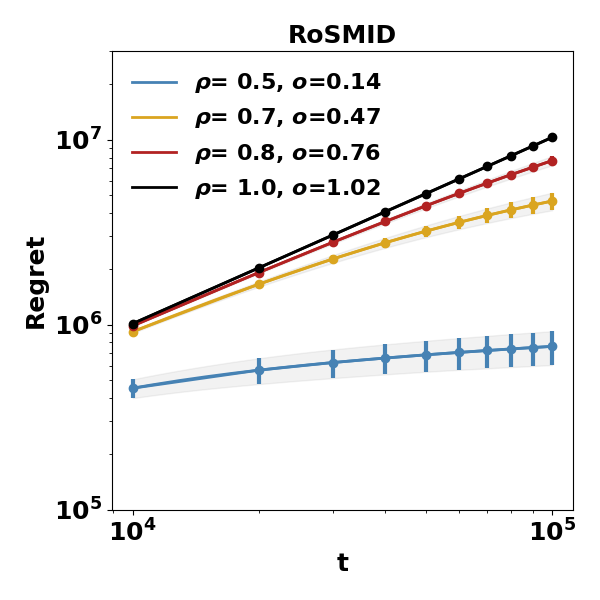}}
    \caption{Robustness of RoSMID for \(\rho\)-LIHF}
    \label{fig:D1}
\end{wrapfigure}

\textbf{Results and Discussion.} Figure \ref{fig:D1} presents a log-log plot of regret versus iteration, \(t\), which the order of regret in \(T\) is represented by the slope of the line. Each line represents the average over five trials with different random seeds, and each line corresponds to a distinct value of \(\rho\). The shaded region indicates \(\pm\) one standard deviation. In the legend, ``o'' denotes the estimated order of regret. We estimate it using least squares on the last 1\% of the data. Figure \ref{fig:D1} supports Theorem \ref{proposition:matching_upper_bound}, as for \(\rho \in [0.5, 1)\), the estimated slope is less than \(\rho\), suggesting that the choice of \(\eta_{\rho} = \frac{\sqrt{\log T}}{dT^{\max\{1/2, \rho\}}}\) for RoSMID results in a regret bound of at most \(\Tilde{\cO}(T^{\rho})\).

Figure \ref{Figure1} compares the performance of our proposed algorithms, variants of DBGD and RoSMID, with two baseline algorithms, Sparring and Doubler, in the setting when \(c_t\) is arbitrary and agnostic. We set the parameter \(\alpha = 1/4\) for DBGD and \(\alpha = 1/2\) for RoSMID. When \(\rho\) is unknown to the algorithms, these parameter choices enable DBGD and RoSMID to tolerate \(\cO(T^{3/4})\) levels of unknown corruption, since the slope estimates are less than 1, implying sublinear regret, aligning with our theoretical predictions. Experimentally, RoSMID demonstrates the best performance under strongly concave utility functions. Sparring performs comparably to RoSMID, despite being a heuristic approach without theoretical guarantees. In contrast, Doubler exhibits the worst performance, likely because its theoretical guarantees are only applicable to linear link functions, which differs from our experimental setup. We direct readers to Appendix \ref{sec:appendix-exp} for experiment results with other choice of \(\alpha\) for efficiency-robustness tradeoff.

\begin{figure}[H]
    \centering
    \raisebox{0pt}[\dimexpr\height-0.6\baselineskip\relax]{\includegraphics[width=\textwidth]{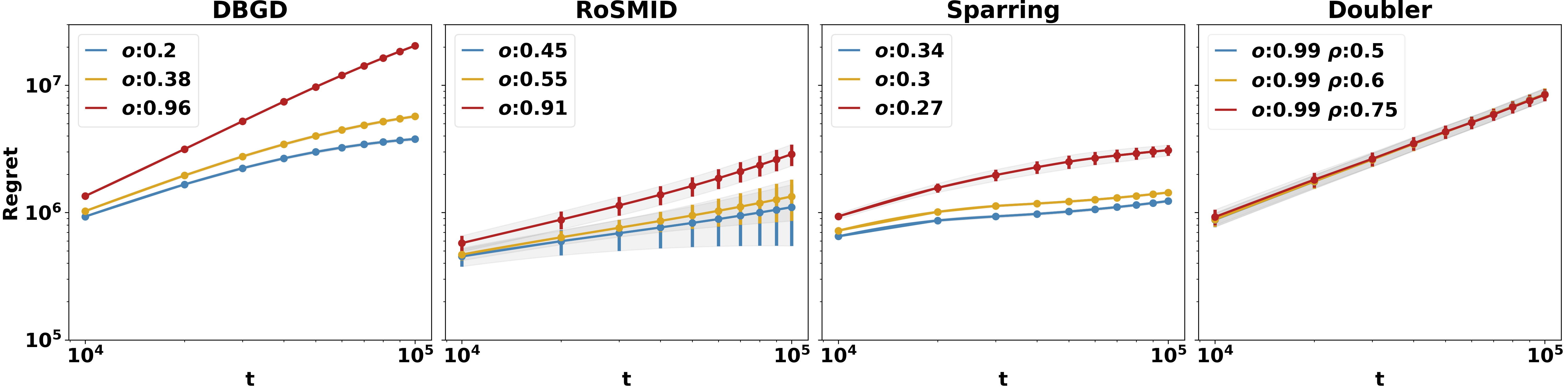}}
    \vspace{-20pt}
    \caption{For each algorithm, we tested its performance under \(\rho\)-Imperfect Human feedback with \(\rho = 0.5, 0.6, 0.75\). For each \(\rho\), we presented a line plot of the average regret over five simulations, accompanied by \(\pm\) one standard deviation shown by the shaded region. In the legend, \(o\) denotes the estimated line slope, calculated using least squares on the last \(1\%\) of the data.}
    \label{Figure1}
\end{figure}
\vspace{-15pt}

\section{Conclusion}
In conclusion, this paper enhances the understanding of continuous dueling bandits by addressing \(\rho\)-Imperfect Human Feedback with concave utilities. We established fundamental regret lower bounds, introduced the RoSMID algorithm with nearly optimal performance, and developed a novel regret analysis framework that highlights the inherent efficiency-robustness trade-off in gradient-based algorithms. Our experimental results corroborate the theoretical predictions. A promising future direction is the development of an algorithm that achieves \(\Tilde{O}(d\sqrt{T} + dT^{\rho})\) regret for continuous dueling bandits with strongly concave utilities under arbitrary adversarial corruption. Additionally, incorporating contextual information to better model real-world scenarios could further enhance the applicability and robustness of these algorithms. We left these directions for future research.
\bibliography{references}
\appendix
\onecolumn

\section{Proofs for Theorem \ref{theorem:regret_lower_bound}} \label{proof:theorem:regret_lower_bound}
\begin{retheorem}[Theorem \ref{theorem:regret_lower_bound} restated]
There exists a $\rho$-Imperfect Human Feedback (see Definition \ref{def:general-adversary}), strongly concave utility function \(\mu\), and link function \(\sigma\) such that any learner has to suffer $\text{Reg}_T \geq \Omega\left(d\max\{\sqrt{T}, T^{\rho}\}\right)$, even \emph{with the knowledge of} $\rho$. 
\end{retheorem}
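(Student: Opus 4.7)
The claimed bound $\Omega(d \max\{\sqrt{T}, T^\rho\})$ splits naturally into two sub-cases. When $\rho \leq 1/2$ it reduces to $\Omega(d\sqrt{T})$, which is the standard lower bound for uncorrupted continuous dueling bandits established in \citet{kumagai2017regret}; since the $\rho$-imperfect model trivially contains the no-corruption case (take $C_\kappa = 0$), that bound transfers immediately. So the substantive task is $\rho \geq 1/2$, for which I intend to establish $\Omega(d T^\rho)$ by reducing from Lemma \ref{lemma:lower bound} (the direct-reward analog already proved in the excerpt) to the dueling feedback setting via a linear link function.

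\textbf{Reduction via the linear link.} Instantiate $\sigma(x) = (1+x)/2$ and the strongly concave utility family $\mu_\theta(a) = \theta^\top a - \tfrac{1}{2}\|a\|_2^2$ with $\theta \in \{-\beta, \beta\}^d$, exactly as in Lemma \ref{lemma:lower bound}. Under this link, the corrupted preference probability becomes
\begin{equation*}
\hat{\bP}(a \succ a') = \tfrac{1}{2} + \tfrac{1}{2}\bigl(\mu_\theta(a) - \mu_\theta(a') + c_t(a, a')\bigr),
\end{equation*}
so the dueling outcome $\hat{\cF}(a_t, a'_t)$ is a Bernoulli whose mean encodes the corrupted utility difference. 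Moreover, with a linear $\sigma$ the dueling regret in \eqref{def:DB_reg} simplifies to $\text{Reg}_T = \tfrac{1}{2}\bE\sum_t \bigl[\mu_\theta(a^*) - \mu_\theta(a_t)\bigr] + \tfrac{1}{2}\bE\sum_t \bigl[\mu_\theta(a^*) - \mu_\theta(a'_t)\bigr]$, which is simply twice the direct-reward regret of the paired queried actions. Mirroring the coordinate-wise attack of Lemma \ref{lemma:lower bound}, I would set $c_t(a_t, a'_t \mid \theta_i > 0) := -\beta(a_{t,i} - a'_{t,i})$ and symmetrically for $\theta_i < 0$, which makes the conditional laws of the dueling outcome given the two signs of $\theta_i$ maximally close in KL.

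\textbf{Information-theoretic bound and parameter tuning.} With the above construction, I would run the same per-coordinate argument as in Lemma \ref{lemma:lower bound}: bound the KL divergence between the two Bernoulli sequences under opposite signs of $\theta_i$; apply Pinsker's inequality (or Le Cam's two-point method) to lower bound the probability of mis-identifying $\mathrm{sign}(\theta_i)$; combine with strong concavity of $\mu_\theta$ to convert mis-identification of a single coordinate into a quadratic suboptimality cost; and sum over all $d$ coordinates. Setting $\beta = \sqrt{C_\kappa}\, T^{(\rho-1)/2}$ as in the lemma and constraining the action-set radius to be at most $\beta$, the per-round corruption is uniformly bounded by $|c_t| \leq 2\beta^2 = 2 C_\kappa T^{\rho-1} \leq 2 C_\kappa t^{\rho-1}$ for all $t \leq T$ (using $\rho \leq 1$), certifying the $\rho$-imperfect constraint up to rescaling of $C_\kappa$. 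Plugging the per-coordinate bounds into the regret expression yields the desired $\Omega(d T^\rho)$.

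\textbf{Main obstacle.} The principal subtlety relative to Lemma \ref{lemma:lower bound} is the nature of the observation noise: the lemma uses additive standard Gaussian noise whose KL divergence is exactly $(\Delta \mu)^2 / 2$, whereas here observations are Bernoulli with means near $1/2$, whose KL $\mathrm{KL}(p_1 \| p_2)$ is only quadratic in $(p_1 - p_2)$ provided both parameters lie in a compact sub-interval of $(0,1)$. I therefore need to certify that every $\hat{\bP}(a_t \succ a'_t)$ stays bounded away from $0$ and $1$ by an absolute constant, which follows because $|\mu_\theta(a) - \mu_\theta(a')| + |c_t|$ is $O(\beta R)$ and shrinks with $T$ under the radius-shrinking construction above. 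A secondary subtlety is verifying the per-round (not merely cumulative) decay constraint on $c_t$, which the same radius-shrinking device handles uniformly in $t$ because it makes $|c_t| \leq 2\beta^2$ independent of $t$.
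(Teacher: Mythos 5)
Your proposal is correct in outline and arrives at the same bound, but it converts the direct-feedback lower bound to the dueling setting by a genuinely different route than the paper. The paper's proof never touches the comparison observations information-theoretically: it first proves the $\Omega(dT^{\rho})$ bound purely in the direct-(Gaussian)-reward world (Lemma \ref{lemma:lower bound}), then transfers it to dueling feedback via a black-box reduction (Lemma \ref{lemma:B1}), which argues that if a dueling learner had functional regret below $2\overline{\text{Reg}}$ then at least one of its two arm sequences would constitute a direct-feedback learner beating $\overline{\text{Reg}}$, a contradiction; the linear link then identifies $\text{Reg}_T$ with $\tfrac12\text{Reg}^{\text{FO}}_T$ (the same pipeline, with Lemma \ref{lemma:stongly concave} in place of Lemma \ref{lemma:lower bound}, handles $\rho\le 1/2$). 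You instead re-run the Assouad/KL argument natively on the Bernoulli comparison outcomes. Your route buys self-containedness and avoids the somewhat delicate simulation argument inside Lemma \ref{lemma:B1} (one has to be careful about what feedback the extracted single-sequence algorithm actually observes); the paper's route buys exact Gaussian KL computations and lets it reuse Shamir's theorem off the shelf. Two small remarks on your version: (i) the Bernoulli-versus-Gaussian KL issue you flag as the ``main obstacle'' largely evaporates, because the masking corruption $c_t = -\theta_i(a_{t,i}-a'_{t,i})$ makes the two conditional laws of the outcome \emph{identical}, so the relevant KL is exactly zero and no Pinsker-type quadratic approximation is needed for the masked coordinate (boundedness away from $0$ and $1$ is still needed for the unmasked coordinates, and your radius-shrinking handles that); and (ii) with the linear link the dueling regret equals one half of the \emph{sum} of the two sequences' direct regrets, not ``twice'' either one — harmless for the $\Omega(\cdot)$ claim but worth stating precisely. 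Any looseness in how the per-coordinate masking interacts with the per-round budget is inherited verbatim from Lemma \ref{lemma:lower bound}, which you are entitled to take as given.
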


\begin{proof}
Before showing the regret lower bound proof for Theorem \ref{theorem:regret_lower_bound}, let's first establish the following Lemma \ref{lemma:B1}, which builds the connection between regret incurred under bandit reward feedback and dueling feedback. This proof is inspired by Theorem 6 in \cite{yao2022learning}. To prove Lemma \ref{lemma:B1}, in addition to $\text{Reg}_T$, we introduce a new metric to measure the performance of algorithm under dueling feedback, which we coin functional regret $\text{Reg}^{\text{FO}}_T$, defined as follows.
\begin{equation}\label{reg:fo}
    \text{Reg}^{\text{FO}}_T := \bE\left\{\sum^T_{t=1}\mu(a^*) - \mu(a_{t}) + \mu(a^*) - \mu(a'_{t}) \right\}. 
\end{equation}

\begin{lemma}\label{lemma:B1}
    Given a utility function $\mu$, if the regret lower bound for algorithm with bandit reward feedback is $\overline{\text{Reg}}$, then
    any learner with dueling feedback has to suffer regret $\text{Reg}^{\text{FO}}_T \geq 2\overline{\text{Reg}}$.
\end{lemma}

\begin{proof}
   We prove our claim by contradiction. Let $(a_{0,t}, a_{1,t})$ be the pair of recommendation at round $t$. 
   Suppose $\text{Reg}^{\text{FO}}_T < 2\overline{\text{Reg}}$. As a result, at least one of the following inequality must hold:
    \begin{align}\label{Edueling}
        \begin{split}
            \bE\left\{\sum^T_{t=1}\mu(a^*) - \mu(a_{0,t})\right\} & \leq \overline{\text{Reg}}; \\
            \bE\left\{\sum^T_{t=1}\mu(a^*) - \mu(a_{1,t})\right\} & \leq \overline{\text{Reg}}.
        \end{split}
    \end{align}
Consider a principal who can observe the interaction between a user and the learner $\cL$, then we can construct two algorithms $\cL_0$ and $\cL_1$ as follows. 

\begin{algorithm}[H]\label{algo:Dueling}
\caption{Algorithm $\cL_i$}
\SetKwInput{KwInput}{Input}
\SetKwInOut{KwOutput}{Output}
\KwInput{the time horizon $T$}
\For{$t \leq T$}
{
Call the learner $\cL$ to generate two candidates $(a_{0, t}, a_{1,t})$.\\
Present  $(a_{0, t}, a_{1,t})$ to user and received the feedback.\\
Return user feedback to the learner $\cL$ and update $\cL$.
}
\KwOutput{the sequential decision $\{a_{i,t}\}^T_{t=1}$}
\end{algorithm}
From \eqref{Edueling}, we know that at least one of $\{\cL_0, \cL_1\}$ achieves an expected regret lower than $\overline{\text{Reg}}$. However, we know all algorithm with bandit feedback with utility function $\mu$ has to occur regret at least $\overline{\text{Reg}}$, contradiction. Therefore, Lemma \ref{lemma:B1} must hold, which completes the proof.   
\end{proof}

Next, we prove Theorem \ref{theorem:regret_lower_bound}. First, consider the case \(\rho \leq 0.5\). Given that the utility function is strongly concave, by applying Lemma \ref{lemma:stongly concave}, we have \(\overline{\text{Reg}} = 0.02 \min\{T, d\sqrt{T}\}\). From Lemma \ref{lemma:B1}, we also obtain \(\text{Reg}^{\text{FO}}_T \geq 0.04 \min\{T, d\sqrt{T}\}\). Using the linear link function \(\sigma(x) = \frac{1}{2} + \frac{1}{2}x\), it follows that \(\text{Reg}_T \geq 0.02 \min\{T, d\sqrt{T}\} \geq \Omega(d \max\{\sqrt{T}, T^\rho\})\), completing the proof.

\begin{lemma}[Theorem 6 in \citep{shamir2013complexity}]\label{lemma:stongly concave}
    Let the number of rounds $T$ be fixed. Then for any learner, there exists a quadratic function of the form $\mu_{\theta}(a) := \theta^{\top}a - \frac{1}{2}\|a\|^2$ which is minimized at $\theta$ and $\|\theta\|_2 \leq 0.5$ such that
    \begin{equation*}
        \bE\left(\sum^T_{t=1}\mu_{\theta}(a^*) - \mu_{\theta}(a_t)\right) \geq 0.02 \min\left\{T, d\sqrt{T}\right\}.
    \end{equation*}
\end{lemma}

Now, let's focus on the scenario when $\rho > 0.5$. In essence, we want to extend  Lemma \ref{lemma:stongly concave} to the scenario in presence of adversarial corruption induced by $\rho$-Imperfect Human Feedback. If we can show the regret lower bound can be generalized to $\Omega(dT^{\rho})$, then applying the same technique which uses Lemma \ref{lemma:B1} to connect regret suffered under bandit reward feedback and regret suffered under dueling feedback and choose linear link function to connect $\text{Reg}^{\text{FO}}_T$ and $\text{Reg}_T$, we will get the desired regret lower bound. The extension of Lemma \ref{lemma:stongly concave} is proven at section \ref{proof:lemma:lower bound}.

\subsection{Proof for Lemma \ref{lemma:lower bound}}\label{proof:lemma:lower bound}
\begin{relemma}[Lemma \ref{lemma:lower bound} restated] 
 Consider bandit learning with direct reward feedback, where reward \(r(a) := \mu (a) + \epsilon\), \(\epsilon\) follows standard normal distribution. The action space $\cA$ is contained in a $d$-dimensional unit ball, and $\mu$ is $\mu_{\theta}(a) := \theta^{\top}a - \frac{1}{2}\|a\|^2_2$, with $\theta \in \bR^d, \|\theta\|_2 \leq 1$ . Under $\rho$-Imperfect Human Feedback (Definition \ref{def:general-adversary}), for any $T$ and $d \leq \frac{1}{C_{\kappa}}T^{1-\rho}$, there exists a $\theta$ such that for any learner under direct reward feedback, \emph{even with the knowledge of $\rho$}, has to suffer regret
    $\text{Reg}_T := \bE\{\sum^T_{t=1} \mu_{\theta}(a^*) - \mu_{\theta}(a_t)\} \geq \frac{d}{4}C_{\kappa}T^{\rho}.$
\end{relemma}
\begin{proof}
To prove Lemma \ref{lemma:lower bound}, we aim to prove that there exists a corruption strategy such that the regret incurred by any learner with \emph{imperfect dueling feedback} is not less than $\Omega(dT^{\rho})$. The formulation of such a corruption strategy hinges on the observation that when the utility function $\mu_{\theta}$ adopts the quadratic form parameterized by $\theta$, specifically $\mu_{\theta}(a) = \theta^{\top}a - \frac{1}{2}\|a\|^2_2$, which is both smooth and strongly concave, then the regret incurred by the learner is bounded below by the sum of the Kullback-Leibler (KL) divergence between the distribution of the \emph{corrupted reward} feedback obtained at round $t$, denoted $\hat v_t, t \in [T]$, conditioned on different possible values of $\theta$. (see Lemma \ref{lemma:lowerbound_KL}). Assume that $\theta$ is uniformly drawn from $\{-\beta, \beta\}^d$, given an action $a_t$, conditioned on $\theta_i > 0$, the corrupted reward feedback $\hat v_t$ is
\begin{align}\label{1}
    \hat v_t = \mu_{\theta}(a_t) + c_t(a_t|\theta_i > 0) + \xi =  \underbrace{\left(-\frac{1}{2}\|a_t\|^2 + \sum_{j\neq i} \theta_j a_{t, j}\right) + \beta a_{t, i} + c_t(a_t|\theta_i > 0)}_{\textcolor{red}{\mu_1}} + \xi.
\end{align}
Conditioned on $\theta_i < 0$, the corrupted reward feedback $\hat v_t$ is
\begin{align}\label{2}
    \hat v_t = \mu_{\theta}(a_t) + c_t(a_t|\theta_i < 0) + \xi = \underbrace{\left(-\frac{1}{2}\|a_t\|^2 + \sum_{j\neq i} \theta_j a_{t, j}\right) - \beta a_{t, i} + c_t(a_t| \theta_i < 0)}_{\textcolor{red}{\mu_2}} + \xi.
\end{align}
 We use $c_t(a_t | \theta_i > 0)$ and $c_t(a_t | \theta_i < 0)$ to empathize the fact that the magnitude and sign of corruption can be dependent on $\theta$. $\xi$ follows a standard Gaussian distribution. Therefore, $\hat v_t$ in Equation \ref{1} follows $\text{N}(\mu_1, 1)$. $\hat v_t$ in Equation \ref{2} follows $\text{N}(\mu_2, 1)$. By using Lemma \ref{lemma:gaussian_kl}, we have
\begin{align*}
   \text{D}_{\text{KL}}(\text{N}(\mu_1, 1)||\text{N}(\mu_2, 1)) =  (\mu_1 - \mu_2)^2.
\end{align*}

\begin{lemma}[KL Divergence for Normal Distribution]\label{lemma:gaussian_kl}
    Let $N(\mu, \sigma^2)$ represent a Gaussian distribution variable with mean $\mu$ and variance $\sigma^2$. Then
    $$\text{D}_{\text{KL}}(N(\mu_1, \sigma^2)||N(\mu_2, \sigma^2)) = \frac{(\mu_1 - \mu_2)^2}{2\sigma^2}.$$
\end{lemma}

To optimize the lower bound in Lemma \ref{lemma:lowerbound_KL}, the adversary selects $c_t(a_t | \theta_i > 0)$ and $c_t(a_t | \theta_i <0)$ to minimize the difference between $\mu_1$ and $\mu_2$. Consider the following corruption strategy: when $\theta_i > 0$, set $c_t(a_t|\theta_i > 0) = -\beta a_{t, i}$; when $\theta_i < 0$, set $c_t(a_t|\theta_i < 0) = \beta a_{t, i}$. This strategy ensures that $\mu_1 = \mu_2$.

Next step is to determine the value of $\beta$. Notice that the corruption budget is $C_{\kappa}T^{\rho}$. To execute the corruption strategy described above, it requires
\begin{equation}\label{constraint}
    \sum^T_{t=1}|c_t(a_t)| \leq \beta \sum^T_{t=1}\|a_{t}\|_{\infty} \leq C_{\kappa}T^{\rho}.
\end{equation}
Therefore, by exploiting the fact that $\mu_{\theta}$ is $1$-strongly concave, we establish another lower bound for the regret by
\begin{align*}
    \bE\left\{\sum^T_{t=1}\mu_{\theta}(a^*)-\mu_{\theta}(a_t) \right\}  & \geq \frac{1}{2} \bE\left\{\sum^T_{t=1}\|a_t - \theta\|^2_2\right\}\\
    & \geq \frac{1}{2}\bE\left\{\sum^T_{t=1}(\|a_t\|_{\infty} - \beta)^2\right\}\\
    & \geq \frac{1}{2}\sum^T_{t=1}(C_\kappa T^{\rho-1}/\beta - \beta)^2.
\end{align*}
Since $|a_{t, i} - \theta_i| \geq ||a_{t, i}| - \beta|$ for all $i$, we obtain the second inequality. By utilizing the corruption constraint \eqref{constraint}, we derive the last inequality, which is minimized when $\|a_t\|_{\infty} = \frac{C_\kappa T^{\rho-1}}{\beta}$ for all $t$. Together with Lemma \ref{lemma:lowerbound_KL}, we have:
\begin{equation}\label{E8}
    \bE\left\{\sum^T_{t=1}\mu_{\theta}(a^*)-\mu_{\theta}(a_t) \right\} \geq \frac{1}{2}\max\left\{\sum^T_{t=1}(C_\kappa T^{\rho-1}/\beta - \beta)^2, \frac{dT\beta^2}{2}\right\}.
\end{equation}

We select $\beta$ to optimize the lower bound in Equation \eqref{E8}. Since $d \leq \frac{1}{C_{\kappa}}T^{1- \rho}$, we can set $\beta = \sqrt{C_\kappa} T^{\frac{\rho}{2}-\frac{1}{2}}$ to satisfy $\|\theta\|_2 \leq 1$. Then we obtain $\bE\left\{\sum^T_{t=1}\mu_{\theta}(a^*)-\mu_{\theta}(a_t)\right\} \geq dC_{\kappa}T^{\rho}/4$. Notice that the corruption level each round is bounded by $\beta \|a_{t}\|_{\infty}$. Consider the scenario when the radius of the \emph{action} space $\cA$ is upper bounded by $\beta$, then the corruption budget each round is bounded by $C_{\kappa}t^{-1+\rho}$, which satisfies the definition of \(\rho\)-Imperfect Human Feedback, which completes the proof.
\end{proof}

\begin{lemma}\label{lemma:lowerbound_KL}
Let's consider the utility function $\mu_{\theta}(a) := \theta^{\top}a - \frac{1}{2}\|a\|^2_2 $. Let $\hat v_1, \hat v_2, \ldots, \hat v_T$ be a sequence of \emph{corrupted reward feedback} obtained by a learner. Then there exists a $\theta \in \bR^d, \|\theta\|_2 \leq 1$, uniformly drawn from $\{-\beta, \beta\}^d$, such that the regret suffered by the learner is 
\begin{align*}
   \bE\left\{\sum^T_{t=1} \mu_{\theta}(a^*)-\mu_{\theta}(a_t)\right\} \geq \frac{dT\beta^2}{4}\left(1 - \sqrt{\frac{1}{d}\sum^d_{i=1}\sum^T_{t=1}\bD_{t,i}}\right).
\end{align*}
$\bD_{t,i} := \sup_{\{\theta_j\}_{j \neq i}} \text{D}_{\text{KL}}\left(\bP(\hat v_t | \theta_i > 0, \{\theta_j\}_{j \neq i}, \{\hat v_l\}^{t-l}_{l=1})||\bP(\hat v_t | \theta_i < 0, \{\theta_j\}_{j \neq i}, \{\hat v_l\}^{t-l}_{l=1})\right)$. $\text{D}_{\text{KL}}$ is the KL divergence between two distributions.
\end{lemma}

\begin{proof}
Using the similar argument in \cite{shamir2013complexity}, assume that the learner is deterministic: $a_t$ is a deterministic function of the realized corrupted reward feedback $\hat v_1, \hat v_2, \ldots, \hat v_{t-1}$ at $a_1, a_2, \ldots, a_{t-1}$. This assumption is without loss of generality, since any random learners can be seen as a randomization over deterministic learning algorithms. Thus a lower bound which holds uniformly for any deterministic learner would also hold over a randomization. To lower bound \eqref{lb1}, we use Lemma \ref{lemma:KL}, which relates this to the question of how informative are the query values (as measured by Kullback-Leibler divergence) for determining the sign of $\theta$'s coordinates. Intuitively, the more similar the query values are, the smaller is the KL divergence and the harder it is to distinguish the true sign of $\theta_i$, leading to a larger lower bound. In addition, we are facing a powerful adversary who has the complete knowledge of the problem and is able to add corruption on the query value to make they are even more similar, which resulting a even smaller KL divergence, consequently, an even larger lower bound. 
Let $\bar a_T := \frac{1}{T}\sum^T_{t = 1}a_t$ represent the average action, we have
\begin{align}
\bE\left\{\sum^T_{t=1}\mu_{\theta}(a^*) - \mu_{\theta}(a_t)\right\} & = T \bE\left\{\frac{1}{T}\sum^T_{t=1}\mu_{\theta}(a^*)-\mu_{\theta}(a_t)\right\} \nonumber\\
& \geq T\bE\left(\mu_{\theta}(a^*)-\mu_{\theta}(\bar a_T)\right) \nonumber\\
& \geq T \bE\left(\frac{1}{2}\|\bar a_T - \theta\|^2\right) \nonumber\\
& = T \bE\left(\frac{1}{2}\sum^d_{i=1}(\bar a_i - \theta_i)^2\right) \nonumber\\
& \geq \bE\left(\frac{\beta^2 T}{2}\sum^d_{i=1}\bI_{\bar a_i \theta_i < 0}\right) \label{lb1} \\
& \geq \frac{dT\beta^2}{4}\left(1 - \sqrt{\frac{1}{d}\sum^d_{i=1}\sum^T_{t=1}\bD_{t,i}} \nonumber \right).
\end{align}
We get the second inequality by using the fact that $\mu_{\theta}$ is 1-strongly concave. We get the last inequality by using Lemma \ref{lemma:KL}, which completes the proof.

\begin{lemma}[Lemma 4 in \citep{shamir2013complexity}]\label{lemma:KL}
    Let $\theta$ be a random vector, none of those coordinates is supported on $0$. Let $\hat v_1, \hat v_2, \ldots, \hat v_T$ be a sequence of values obtained by a deterministic learner returning a point $\bar a_T$ (so that the action $a_t$ is a deterministic function of $\hat v_1, \ldots, \hat v_{t-1}$ and $\bar a_T$ is a deterministic function of $\hat v_1, \ldots, \hat v_T$). Then we have
    \begin{align*}
        \bE\left(\sum^d_{i=1}\bI_{\bar a_i \theta_i}\right) \geq \frac{d}{2}\left(1 - \sqrt{\frac{1}{d}\sum^d_{i=1}\sum^T_{t=1}\bD_{t,i}}\right),
    \end{align*}
    where $\bD_{t,i} = \sup_{\{\theta_j\}_{j \neq i}} \text{D}_{\text{KL}}\left(\bP(\hat v_t | \theta_i > 0, \{\theta_j\}_{j \neq i}, \{\hat v_l\}^{t-l}_{l=1})||\bP(\hat v_t | \theta_i < 0, \{\theta_j\}_{j \neq i}, \{\hat v_l\}^{t-l}_{l=1})\right)$, $\text{D}_{\text{KL}}$ represents the KL divergence between two distributions.
\end{lemma}
\end{proof}
\end{proof}

\subsection{Proof for Proposition \ref{prop:regret_lower_bound_linear}:}\label{proof:lemma:linear_regret_lower_bound}
\begin{reproposition}[Proposition \ref{prop:regret_lower_bound_linear} restated]
There exists an LIHF instance with $\rho$-Imperfect Human Feedback (Definition \ref{def:general-adversary}),  linear user utility \(\mu\), and link function \(\sigma\), such that any learner has to suffer $\text{Reg}_T \geq \Omega(d\max\{\sqrt{T}, T^{\rho}\})$, \emph{even with the knowledge of $\rho$}. 
\end{reproposition}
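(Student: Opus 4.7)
The plan is to mirror the two-stage argument used for Theorem \ref{theorem:regret_lower_bound}: first establish an $\Omega(d\max\{\sqrt{T},T^\rho\})$ lower bound for linear utility under direct reward feedback, and then lift it to the dueling setting by composing Lemma \ref{lemma:B1} with the linear link $\sigma(x)=(1+x)/2$. This link is the crucial ingredient in the conversion because it forces $\text{Reg}_T=\tfrac12\text{Reg}^{\text{FO}}_T$, so any direct-reward lower bound transfers to the dueling setting with only a constant-factor loss, while also explaining why the resolution works for \emph{linear} utility specifically.

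For the direct-reward stage I would take the hard instance $\mu_\theta(a)=\theta^\top a$ with $\theta$ drawn uniformly from $\{-\beta,\beta\}^d$ and the hypercube action space $\cA=\{a\in\bR^d:\|a\|_\infty\leq\gamma\}$. The hypercube is chosen because it makes the problem decouple across coordinates: $a^\ast=\gamma\,\mathrm{sign}(\theta)$, and the instantaneous regret factors additively as $\sum_i \beta\bigl(\gamma-\mathrm{sign}(\theta_i)a_{t,i}\bigr)$. The central sign-mismatch inequality is
\begin{equation*}
\text{Reg}_T \;=\; T\beta\sum_{i=1}^d \bE\!\left[\gamma-\mathrm{sign}(\theta_i)\bar a_{i,T}\right] \;\geq\; T\beta\gamma\sum_{i=1}^d \bP\!\left(\bar a_{i,T}\theta_i<0\right),
\end{equation*}
with $\bar a_T=\tfrac1T\sum_t a_t$, which translates any guarantee on the expected number of mis-signed coordinates into regret $\Omega(dT\beta\gamma)$. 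I would apply Lemma \ref{lemma:KL} to produce such a guarantee, using the sign-cancelling corruption of Lemma \ref{lemma:lower bound} (namely $c_t(a_t\mid\theta_i>0)=-\beta a_{t,i}$ and $c_t(a_t\mid\theta_i<0)=+\beta a_{t,i}$) that drives the relevant per-coordinate KL contributions $\bD_{t,i}$ to zero and forces at least a constant fraction of coordinates to be mis-signed in expectation.

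It remains to tune $\beta\gamma$. For the $\rho>1/2$ regime I would saturate the corruption constraint with $\beta\gamma=C_\kappa T^{\rho-1}$, so that $|c_t|\leq\beta\gamma\leq C_\kappa t^{\rho-1}$ holds for every $t\leq T$ (since $\rho\leq 1$ implies $T^{\rho-1}\leq t^{\rho-1}$), producing regret $\Omega(dT^\rho)$. For the $\rho\leq 1/2$ regime I would drop the corruption and set $\beta\gamma=\Theta(1/\sqrt{T})$; the Gaussian KL terms $\bD_{t,i}=2\beta^2 a_{t,i}^2$ then sum to $O(1)$ per coordinate across $[T]$, Lemma \ref{lemma:KL} still leaves a constant fraction of coordinates mis-signed, and the sign-mismatch inequality yields $\Omega(d\sqrt{T})$. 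Combining both regimes and invoking Lemma \ref{lemma:B1} with the linear link delivers the proposition.

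The main obstacle I anticipate is the absence of strong concavity, which was the workhorse in Lemma \ref{lemma:lower bound} for turning a misplaced $\bar a_T$ into quadratic regret; in the linear case that step has to be replaced by the hypercube-specific sign-mismatch bound above, which relies on the additive decomposition of regret across coordinates. A secondary subtlety is that a single per-round corruption $c_t$ cannot simultaneously zero all per-coordinate KL contributions $\bD_{t,i}$; I would handle this by letting $c_t$ depend on the true $\theta$ in a coordinate-aware manner (strong corruption allows this) to zero the KL term for the coordinate being probed in Lemma \ref{lemma:KL}, or, if necessary, by accepting a constant fraction of non-zero contributions and using the slack in the $(1-\sqrt{\cdot})$ factor. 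Finally, the linear link $\sigma(x)=(1+x)/2$ is a valid probability only when $|\mu(a)-\mu(a')|\leq 1$, which with our hypercube forces $2d\beta\gamma\leq 1$, i.e.\ $d\leq T^{1-\rho}/(2C_\kappa)$ --- exactly the dimension condition already present in Lemma \ref{lemma:lower bound}, so it introduces no new difficulty.
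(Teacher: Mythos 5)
Your proposal follows the same two-stage architecture as the paper's proof: a direct-reward lower bound for a linear instance that decouples across coordinates, lifted to dueling feedback via Lemma \ref{lemma:B1} and the linear link $\sigma(x)=(1+x)/2$, with the case split at $\rho=1/2$. The paper instantiates the first stage slightly differently: for $\rho\le 1/2$ it simply cites the Dani--Hayes--Kakade bound (Lemma \ref{lemma:linear}) rather than re-deriving $\Omega(d\sqrt{T})$ from the Gaussian KL computation, and for $\rho>1/2$ it proves Lemma \ref{lemma:linear_regret_lower_bound} using a per-round sign-mismatch count, the Bretagnolle--Huber inequality on pairs $(\theta,\theta')$ differing in one coordinate, and an averaging hammer over $\Theta=\{-\beta,\beta\}^d$ --- whereas you route the same sign-cancelling corruption through Shamir's Lemma \ref{lemma:KL} applied to the average iterate $\bar a_T$. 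These are interchangeable pieces of machinery; your sign-mismatch inequality and choice $\beta\gamma=C_\kappa T^{\rho-1}$ reproduce the paper's $\frac{d}{8}C_\kappa T^{\rho}$ bound up to constants, and your observation that the linear link makes $\text{Reg}_T=\tfrac12\text{Reg}^{\text{FO}}_T$ is exactly how the paper closes the argument. Your identification of the hypercube sign-decomposition as the substitute for strong concavity is also precisely the move the paper makes.

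The one point worth pressing on is the subtlety you flag yourself: a single scalar corruption $c_t$ cannot be $-\beta a_{t,i}$ for all $i$ simultaneously, so it cannot zero every $\bD_{t,i}$ in Lemma \ref{lemma:KL} at once; the only $\theta$-consistent full cancellation is $c_t=-\theta^\top a_t$, whose magnitude on the hypercube scales like $\beta\gamma d$ rather than $\beta\gamma$, which would cost you the factor of $d$. Neither of your proposed repairs clearly closes this: Lemma \ref{lemma:KL} probes all $d$ coordinates at once, and without cancellation the per-coordinate KL sums to $\Theta(C_\kappa^2 T^{2\rho-1})$, which diverges for $\rho>1/2$, so there is no ``slack in the $(1-\sqrt{\cdot})$ factor'' to absorb it. You should be aware, however, that the paper's own proof defines the corruption pairwise (per coordinate $i$ inside the Bretagnolle--Huber step) and bounds the per-round budget by $\beta\|a_t\|_\infty$, so it faces exactly the same consistency tension and does not resolve it either; relative to the paper's argument your proposal is not missing anything, and you are in fact more candid about where the construction is delicate.
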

\begin{proof}
The proof structure is very similar to the lower bound proof in Theorem \ref{theorem:regret_lower_bound}. We start by discussing the value of $\rho$. Consider the scenario when $\rho \leq 0.5$. Applying Lemma \ref{lemma:linear} and \ref{lemma:B1} together with linear link function, we have $\text{Reg}_{T} \geq \Omega(d\max\{\sqrt{T}, T^{\rho}\})$, which completes the proof.

\begin{lemma}[\citep{Dani2008StochasticLO}]\label{lemma:linear}
Let $\cA = [-1, 1]^d$ and $\Theta = [-T^{-\frac{1}{2}}, T^{-\frac{1}{2}}]^d$. Consider the linear reward function $r_t = \theta^{\top}A_t + \epsilon_t$, $\epsilon_t$ is independent standard Gaussian noise. Then for any learner, there exists a vector $\theta \in \Theta$ such that
    \begin{equation*}
        \text{Reg}_T(\cA, \theta) \geq \frac{\exp(-2)}{8}d\sqrt{T}.
    \end{equation*}
\end{lemma}

Now, let's focus on the scenario when $\rho > 0.5$ and the essence is to extend  Lemma \ref{lemma:linear} to the scenario in presence of $\rho$-Imperfect Human Feedback, which is shown in Lemma \ref{lemma:linear_regret_lower_bound}

\begin{lemma}\label{lemma:linear_regret_lower_bound}
Assume that the \emph{action} space $\cA \subset [-1, 1]^d$. Given $\rho$-Imperfect Human Feedback (Definition \ref{def:general-adversary}), for stochastic linear bandit, there exists a $\theta \in [-C_{\kappa}T^{\rho-1}, C_{\kappa}T^{\rho-1}]^d$ such that any learner even with the knowledge of $\rho$ has to suffer regret no less than $\frac{d}{8}C_{\kappa}T^{\rho}.$
\end{lemma}

\begin{proof}
The proof extends Lemma \ref{lemma:linear} to a scenario in presence of corruption. We want to construct a parameter family and a corruption strategy such that for all algorithm, it will occur at least $\Omega(dT^{\rho})$ regret. Consider the action set $\cA \in [-1, 1]^d$ and $\Theta = \{-\beta, \beta\}^d$. For any learner, we can lower bound its regret by
\begin{align*}
    \text{Reg}_T(\cA, \theta) &= \bE_{\theta}\left[\sum^T_{t=1}\sum^d_{i=1}(\text{sign}(\theta_i) - a_{ti})\theta_i\right]\\
    & \geq \beta\sum^d_{i=1}\bE_{\theta}\left[\sum^T_{t=1}\mathbb{I}\{\text{sign}(a_{ti}) \neq \text{sign}(\theta_i)\}\right]\\
    & \geq \frac{T\beta}{2}\sum^d_{i=1}\bP_{\theta}\left(\sum^T_{t=1}\mathbb{I}\{\text{sign}(a_{ti}) \neq \text{sign}(\theta_i)\} \geq \frac{T}{2}\right) \label{lowerbound}
\end{align*}
Let's denote $$p_{\theta_i} = \bP_{\theta}\left(\sum^T_{t=1}\mathbb{I}\{\text{sign}(a_{ti}) \neq \text{sign}(\theta_i)\} \geq \frac{T}{2}\right)$$
Let $i \in [d]$ and $\theta \in \Theta$ be fixed, and let $\theta'_j = \theta_j$, for $j \neq i$ and $\theta'_i = -\theta_i$. Then using Lemma \ref{lemma:BH}, we have
$$p_{\theta_i} + p_{\theta'_i} \geq \frac{1}{2} \exp\left(-\bE\left[\sum^T_{t=1}\text{D}_{\text{KL}}(P_{a_t}, P'_{a_t})\right]\right).$$
$P_{a_t}$ is the distribution of corrupted reward observed by the learner after playing action $a_t$ when reward parameter is $\theta$. Similarly, $P'_{a_t}$ is the distribution of corrupted observed by the learner after playing action $a_t$ when the reward parameter is $\theta'$. 
\begin{lemma}[Bretagnolle-Huber inequality]\label{lemma:BH}
    Let $\bP$ and $\mathbb{Q}$ be probability measures on the same measurable space $(\Omega, \cF)$. Let $A \in \cF$ be any arbitrary event and $A^c$ is the complement of $A$. Then we have
    \begin{equation}
        \bP(A) + \mathbb{Q}(A^c) \geq \frac{1}{2}\exp\left(-D_{\text{KL}}(\bP, \mathbb{Q})\right).
    \end{equation}
\end{lemma}
In presence of adversarial corruption, for $\theta$, the corrupted reward is
\begin{equation}\label{linear E1}
    \hat r_t = \sum_{j \neq i} a_{t j} \theta_j + a_{t i }\theta_i + c_t(a_t|\theta) + \epsilon_t.
\end{equation}
For $\theta'$, we have
\begin{equation}\label{linear E2}
    \hat r_t =  \sum_{j \neq i} a_{t j} \theta_j - a_{t i }\theta_i + c_t(a_t|\theta') + \epsilon_t.
\end{equation}
Consider the corruption strategy such that $c_t(a_t | \theta) = -a_{t,i}\theta_i$ in \eqref{linear E1} and $c_t(a_t | \theta') = a_{t,i}\theta_i$ in \eqref{linear E2}. This is achievable since we assume the adversary has complete knowledge of the problem instance. By doing so, the corrupted reward $\hat r_t$ are from the same distribution regardless whether the preference parameter is $\theta$ or $\theta'$. This is because $\theta$ and $\theta'$ only differs in the $i$-th coordinate with the magnitude $\beta$, and this difference could be masked by the corruption, resulting in $\bE\left[\sum^T_{t=1}D_{\text{KL}}(P_{a_t}, P'_{a_t})\right] = 0$, which makes it indistinguishable by the learner. Notice that executing such a corruption strategy requires total corruption budget
\begin{equation}\label{linear budget constraint}
    \sum^T_{t=1}|c_t(a_t)| \leq \beta \sum^T_{t=1}\|a_t\|_{\infty} \leq C_{\kappa}T^{\rho}.
\end{equation}
Choose $\beta = C_{\kappa}T^{\rho-1}$, Equation \eqref{linear budget constraint} holds. Therefore, we have
\begin{equation*}
    p_{\theta_i} + p_{\theta'_i} \geq \frac{1}{2}.
\end{equation*}
Applying an "averaging hammer" over all $\theta \in \Theta$, which satisfies $|\Theta| = 2^d$, we get
\begin{equation*}
    \sum_{\theta \in \Theta} \frac{1}{|\Theta|} \sum^d_{i=1}p_{\theta_i} = \frac{1}{|\Theta|}\sum^d_{i=1}\sum_{\theta \in \Theta} p_{\theta_i} \geq \frac{d}{4}.
\end{equation*}
This implies that there exists a $\theta \in \Theta$ such that $\sum^d_{i=1}p_{\theta_i} \geq \frac{d}{4}$. Therefore we have
\begin{equation*}
   \text{Reg}_T(\cA, \theta) \geq \frac{dC_{\kappa}}{8}T^{\rho},
\end{equation*}
which completes the proof. We want to highlight that the corruption level each round $|c_t(a_t)|$ is bounded by 
$\beta \|a_{t}\|_{\infty} \leq C_{\kappa}T^{\rho-1} \leq C_{\kappa}t^{\rho-1}, \forall t$, 
satisfying definition of $\rho$-Imperfect Human Feedback.
\end{proof}
\end{proof}

\section{Proof for Theorem \ref{proposition:matching_upper_bound}}\label{proof:regret_upper_bound}
\begin{retheorem}[Theorem \ref{proposition:matching_upper_bound} restated]
RoSMID satisfies \(\text{Reg}_T \leq \Tilde{\cO}(d\max\{\sqrt{T}, T^{\rho}\})\) for any $\rho$-LIHF problem. 
\end{retheorem}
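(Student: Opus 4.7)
The plan is to follow the standard mirror-descent analysis for noisy comparison feedback in the spirit of \citet{kumagai2017regret} but to carefully track the additional error introduced by utility corruption, and then to close the loop with an iterative bootstrap that exploits the decaying-corruption structure of $\rho$-LIHF. First I would compute the conditional mean of the one-point gradient estimator $\hat g_t$ used in Line 6 of Algorithm \ref{algo}. Without corruption this is exactly the gradient of a spherically smoothed version $\bar P_t$ of the win probability (by the standard one-point estimator identity applied under the Hessian rescaling of $\nabla^2\cR_t$); under $\rho$-LIHF an additional bias appears, namely $b_t=d(\bP(a_t\succ a'_t)-\hat\bP(a_t\succ a'_t))\nabla^2\cR_t(a_t)^{1/2}u_t$, whose scale is controlled through the Lipschitzness of $\sigma$ by $|c_t(a_t,a'_t)|$. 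This is exactly Lemma \ref{lemma:bt}.

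Second, I would derive the regret decomposition of Lemma \ref{lemma:regret decomposition}. The key move is to apply the standard stochastic-mirror-descent inequality for the $\nu$-self-concordant regularizer $\cR_t$ with the \emph{biased} stochastic gradient $\hat g_t$. Taking $\bE[\hat g_t\mid a_t]=\nabla\bar P_t(a_t)-\bE[b_t\mid a_t]$ and using concavity of $\bar P_t$ (inherited from $\sigma$ composed with concave $\mu$) converts one-step progress into a bound on the smoothed dueling regret measured against $a^*_T$; the self-concordant term contributes $O((\log T)/\eta_\rho)$, the variance term contributes $O(d^2\eta_\rho T)$ because $\|\hat g_t\|_{\nabla^{-2}\cR_t}^2\le d^2$, and the bias pulls out the inner-product "feedback error'' $2\bE\sum_t b_t^\top(a_t-a^*_T)$. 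To bound the latter, I would invoke Cauchy--Schwarz and the $\alpha$-strong concavity of $\mu$ to replace $\|a_t-a^*\|_2$ by $\sqrt{(2/\alpha)(\mu(a^*)-\mu(a_t))}$, together with $\|b_t\|\lesssim d|c_t|\,\|\nabla^2\cR_t(a_t)^{1/2}u_t\|$; the Hessian norm is controlled because $\nabla^2\cR_t(a_t)\preceq \nabla^2\cR(a_t)+(\lambda\eta_\rho t)I+2\phi I$, yielding the three-term split of Lemma \ref{lemma: estimation error} with the $\sqrt{t}\,|c_t|\sqrt{\mu(a^*)-\mu(a_t)}$ cross-term, an $O(dT^\rho)$ term from the constant part of the Hessian, and an $O(d\sqrt{\lambda\eta_\rho T})$ term from the $\lambda\eta_\rho t$ growth.

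The main obstacle, and the novel part of the argument, is controlling the $\sqrt{t}|c_t|\sqrt{\mu(a^*)-\mu(a_t)}$ cross-term, which depends on the very regret we are trying to bound. Here I would exploit the $\rho$-decay to replace $\sqrt{t}|c_t|$ by $C_\kappa t^{\rho-1/2}$ and then carry out the iterative refinement of Lemma \ref{lemma:bound}: starting from a crude a priori bound $\bE\sum_t(\mu(a^*)-\mu(a_t))\le O(T^{1/2+\psi})$, Jensen's inequality and an Abel summation by parts using $t^{\rho-1}-(t+1)^{\rho-1}\le t^{\rho-2}$ upgrade this into $\bE\sum_t t^{\rho-1/2}\sqrt{\mu(a^*)-\mu(a_t)}\le O(T^{-1/4+\psi/2+\rho})$. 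Plugging back into the mirror-descent inequality gives a tighter bound on $\text{Reg}_T$, and via Lemma \ref{lemma:s1} a tighter bound on $\bE\sum_t(\mu(a^*)-\mu(a_t))$, with $\psi$ contracted toward $\max\{0,\rho-1/2\}$.

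Finally, I would formalize the bootstrap as an induction on $k$, exactly as in Lemma \ref{lemma:induction}: show that $\text{Reg}_T\le 144dK\sqrt{\log T}\,T^{\rho+(3/2-\rho)/2^k}$ implies the same inequality with $2^{k+1}$ in place of $2^k$, and let $k\to\infty$ to conclude $\text{Reg}_T\le\Tilde{\cO}(d\max\{\sqrt T,T^\rho\})$. The critical design choice that makes the induction close is the learning rate $\eta_\rho=\sqrt{\log T}/(dT^{\max\{1/2,\rho\}})$; any faster rate would make the $d^2\eta_\rho T$ variance term blow up past $T^\rho$, while any slower rate would prevent the $(1/\eta_\rho)\log T$ term from being absorbed. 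The delicate technical step will be to pin down a universal constant $K$ (depending only on instance parameters, not on $k$) such that all the lower-order terms in Lemmas \ref{lemma:regret decomposition}--\ref{lemma:bound} are dominated uniformly across iterations of the refinement; this amounts to taking $K$ to be the maximum of the constants controlling each of the four sources of error identified above.
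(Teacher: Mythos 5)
Your proposal is correct and follows essentially the same four-step route as the paper's own proof: the bias identification of Lemma \ref{lemma:bt}, the regret decomposition of Lemma \ref{lemma:regret decomposition}, the Cauchy--Schwarz/strong-concavity bound on the feedback error of Lemma \ref{lemma: estimation error}, and the Abel-summation-based iterative refinement of Lemmas \ref{lemma:bound} and \ref{lemma:induction}. The remaining work is only to fill in the constants (in particular the uniform $K$ in the induction), which you have correctly identified as the delicate step.
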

\begin{proof}
At the beginning of the proof, we formally define self-concordance (see Definition \ref{def:self-concordant}). The input, \(\nu\)-self-concordant function \(\cR\), serves as a regularizor in RoSMID (see Algorithm \ref{algo}).
\begin{definition}\label{def:self-concordant}
\textbf{(Self-concordance.)}
    A function $\cR: \text{int}(\cA) \rightarrow \bR$ is self-concordant if it satisfies
    \begin{enumerate}
        \item $\cR$ is three times continuously differentiable, convex, and approaches infinity along any sequence of points approaching the boundary of $\text{int}(\cA)$.
        \item For every $h \in \bR^d$ and $x \in \text{int}(\cA)$, $|\nabla^3\cR(x)[h, h, h]| \leq 2\left(h^{\top}\nabla^2\cR(x)h\right)^{\frac{3}{2}}$ holds, where $|\nabla^3\cR(x)[h, h, h]: = \frac{\partial^3\cR}{\partial t_1 \partial t_2 \partial t_3}(x + t_1 h + t_2 h + t_3 h)|_{t_1 = t_2 = t_3 = 0}.$
    \end{enumerate}
In addition to these two conditions, if for every $h \in \bR^d$ and $x \in \text{int}(\cA)$, $|\nabla \cR(x)^{\top}h \leq \nu^{\frac{1}{2}}(h^{\top}\nabla^2\cR(x)h)^{\frac{1}{2}}|$ for a positive real number $\nu$, $\cR$ is $\nu$-self-concordant.
\end{definition} 
Additionally, we remind the reader of the standard assumptions \citep{Yue2009InteractivelyOI, kumagai2017regret} used in our analysis.
\begin{enumerate}
    \item the action set $\cA$ is a convex compact set that contains the origin, has non-empty interior, and is contained in a $d$-dimensional ball of radius $R$; 
    \item the utility function  $\mu: \cA \rightarrow \bR$ is strongly concave and twice-differentiable. The following constants are useful for our algorithm analysis: $\mu$ is $L^{\mu}$-Lipschitz, $\alpha$-strongly concave, $\kappa$-smooth, bounded \(R^{\mu} := \sup_{a, a'}\mu(a) - \mu(a')\), and
    $a^{*} := \arg \max_{a \in \cA} \mu(a)$ is the unique optimal within $\cA$;   
    \item the link function $\sigma: \bR \rightarrow [0, 1]$ is smooth, rotation-symmetric (i.e. $\sigma(x) = 1 - \sigma(-x)$), and concave for any $x \geq 0$. For the ease of analysis: let $l^{\sigma}_1$ [resp. $L^{\sigma}_1$] denote the lower [resp. upper] bound of the first-order derivative of $\sigma$. \(\sigma\) is \(L^{\sigma}\)-Lipschitz and its second-order derivative \(\sigma''\) is $L^{\sigma}_2$-Lipschitz and upper bounded by \(R^{\sigma}_2\).
\end{enumerate}
Under these assumption, Lemma \ref{lemma:s1} implies \(\text{Reg}_T\) defined in \eqref{def:DB_reg} and \(\text{Reg}^{\text{FO}}_T\) defined in \eqref{reg:fo} has the same order.
\begin{lemma}[Lemma 12 in \cite{kumagai2017regret}]\label{lemma:s1}
With \(\text{Reg}^{\text{FO}}_T\) defined in \eqref{reg:fo}, we have
    \begin{equation}\label{relation}
    \frac{\text{Reg}_T}{L^{\sigma}_1} \leq \text{Reg}^{\text{FO}}_T \leq \frac{\text{Reg}_T}{l^{\sigma}_1}.
\end{equation}
\end{lemma}

In the following, we show the proof. The main proof can be divided into four key steps. Step 1 is natural, which quantifies the bias in gradient estimation by RoSMID due imperfect feedback from corrupted utilities. Building on this, Step 2 develops a regret decomposition lemma for dueling bandits under corruption, decomposing the regret into two components: regret from sub-optimal decisions and regret from feedback error. Techniques from previous analyses of continuous dueling bandits handle the sub-optimal decision regret, thus in Step 3, we focus on deriving new techniques to bound the regret from feedback error, a unique challenge in our setting. We identify a tight upper bound for this error.  Finally, Step 4, the only step relying on the decaying corruption level in \(\rho\)-imperfect human feedback, uses this assumption, along with the carefully chosen learning rate \(\eta_\rho\) and an iterative regret refinement process, to ensure the tightness of the analysis in Step 3, thus completing the proof. 

\textbf{Step 1: quantifying bias of gradient estimation in $\rho$-LIHF}

Our proof starts from quantifying the bias of the gradient \(\hat{g}_t\)  caused by \(\rho\)-imperfect human feedback, as shown in the following Lemma \ref{lemma:bt}. 

\begin{relemma}[Lemma \ref{lemma:bt} restated]
The gradient  $\hat g_t$ in Line 6 of Algorithm \ref{algo} satisfies 
 \begin{equation*}
     \bE\left(\hat g_t | a_t\right) = \nabla |_{a = a_t} \bar P_t(a)  - \bE \left(b_t | a_t\right).
 \end{equation*} 
where $\bar P_t(a): = \bE_{x \in \bB}\left[\bP(a_t \succ a + \nabla^2R_t(a_t)^{-\frac{1}{2}}x)\right]$  (with $\bB$ as the unit ball) is the smoothed probability that $a_t$ is preferred over any action $a$, and $b_t  = d\left(\bP(a_t \succ a'_t) - \hat \bP(a_t \succ a'_t)\right)\nabla^2\cR_t(a_t)^{\frac{1}{2}}u_t$. 
\end{relemma}
\begin{proof}
    If we do not have corruption (i.e. the probability for observing noisy comparative feedback $\cF(a, a') = 1$ is based on true cost difference $\mu(a) - \mu(a')$), then the uncorrupted gradient $g_t: = \cF(a'_t, a_t)d\nabla^2\cR_t(a_t)^{1/2}u_t$ should be an unbiased estimate of $\nabla \bar P_t(a_t)$, i.e. \[\bE(g_t | a_t) = \nabla \bar P_t(a_t).\] We use \(\nabla \bar P_t(a_t)\) as a shorthand of \(\nabla |_{a = a_t} \bar P_t(a)\). We use $P_t(a)$ as a shorthand of $P_t(a) := \sigma(\mu(a_t) - \mu(a))$ equivalent to $\bP(a_t \succ a)$, and $\hat{P}_t(a) := \sigma(\mu(a_t) - \mu(a) + c_t(a_t, a'_t))$, equivalent to $\hat \bP(a_t \succ a)$ under our modelling assumption.
    The proof is similar to Lemma 2.1 in \cite{10.5555/1070432.1070486}. Using the Law of total expectation, we have
    \begin{align*}
        \bE(g_t|a_t) & = \bE_{u_t}[\bE( g_t | a_t, u_t)]\\
        & = \bE_{u_t}\left(d\bE(P_t(a_t + \nabla^2\cR_t(a_t)^{-\frac{1}{2}}u_t)\nabla^2\cR_t(a_t)^{\frac{1}{2}}u_t|a_t, u_t) \right)\\
        & = d\bE(P_t(a_t + \nabla^2\cR_t(a_t)^{-\frac{1}{2}}u_t)\nabla^2\cR_t(a_t)^{\frac{1}{2}}u_t|a_t)\\
        & = \nabla \bE_{x\in \bB}\left(P_t(a_t + \nabla^2\cR_t(a_t)^{-\frac{1}{2}}x|a_t)\right)\\
        & = \nabla \bar P_t(a_t).
    \end{align*}
We get the second inequality by using the definition of $\cF(a'_t, a_t)$. We use the Stroke's Theorem to get the second last equality. The gradient which we get to perform gradient descent $\hat g_t$  is corrupted. If we let $a'_t = a_t + \nabla^2\cR_t(a_t)^{-\frac{1}{2}}u_t$, we have
\begin{align*}
        \bE(\hat g_t|a_t) & = \bE_{u_t}(\bE(\hat  g_t | a_t, u_t)) \nonumber\\
        & = \bE_{u_t}\left(d\bE(\hat P_t(a'_t)\nabla^2\cR_t(a_t)^{\frac{1}{2}}u_t|a_t, u_t) \right) \nonumber\\
        & = d\bE\left(\hat P_t(a'_t)\nabla^2\cR_t(a_t)^{\frac{1}{2}}u_t|a_t\right) \nonumber\\
        & = d\bE\left(\left[ P_t(a'_t) + \hat P_t(a'_t) - P_t(a'_t)\right]\nabla^2R_t(a_t)^{\frac{1}{2}}u_t|a_t\right)\\
        & = \nabla \bE_{x\in \bB}\left(P_t(a_t + \nabla^2\cR_t(a_t)^{-\frac{1}{2}}x|a_t)\right) + d\bE\left[\left(\hat P_t(a'_t) - P_t(a'_t)\right)\nabla^2\cR_t(a_t)^{\frac{1}{2}}u_t|a_t\right] \nonumber \\
        & = \nabla \bar P_t(a_t) + d\bE\left[\left(\hat P_t(a'_t) - P_t(a'_t)\right)\nabla^2\cR_t(a_t)^{\frac{1}{2}}u_t|a_t\right]\\
        & = \bE(g_t|a_t) + d\bE\left[\left(\hat P_t(a'_t) - P_t(a'_t)\right)\nabla^2\cR_t(a_t)^{\frac{1}{2}}u_t|a_t\right].
\end{align*}
We get the second last equality by using the fact that $g_t$ is unbiased, which we proved earlier. If we defined $b_t$ as 
\[ b_t := d\left(P_t(a_t + \nabla^2\cR_t(a_t)^{-\frac{1}{2}}u_t) - \hat P_t(a_t + \nabla^2\cR_t(a_t)^{-\frac{1}{2}}u_t)\right)\nabla^2\cR_t(a_t)^{\frac{1}{2}}u_t,\]

we get 
\begin{equation*}
    \bE(g_t|a_t) = \bE(\hat g_t|a_t) + \bE\left[b_t|a_t\right].
\end{equation*}
\end{proof}

\textbf{Step 2: decomposing regret for dueling bandits into regret of decision and feedback error}

Core to our proof is a ``regret decomposion'' lemma  below that upper bounds the regret by two parts: regret of sub-optimal decision making under uncertainty, referred as regret of decision (first term in \eqref{eq:regret-decompose}), and feedback error (second term in \eqref{eq:regret-decompose}). 

\begin{relemma}[Lemma \ref{lemma:regret decomposition} restated]
The  \(\text{Reg}_T\) of Algorithm \ref{algo} satisfies:
\begin{equation*}
    \text{Reg}_T   \leq \underbrace{\frac{C_0}{\eta_{\rho}}\log(T) + 8d^2\eta_{\rho}T + 2L^{\mu}L^{\sigma}_1R}_{\textcolor{red}{Regret \hspace{1mm} of\hspace{1mm} Decision}} +  \underbrace{2\bE\left\{\sum^T_{t=1}b^{\top}_t(a_t - a^*_T)\right\}}_{\textcolor{red}{Feedback \hspace{1mm} Error}},
\end{equation*}
where $C_0 := (2\nu + 4L^{\sigma}_{1}\kappa + 4R^{\sigma}_{2} (L^{\mu})^2 + L^{\mu}L^{\sigma}_{1}\kappa)/\lambda$, and $a^*_T := \frac{1}{T}a_1 + (1 - \frac{1}{T})a^*$. 
\end{relemma}

\begin{proof}
The cornerstone of the analysis below is to separate the impact of $b_t$ from the total regret. 
\begin{align*}
        & \text{Reg}_T \leq 2\bE\left[\sum^T_{t=1}(P_t(a_t) - P_t(a^*_T)\right] + \frac{L^{\mu}L^{\sigma}_1\kappa}{\lambda \eta_{\rho}} + 2L^{\mu}L^{\sigma}_1R \\
        & \leq 2\left(\bE\left\{\sum^T_{t=1} (\bar P_t(a_t) - \bar P_t(a^*_T)) \right\}
    + \bE\left\{\sum^T_{t=1} (P_t(a_t) - \bar P_t(a_t)) \right\} + \bE\left\{\sum^T_{t=1} (\bar P_t(a^*_T) - P_t(a^*_T)) \right\}\right) \\
    & + \frac{L^{\mu}L^{\sigma}_1\kappa}{\lambda \eta_{\rho}} + 2L^{\mu}L^{\sigma}_1R\\
    &  \leq 2\bE\left\{\sum^T_{t=1} (\bar P_t(a_t) - \bar P_t(a^*_T))\right\} + \frac{4L^{\sigma}_1\kappa + 4R^{\sigma}_{2} (L^{\mu})^2 + L^{\mu}L^{\sigma}_1\kappa}{\lambda \eta_{\rho}}\log T + 2L^{\mu}L^{\sigma}_1 R. 
\end{align*}
We get the first inequality by using Lemma \ref{lemma: SA}. We get the last inequality by because 
\begin{equation*}
    \bar P_t(a) - P_t(a) \leq \frac{L^{\sigma}_1 \kappa + R^{\sigma}_{2}(L^{\mu})^2}{2}\| \nabla^2 \cR_t(a_t)^{-\frac{1}{2}}u_t\|^2 \leq \frac{L^{\sigma}_1\kappa + R^{\sigma}_{2} (L^{\mu})^2}{\lambda \eta_{\rho} t},
\end{equation*}
and
\begin{align*}
     \bE\left\{\sum^T_{t=1} (P_t(a_t) - P_t(a^*_T)) \right\} & \leq \bE\left\{\sum^T_{t=1} (\bar P_t(a_t) - \bar P_t(a^*_T))\right\} + 2\frac{L^{\sigma}_1\kappa + R^{\sigma}_{2} (L^{\mu})^2}{\lambda \eta_{\rho}}\log T.
\end{align*}

\begin{lemma}[Lemma 5 in \cite{kumagai2017regret}]\label{lemma: SA}
    $$\text{Reg}_T \leq 2\bE\left[\sum^T_{t=1}(P_t(a_t) - P_t(a^*_T)\right] + \frac{L^{\mu}L^{\sigma}_1\kappa}{\lambda \eta_{\rho}} + 2L^{\mu}L^{\sigma}_1R.$$
\end{lemma}

Then it remains to bound $\bE\{\sum^T_{t=1}\bar{P}_t(a_t) - \bar P_t(a^*_T)\}$. And we have the following.
\begin{align*}
    \bE\{\bar{P}_t(a_t) - \bar P_t(a^*_T)\} & \leq \bE\left[\nabla \hat P^{\top}_t(a_t)(a_t - a^*_T) - \frac{L^{\sigma}_1 \alpha}{4}\|a_t - a^*_T\|^2\right]\\
    & = \bE\left[g_t^{\top}(a_t - a^*_T) -  \frac{L^{\sigma}_1 \alpha}{4}\|a_t - a^*_T\|^2\right]\\ 
    & = \bE\left[(\hat{g}_t + b_t)^{\top}(a_t - a^*_T) -  \frac{L^{\sigma}_1 \alpha}{4}\|a_t - a^*_T\|^2\right]\\
    & = \bE\left[\hat{g}_t^{\top}(a_t - a^*_T) -  \frac{L^{\sigma}_1 \alpha}{4}\|a_t - a^*_T\|^2\right] + \bE\{b^{\top}_t(a_t - a^*_T)\},
\end{align*}
where the first and second equality is resulted from Lemma \ref{lemma:bt}. Using the definition of $a_{t+1}$ in Algorithm \ref{algo}, we have
$$\nabla \cR_t(a_{t+1}) - \nabla \cR_t(a_t) = \eta_{\rho} \hat{g}_t.$$
Therefore we have
\begin{align*}
    \bE\left[\hat{g}_t^{\top}(a_t - a^*_T) -  \frac{L^{\sigma}_1 \alpha}{4}\|a_t - a^*_T\|^2\right]  &
     = \frac{1}{\eta_{\rho}}\bE\left[(\nabla \cR_t(a_{t+1}) - \nabla \cR_t(a_t))^{\top}(a_t - a^*_T) -  \frac{L^{\sigma}_1 \alpha \eta_{\rho}}{4}\|a_t - a^*_T\|^2\right]\\
    & = \frac{1}{\eta_{\rho}}\bE\left[D_{\cR_t}(a^*_T, a_t) + D_\cR(a_t, a_{t+1}) - D_{\cR_t}(a^*_T, a_{t+1}) -  \frac{L^{\sigma}_1 \alpha \eta_{\rho}}{4}\|a_t - a^*_T\|^2\right].
\end{align*}

$D_{\cR}(a, b)$ is the Bregman divergence associated with $\cR$, defined by
$$D_\cR(a, b) = \cR(a) - \cR(b) - \nabla \cR(b)^\top(a - b).$$
Then we can get \eqref{Es1} by using Lemma \ref{lemma:SB}, \ref{lemma:SC}, \ref{lemma:SD}.
\begin{align}
    \bE\left\{\sum^T_{t=1} (\bar P_t(a_t) - \bar P_t(a^*_T))\right\} \leq \frac{\nu \log(T)}{\eta_{\rho}} + 4 d^2 \eta_{\rho} T + \bE\{\sum^T_{t=1}b_t^{\top}(a_t - a^{*}_T)\}. \label{Es1}
\end{align}

\begin{lemma}[Lemma 10 in \cite{kumagai2017regret}]\label{lemma:SB}
Let $\cR^*_t(a) = \sup_{x \in \bR^d}x^{\top}a - \cR_t(a)$ denote the Frenchel dual of $\cR_t$. Then we have
    \begin{align*}
    & \sum^T_{t=1}\bE\left[\hat{g}_t^{\top}(a_t - a^*_T) -  \frac{L^{\sigma}_1 \alpha}{4}\|a_t - a^*_T\|^2\right] \leq \frac{1}{\eta_{\rho}}\left(\cR(a^*_T) - \cR(a_1) + \bE\left[\sum^T_{t=1}D_{\cR^*_t}(\nabla \cR_t(a_t) - \eta_{\rho}\hat{g}_t, \nabla \cR_t(a_t))\right]\right).
\end{align*}
\end{lemma}

\begin{lemma}[Lemma 11 in \cite{kumagai2017regret}]\label{lemma:SC}
When $\eta_{\rho} \leq \frac{1}{2d}$, we have
$$D_{\cR^*_t}(\nabla \cR_t(a_t) - \eta_{\rho}\hat{g}_t, \nabla \cR_t(a_t)) \leq 4d^2\eta^2_{\rho}.$$
\end{lemma}

\begin{lemma}[Lemma 4 in \cite{NIPS2014_c399862d}]\label{lemma:SD}
     $\cR(a^*_T) - \cR(a_1) \leq \nu \log(T).$
\end{lemma}

If we let $C_0 := (2\nu + 4L^{\sigma}_1\kappa + 4R^{\sigma}_{2}(L^{\mu})^2+L^{\mu}L^{\sigma}_1\kappa)/\lambda$, we have
\begin{align*}
     \text{Reg}_T  & \leq  \frac{2\nu \log(T)}{\eta_{\rho}} + 8 d^2 \eta_{\rho} T + 2\bE\left\{\sum^T_{t=1}b_t^{\top}(a_t - a^{*}_T)\right\} + \frac{4L^{\sigma}_1\kappa + 4R^{\sigma}_{2} (L^{\mu})^2 + L^{\mu}L^{\sigma}_1\kappa}{\lambda \eta_{\rho}}\log T + 2L^{\mu}L^{\sigma}_1 R\\
     & = \underbrace{\frac{C_0}{\eta_{\rho}}\log(T) + 8d^2\eta_{\rho}T + 2L^{\mu}L^{\sigma}_1R}_{\textcolor{red}{Regret \hspace{1mm} of \hspace{1mm} Decision}} +  \underbrace{2\bE\left\{\sum^T_{t=1}b^{\top}_t(a_t - a^*_T)\right\}}_{\textcolor{red}{Feedback \hspace{1mm} Error}},
\end{align*}
which completes the proof.
\end{proof}

\textbf{Step 3: upper bounding the regret from feedback error} 

Bounding the regret of decision term in \eqref{eq:regret-decompose} is standard. Thus, this step develops a novel upper bound for bounding the regret from feedback error, as shown below. 

\begin{relemma}[Lemma \ref{lemma: estimation error} restated]
The feedback error term in  \eqref{eq:regret-decompose} can be bounded as follows:
    \begin{align*}
    \bE\left\{\sum^T_{t=1}b^{\top}_t(a_t - a^*_T)\right\} 
    & \leq C_1d\sqrt{\eta_{\rho}}\bE\left\{\sum^T_{t=1}\sqrt{t}|c_t(a_t, a'_t)|\sqrt{\mu(a^*)-\mu(a_t)}\right\} + C_2dT^{\rho} + 2d\sqrt{\lambda}\sqrt{\eta_{\rho}T},
    \end{align*}
    where \(C_1 := \sqrt{\frac{2\lambda (L^{\sigma})^2}{\alpha}}\), and $C_2 := 2L^{\sigma}R\sqrt{\sup_{a \in \cA}\lambda_{\max}\left(\nabla^2\cR(a)\right) + 2\phi}$.
\end{relemma}

\begin{proof}
    In the following, we will analyze the expected cumulative sum of the bias $b_t$ on the total regret, which is $\bE\left\{\sum^T_{t=1}b^{\top}_t(a_t - a^*_T)\right\}$. By Cauchy-Schwartz inequality, $b^{\top}_t(a_t - a^*_T) \leq \|b_t\|_2\|(a_t - a^*_T)\|_2$. We can bound the $\|b_t\|_2$ by the following. 
\begin{align*}
    b^{\top}_tb_t & = d^2\left(P_t(a_t + \nabla^2\cR_t(a_t)^{-\frac{1}{2}}u_t) - \hat P_t(a_t + \nabla^2\cR_t(a_t)^{-\frac{1}{2}}u_t)\right)^2u^{\top}_t\nabla^2\cR_t(a_t)u_t\\
    & \leq \left(P_t(a_t + \nabla^2\cR_t(a_t)^{-\frac{1}{2}}u_t) - \hat P_t(a_t + \nabla^2\cR_t(a_t)^{-\frac{1}{2}}u_t)\right)^2 d^2 \lambda_{\max}(\nabla^2\cR_t(a_t)).
\end{align*}
We get the first equality by Lemma $\ref{lemma:bt}$. We get the inequality by using the fact $\|u_t\|_2 = 1$. If we let $a'_t = a_t + \nabla^2\cR_t(a_t)^{-\frac{1}{2}}u_t$, then we have
\begin{align*}
    |P_t(a'_t) - \hat P_t(a'_t)| & = |\sigma(f(a_t) - f(a'_t)) - \sigma(f(a_t) - f(a'_t) + c_t(a_t, a'_t))|\\
    & \leq \min \left(2, L^{\sigma}|c_t(a_t, a'_t)|\right).
\end{align*}
We get the first equality by using the definition of $P_t(a_t)$ and $\hat P_t(a_t)$. We get the first inequality by using the Lipschitz property of $\sigma$.
Therefore, we have $$\|b_t\|_2 \leq d\sqrt{\lambda_{\max}(\nabla^2\cR_t(a_t))}\min\left\{2, L^{\sigma}|c_t(a_t, a'_t)|\right\}.$$ 
If we use $\lambda^*_R:= \sup_{a \in \cA}\lambda_{\max}\left(\nabla^2\cR(a)\right)$, then we have
\begin{align*}
    \lambda_{\max}(\nabla^2 \cR_t(a_t)) & = \lambda_{\max}(\nabla^2 \cR(a_t)) + \lambda \eta_{\rho}t + 2\phi
    \leq  \lambda^*_R + 2\phi + \lambda \eta_{\rho}t.
\end{align*}

Therefore we have
\begin{align}
    & \bE\left\{\sum^T_{t=1}b^{\top}_t(a_t - a^*_T)\right\}  \leq \bE\left\{\sum^T_{t=1}\|b_t\|_2\|a_t - a^*_T\|_2\right\} \nonumber \\
    & \leq \bE\left\{\sum^T_{t=1}(d\sqrt{\lambda_{\max}(\nabla^2\cR_t(a_t))} \min \left(2, L^{\sigma}|c_t(a_t, a'_t)|\right))\|a_t - a^*_T\|_2\right\} \nonumber\\
    & \leq \bE\left\{\sum^T_{t=1}(d\sqrt{\lambda^*_R + 2\phi + \lambda \eta_{\rho}t} \min \left(2, L^{\sigma}|c_t(a_t, a'_t)|\right))\|a_t - a^*_T\|_2\right\} \nonumber\\
    & \leq \bE\left\{\sum^T_{t=1}(d\sqrt{\lambda^*_R + 2\phi}  \min \left(2, L^{\sigma}|c_t(a_t, a'_t)|\right)\|a_t - a^*_T\|_2\right\} + \bE\left\{\sum^T_{t=1}(d\sqrt{\lambda \eta_{\rho}t} \min \left(2, L^{\sigma}|c_t(a_t, a'_t)|\right)\|a_t - a^*_T\|_2\right\} \nonumber\\
    & \leq 2RdL^{\sigma}\sqrt{\lambda^*_R + 2\phi}T^{\rho} + d\sqrt{\lambda \eta_{\rho}}\bE\left\{\sum^T_{t=1}\sqrt{t} \min \left(2, L^{\sigma}|c_t(a_t, a'_t)|\right)\|a_t - a^*_T\|_2\right\} \label{SE1}\\
    & \leq 2RdL^{\sigma}\sqrt{\lambda^*_R + 2\phi}T^{\rho} + 2d\sqrt{\lambda\eta_{\rho}T} + d\sqrt{\lambda \eta_{\rho}}\bE\left\{\sum^T_{t=1}\sqrt{t} \min \left(2, L^{\sigma}|c_t(a_t, a'_t)|\right)\|a_t - a^*\|_2\right\} \label{SE2}\\
    & \leq 2RdL^{\sigma}\sqrt{\lambda^*_R + 2\phi}T^{\rho} + 2d\sqrt{\lambda\eta_{\rho}T} \nonumber\\
    & + d\sqrt{\lambda \eta_{\rho}}\bE\left\{\sum^T_{t=1}\sqrt{t} \min \left(2, L^{\sigma}|c_t(a_t, a'_t)|\right)\min\left(2R, \sqrt{\frac{2}{\alpha} \left(\mu(a^*) - \mu(a_t)\right)}\right)\right\}  \label{SE3}.
\end{align}
We get \eqref{SE1} by using the fact that $\|a_t - a^*_T\|_2 \leq 2R$, $\sum^T_{t=1}|c_t(a_t, a'_t)| \leq T^{\rho}$. We get \eqref{SE2} the definition of $a^*_T$, and the fact $\sqrt{a + b} \leq \sqrt{a} + \sqrt{b}$. Using the $\alpha$-strong convexity of $\mu$, we have $ \|a_t - a_*\|_2 \leq \min\left(2R, \sqrt{\frac{2}{\alpha} \left(\mu(a^*) - \mu(a_t)\right)}\right)$ we get the last inequality, which completes the proof.
\end{proof}

\textbf{Step 4: The \emph{iterative regret refinement} analysis and resultant optimal learning rate} 

\begin{relemma}[Lemma \ref{lemma:bound} restated]
If \(A = \bE\left(\sum^T_{t=1}[\mu(a^*) - \mu(a_t)]\right) \leq \cO(T^{\frac{1}{2} + \psi}) \) for some \(\psi \in [0, \frac{1}{2})\), then we must   have \(B = \bE\left(\sum^T_{t=1}t^{\rho-\frac{1}{2}}\sqrt{\mu(a^*) - \mu(a_t)}\right) \leq  \cO(T^{-\frac{1}{4} + \frac{\psi}{2} + \rho})\).
\end{relemma}
\begin{proof}
To prove Lemma \ref{lemma:bound}, it is equivalent to prove the following Lemma \ref{lemma:bound2}. If Lemma \ref{lemma:bound2} holds, let \(n_t = \sqrt{\bE(\mu(a^*) - \mu(a_t))}\), \(m_t = \min \left(2, L^{\sigma}|c_t(a_t, a'_t)|\right)\), proves Lemma \ref{lemma:bound}.
\begin{lemma}\label{lemma:bound2}
    Consider $\sum^T_{t=1}n^2_t \leq C'T^{\frac{1}{2} + \alpha}$, with constants $\alpha \geq 0$, $C' > 0$. In addition, $n_t$ is uniformly upper bounded by a constant $K$. Specifically, we have $0 \leq n_t \leq K, \forall t$. $m_t \leq c_kt^{\rho-1}$ with constant $c_k \geq 0$. Then we have $\sum^T_{t=1}\sqrt{t}m_tn_t \leq  5 \sqrt{C'} c_{k}T^{-\frac{1}{4} + \frac{\alpha}{2} + \rho}$. 
\end{lemma}
\begin{proof}
    
Using Abel's equality (Lemma \ref{lemma:abel}), we have
\begin{equation*}
    \sum^T_{t=1}\sqrt{t}m_tn_t \leq \sqrt{T}\left(\sum^T_{t=1}m_tn_t\right)
\end{equation*}
Since we have $\sum^T_{t=1}n^2_t \leq C'T^{\frac{1}{2} + \alpha}$ and $0 \leq n_t \leq K, \forall t$, we have $\sum^T_{t=1}n_t \leq \sqrt{C'} T^{\frac{3}{4} + \frac{\alpha}{2}}, \forall t$.  Moreover, for $t \geq 1$, $t^{-1 + \rho} - (t+1)^{-1 + \rho} \leq t^{\rho - 2}$ holds for all $t \geq 1$. This is because
\begin{equation*}
    f(t) = t^{-1 + \rho}\left((1 +\frac{1}{t})^{-1 + \rho} + \frac{1}{t} -  1\right).
\end{equation*}
is decreasing over $t \geq 1$ and $\lim_{t \rightarrow \infty} f(t) = 0$. Consequently, applying Abel's Summation Equation again, we have
\begin{align*}
    \sum^T_{t=1}m_tn_t &=  \left(\sum^T_{t=1}n_t\right)m_T + \sum^{T-1}_{t=1}\left(\sum^t_{i=1}n_i\right)(m_t - m_{t+1})\\
    & \leq \sqrt{C'}c_{k}T^{\frac{3}{4}+\frac{\alpha}{2}}T^{-1 + \rho} + \sqrt{C'}c_{k} \sum^{T-1}_{t=1}\left(\sum^t_{i=1}n_i\right) \left(t^{-1+ \rho} - (t+1)^{-1 + \rho}\right)\\
    & \leq \sqrt{C'}c_{k}T^{\frac{3}{4}+\frac{\alpha}{2}}T^{-1 + \rho} + \sqrt{C'}c_{k} \sum^{T-1}_{t=1}\left(\sum^t_{i=1}n_i\right)t^{-2 + \rho}\\
    & \leq \sqrt{C'}c_{k}T^{\frac{3}{4}+\frac{\alpha}{2}}T^{-1 + \rho} + \sqrt{C'}c_{k} \sum^{T-1}_{t=1}t^{\frac{3}{4} + \frac{\alpha}{2}}t^{-2 + \rho}\\
    & \leq 5 \sqrt{C'} c_{k}T^{-\frac{1}{4} + \frac{\alpha}{2} + \rho}.
\end{align*}
Since $0 \leq n_t \leq K$, $\sum^t_{i=1}n_i$ is increasing and the increasing rate is upper bound by $t$. Together with the constraint $\sum^T_{t=1}n_t \leq \sqrt{C'}T^{\frac{3}{4} + \frac{\alpha}{2}}$, $\sum^{T-1}_{t=1}\left(\sum^t_{i=1}n_i\right)t^{-\frac{1}{2} + \rho}$ is optimized when $\sum^t_{i=1}n_i$ increasing in the speed of $\sqrt{C'}t^{\frac{3}{4} + \frac{\alpha}{2}}$. Because of this, the second last inequality holds. 
\begin{lemma}\label{lemma:abel}
    \textbf{(Abel's Summation Equation \citep{williams1963class}).} For any numbers $a_k$, $b_k$, we have
    \begin{equation*}
        \sum^n_{k=1}a_kb_k = \left(\sum^n_{k=1}b_k\right)a_n + \sum^{n-1}_{k=1}\left(\sum^{k}_{i=1}b_i\right)(a_k - a_{k+1}).
    \end{equation*}
\end{lemma}
\end{proof}
\end{proof}
After establishing Lemma \ref{lemma:bound}, we can use it to prove the induction claim (Lemma \ref{lemma:induction}). The challenge in this proof lies in identifying the constant \(K\), which must hold across all iterations of the analysis to ensure the bound remains stable and does not diverge. This requires careful refinement of the analysis and calculations.
\begin{relemma}[Lemma \ref{lemma:induction} restated]
Consider $T > \sqrt{2L^{\mu}L^{\sigma}_1R}$ and \(\eta_{\rho}  = \frac{\sqrt{\log T}}{dT^{\max\{1/2, \rho\}}}\). If $\text{Reg}_T \leq 144d K \sqrt{\log T}T^{\rho + \frac{\frac{3}{2} - \rho}{2^{k}}}$ for some integer $k$, then we must also have  $\text{Reg}_T \leq 144 d K \sqrt{\log T} T^{\rho + \frac{\frac{3}{2} - \rho}{2^{k+1}}} $. $K$ is a instance-dependent constant, where $K:= \max\left\{C_0, \frac{C_2C_{\kappa}}{\sqrt{\log T}}, (L^{\sigma}T^{\rho}_{\kappa})^2, 2\sqrt{\lambda}RC_{\kappa}L^{\sigma} \right\}$, \(C_0\) and \(C_2\) defined in Lemma \ref{lemma:regret decomposition} and \ref{lemma: estimation error} respectively,   
\end{relemma}
\begin{proof}
In the following, we denote \[m_t := \min\left(2, L^{\sigma}|c_t(a_t, a'_t)|\right), n_t := \min\left(2R, \sqrt{\frac{2}{\alpha}\left(\mu(a^*) - \mu(a_t)\right)}\right)\]
Continue from \eqref{SE3}, we have the following.

\textbf{The Base Case:} When $k=1$, let $C_3 = C_2dT^{\rho} + 2d\sqrt{\lambda \eta_{\rho}T}$, we have
\begin{align}
\text{Reg}_T  & \leq \frac{C_0}{\eta_{\rho}}\log(T) + 8d^2\eta_{\rho}T + 2L^{\mu}L^{\sigma}_1R + 2C_3 + 2d\sqrt{\lambda \eta_{\rho}}\bE\left(\sum^T_{t=1}\sqrt{t}m_tn_t\right). \label{B1}\\
   & \leq \frac{C_0}{\eta_{\rho}}\log(T) + 8d^2\eta_{\rho}T + 2L^{\mu}L^{\sigma}_1R + 2C_3 + 4dR\sqrt{\lambda \eta_{\rho}}\bE\left\{\sum^T_{t=1}\sqrt{t}m_t\right\} \label{B2}\\
    & \leq \frac{C_0}{\eta_{\rho}}\log(T) + 8d^2\eta_{\rho}T + 2L^{\mu}L^{\sigma}_1R + 2C_3 + 4dR\sqrt{\lambda \eta_{\rho} T} \left(\sum^T_{t=1}m_t\right)\label{B3}\\
    & \leq  \frac{C_0}{\eta_{\rho}}\log(T) + 8d^2\eta_{\rho}T + 2L^{\mu}L^{\sigma}_1R + 2C_3 + 4dL^{\sigma}R\sqrt{\lambda \eta_{\rho}} C_{\kappa}T^{ \rho}\label{B4}.
\end{align}
We get \eqref{B1} according to Lemma \ref{lemma:regret decomposition} and \ref{lemma: estimation error}. We get \eqref{B2} because of $\bE\left(n_t\right) \leq 2R$. We get \eqref{B3} by Abel's Summation Equation (see Lemma \ref{lemma:abel}). We get \eqref{B4} because of corruption budget. Choosing the learning rate $\eta_\rho$ according to Theorem \ref{proposition:matching_upper_bound},  we obtain $\text{Reg}_T \leq 144\sqrt{\log T}K dT^{\frac{\rho + 1}{2}}$. This confirms the validity of the claim when $k=1$.\\

\textbf{The Induction Argument:} Let's assume that the claim holds true for a general step $k$. In the following, we will demonstrate that the claim also holds true for step $k+1$. Because of Equation \eqref{relation} and the induction claim, we obtain $\text{Reg}^{\text{FO}}_T \leq \frac{144}{l^{\sigma}_1}dK\sqrt{\log T} T^{\rho + \frac{3/2 - \rho}{2^{k}}}.$ This suggests that 
\begin{equation*}
    \sum^T_{t=1}\bE\left(\mu(a^*)-\mu(a_t)\right) \leq \frac{144}{l^{\sigma}_1}dK\sqrt{\log T} T^{\rho + \frac{3/2 - \rho}{2^{k}}}.
\end{equation*}
By Jensen's inequality, we have
$$\bE(n_t) = \min\left\{2, \sqrt{\frac{2}{\alpha}}\bE\left(\sqrt{\mu(a^*)-\mu(a_t)}\right)\right\} \leq \sqrt{\frac{2}{\alpha}}\sqrt{\bE\left(\mu(a^*)-\mu(a_t)\right)}.$$
Therefore we have
\begin{equation*}
    \bE\left(\sum^T_{t=1}\sqrt{t}m_tn_t\right) \leq \sqrt{\frac{2}{\alpha}}\sum^T_{t=1}\sqrt{t}m_t\sqrt{\bE\left(\mu(a^*)-\mu(a_t)\right)}.
\end{equation*}
By the definition of \(\rho\)-Imperfect Human Feedback, we have $|c_t(a_t, a'_t)|$ is upper bounded by $C_{k}t^{-1 + \rho}$. 

Therefore, using Lemma \ref{lemma:bound} we can upper bound $\sqrt{\lambda}\sum^T_{t=1}\sqrt{t}m_tn_t$ by 
\begin{equation*}
    \sqrt{\lambda}\sum^T_{t=1}\sqrt{t}m_tn_t \leq 60L^{\sigma}\sqrt{\frac{2\lambda}{\alpha l^{\sigma}_1}}(\log T)^{0.25}C_{\kappa}\sqrt{dK}T^{\frac{3}{2} \rho + \frac{3/2 - \rho}{2^{k+1}}}
\end{equation*}
Since $\lambda \leq \alpha l^{\sigma}_1 /2$, choosing $\eta_{\rho} = \frac{1}{d}\frac{\sqrt{\log T}}{T^{\rho}}$, we have
\begin{align*}
     \text{Reg}_T & \leq 2d\left(C_0 + \frac{C_2}{\sqrt{\log T}} + 60C_\kappa \sqrt{K}T^{\frac{3/2-\rho}{2^{k+1}}}\right)\sqrt{\log T}T^{\rho} + 12Kd\sqrt{\log T}T^{\rho} \\
     & \leq 144 dK\sqrt{\log T}T^{\rho + \frac{3/2-\rho}{2^{k+1}}}.
\end{align*}
The claim holds true for step $k+1$. Therefore, it implies that the claim holds true for all k. Repeating this process infinitely many times, we obtain $$\text{Reg}_T \leq \cO\left(d\sqrt{\log T}T^{ \rho}\right),$$
which completes the proof when $\rho \in [0.5, 1]$.
\end{proof}
\end{proof}

\section{Proof for Proposition \ref{theorem:regret_upper_bound_unknown}}\label{proof:theorem:regret_upper_bound_unknown}

\begin{reproposition}[Proposition \ref{theorem:regret_upper_bound_unknown} restated]
If the total corruption level satisfies \(\sum^{T}_{t=1}|c_t(a_t, a'_t)| \leq \cO(T^{\rho})\), then for any $\alpha \in [\frac{1}{2}, 1)$, if the utility function is strongly concave, for sufficiently large round \(T\), by choosing learning rate $\eta_{\rho} := \frac{\sqrt{\log T}}{2d}T^{-\alpha}$
\begin{enumerate}
    \item (Robustness) RoSMID incurs $\text{Reg}_T \leq \Tilde{\cO}(dT^{\alpha} + \sqrt{d}T^{\frac{1}{2}(1 - \alpha) + \rho} + dT^{\rho})$.
    \item (Efficiency) There exists a strongly concave utility function $\mu$ and a link function $\sigma$ such that $\text{Reg}_T$ suffered by RoSMID is at least $\Omega\left(T^{\alpha}\right)$ in scenario without corruption.
\end{enumerate}
\end{reproposition}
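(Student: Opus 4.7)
For the robustness claim, the plan is to apply Lemma \ref{lemma:regret decomposition} and Lemma \ref{lemma: estimation error} directly, but \emph{without} invoking the iterative regret refinement of Step 4 (Lemmas \ref{lemma:bound} and \ref{lemma:induction}), since that step crucially uses the per-round decay $|c_t| \leq C_\kappa t^{\rho-1}$, which is unavailable here. Under just the aggregate budget $\sum_{t=1}^T |c_t(a_t, a'_t)| \leq \cO(T^{\rho})$, the plan is to bound the factor $n_t$ appearing in display (SE3) of the proof of Lemma \ref{lemma: estimation error} by the trivial $2R$ rather than by $\sqrt{(2/\alpha)(\mu(a^*)-\mu(a_t))}$. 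This decouples the feedback error from $\text{Reg}_T$ itself and removes the need for induction, at the unavoidable price of the $\sqrt{d}\,T^{(1-\alpha)/2+\rho}$ summand.

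After that substitution, the feedback error is at most $2Rd L^{\sigma}\sqrt{\lambda \eta_{\rho}}\,\bE[\sum_t \sqrt{t}\, m_t] + 2C_2 d T^\rho + 4d\sqrt{\lambda \eta_\rho T}$, where $m_t := \min(2, L^{\sigma}|c_t(a_t, a'_t)|)$. Using $\sqrt{t}\leq \sqrt{T}$ and $\sum_t m_t \leq L^\sigma C_\kappa T^\rho$ gives a bound of order $\cO(d\sqrt{\lambda \eta_\rho T}\, T^\rho + dT^\rho + d\sqrt{\eta_\rho T})$. Combining this with the regret of decision $\frac{C_0 \log T}{\eta_\rho} + 8 d^2 \eta_\rho T + 2L^\mu L^\sigma_1 R$ from Lemma \ref{lemma:regret decomposition} and substituting $\eta_\rho = \sqrt{\log T}/(2d)\cdot T^{-\alpha}$, which yields $1/\eta_\rho = \Theta(d T^\alpha /\sqrt{\log T})$, $\eta_\rho T = \Theta(\sqrt{\log T}\,T^{1-\alpha}/d)$, and $\sqrt{\eta_\rho T} = \Theta((\log T)^{1/4}\,T^{(1-\alpha)/2}/\sqrt{d})$, produces the three order terms $dT^\alpha$, $dT^{1-\alpha}$, $\sqrt{d}\,T^{(1-\alpha)/2+\rho}$, and $dT^\rho$ (up to logarithms). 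For $\alpha \in [\tfrac{1}{2},1)$ one has $T^{1-\alpha}\leq T^\alpha$, so the $dT^{1-\alpha}$ term is absorbed into $dT^\alpha$, establishing $\text{Reg}_T \leq \Tilde{\cO}(dT^\alpha + \sqrt{d}T^{(1-\alpha)/2+\rho} + dT^\rho)$.

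For the efficiency claim, the plan is to exhibit an explicit one-dimensional instance on which RoSMID with learning rate $\eta_\rho = \sqrt{\log T}/(2d)\cdot T^{-\alpha}$ is provably slow even with no corruption. Take $d=1$, $\cA = [-R,R]$ with a standard self-concordant barrier $\cR(a) = -\log(R^2 - a^2)$, and the strongly concave utility $\mu(a) = -(a-a^*)^2/2$ with $a^* = R/2$, so that $a^*$ is separated from the initial point $a_1 = \arg\min_a \cR(a) = 0$ by $D = R/2$. Using the update rule $a_{t+1} = \nabla \cR_t^{-1}(\nabla \cR_t(a_t) - \eta_\rho \hat g_t)$ together with (i) $|\hat g_t| \leq \sqrt{\lambda_{\max}(\nabla^2 \cR_t(a_t))}$ and (ii) the lower bound $\lambda_{\min}(\nabla^2 \cR_t) \geq 2\phi$ coming from the $\phi \|a\|^2$ term in the definition of $\cR_t$, one gets a uniform per-step primal bound $|a_{t+1}-a_t|\leq C\eta_\rho$ for some instance-dependent constant $C$. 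Hence for at least the first $t^* = \Theta(D/\eta_\rho) = \Omega(T^\alpha)$ rounds one has $|a_t - a^*|\geq D/2$, so both $\mu(a^*)-\mu(a_t)$ and $\mu(a^*)-\mu(a'_t)$ are bounded below by a positive constant, and $\sigma(\mu(a^*)-\mu(a_t)) + \sigma(\mu(a^*)-\mu(a'_t))-1 \geq \Omega(1)$ per round. Summing gives $\text{Reg}_T \geq \Omega(T^\alpha)$ as claimed.

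The main obstacle will be the efficiency part: concretely, controlling $|a_{t+1}-a_t|$ uniformly in $t$ despite $\cR_t$ being time-varying and the update taking place in dual space. The key enabler is that the tuning condition $\phi \geq ((L^\sigma_1)^3 L^\sigma_2/\lambda)^2$ already forces $\nabla^2 \cR_t \succeq 2\phi I$, which gives a clean $\cO(\eta_\rho)$ primal step size bound interior to $\cA$; the rest of the argument is a straightforward ``cannot travel distance $D$ in fewer than $D/\eta_\rho$ steps'' style lower bound, mirroring analogous algorithm-dependent efficiency lower bounds for first-order methods.
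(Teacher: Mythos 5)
Your robustness argument is essentially the paper's own proof: the paper likewise takes the intermediate displays of Lemma \ref{lemma: estimation error}, bounds $\|a_t-a^*_T\|_2\leq 2R$ instead of invoking strong concavity, uses $\sum_t\sqrt{t}\,|c_t|\leq\sqrt{T}\sum_t|c_t|\leq\cO(\sqrt{T}\,T^{\rho})$, and plugs the result into Lemma \ref{lemma:regret decomposition} with $\eta_{\rho}=\tfrac{\sqrt{\log T}}{2d}T^{-\alpha}$ to get exactly the three terms you list (the $dT^{1-\alpha}$ term being absorbed for $\alpha\geq\tfrac12$). Nothing to add there.

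For the efficiency claim your route is genuinely different. The paper argues \emph{indirectly}: it proves a masking lemma (Lemma \ref{lemma:bound_strong}) showing that any algorithm with uncorrupted regret $c_0T^{\alpha}$ on strongly concave instances can be driven to linear regret by an agnostic corruption budget $\sqrt{c_0}\,T^{(1+\alpha)/2}$; if RoSMID had regret $o(T^{\alpha})$ without corruption, a budget strictly below $T^{(1+\alpha)/2}$ would force linear regret, contradicting the robustness statement (which guarantees sublinear regret for all $\rho<\tfrac{1+\alpha}{2}$). That argument is short and exposes the trade-off as a logical consequence of robustness itself, but it routes through a linear link function and a reward-feedback reduction. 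Your argument is a direct, constructive ``cannot travel distance $D$ in fewer than $D/\eta_{\rho}$ steps'' bound, which is more self-contained and makes the mechanism (a deliberately small learning rate) explicit. It is workable, but two caveats. First, the quantitative conclusion: since $1/\eta_{\rho}=2dT^{\alpha}/\sqrt{\log T}$, your travel bound yields $t^*=\Theta(T^{\alpha}/\sqrt{\log T})$ rounds of constant per-round regret, i.e.\ $\Omega(T^{\alpha}/\sqrt{\log T})$ rather than the stated $\Omega(T^{\alpha})$ --- a $\sqrt{\log T}$ shortfall you should either absorb into the statement or remove by tracking constants. Second, the per-step bound $|a_{t+1}-a_t|\leq C\eta_{\rho}$ is \emph{not} uniform in $t$: $\nabla^2\cR_t$ contains the term $\lambda\eta_{\rho}tI$, so $\|\hat g_t\|\leq d\sqrt{\lambda^*_R+2\phi+\lambda\eta_{\rho}t}$ grows with $t$, and the barrier Hessian $\nabla^2\cR(a_t)$ is only bounded while $a_t$ stays away from $\partial\cA$. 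Both issues are repairable --- restricted to $t\leq t^*=\Theta(1/\eta_{\rho})$ one has $\eta_{\rho}t=\cO(1)$, and a short induction keeps the iterates in a fixed interior region --- but they must be stated, since as written the ``uniform $C\eta_{\rho}$ bound'' is false over the full horizon. The paper's contradiction argument sidesteps all of this at the cost of being non-constructive and dependent on the robustness half.
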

\subsection{Proof for Robustness Statement in Proposition \ref{theorem:regret_upper_bound_unknown}}
\begin{proof}
From Lemma \ref{lemma: estimation error}, we know
\begin{align*}
     \bE\left\{\sum^T_{t=1}b^\top_t(a_t - a^*_T)\right\} & \leq 2RdL^{\sigma}\sqrt{\lambda^*_R + 2\phi}T^{\rho} + d\sqrt{\lambda \eta_{\rho}}\bE\left\{\sum^T_{t=1}\sqrt{t} \min \left(2, L^{\sigma}|c_t(a_t, a'_t)|\right)\|a_t - a^*_T\|_2\right\}\\
     & \leq 2RdL^{\sigma}\sqrt{\lambda^*_R + 2\phi}T^{\rho} + 2Rd\sqrt{\lambda \eta_{\rho}}L^{\sigma}\sum^T_{t=1}\sqrt{t}|c_t(a_t, a'_t)|\\
     & \leq 2RdL^{\sigma}\sqrt{\lambda^*_R + 2\phi}T^{\rho} + 2Rd\sqrt{\lambda \eta_{\rho}T}L^{\sigma}T^{\rho}.
\end{align*}
We get the second inequality by using the fact that the diameter of the action space of $R$. We get the last inequality by using Abel Summation Equation (see Lemma \ref{lemma:abel}). Therefore, by Lemma \ref{lemma:regret decomposition}, we have
\begin{align*}
    \text{Reg}_T \leq \frac{C_0}{\eta_{\rho}}\log(T) + 8d^2\eta_{\rho}T + 2L^{\mu}L^{\sigma}_1R + 4RdL^{\sigma}\sqrt{\lambda^*_R + 2\phi}T^{\rho}+ 4Rd\sqrt{\lambda \eta_{\rho}T}L^{\sigma}T^{\rho}.
\end{align*}
Choosing $\eta_{\rho} = \frac{\sqrt{\log T}}{2d}T^{-\alpha}, \alpha \in [0.5, 1)$, we have
\begin{align*}
    \text{Reg}_T  & \leq 2dC_0\sqrt{\log T}T^{\alpha} + 4d\sqrt{\log T}T^{1-\alpha} + 4RdL^{\sigma}\sqrt{\lambda^*_R + 2\phi}T^{\rho} + 4R\sqrt{d}L^{\sigma}(\log T)^{0.25}T^{\frac{1}{2}(1-\alpha)}T^{\rho} + 2L^{\mu}L^{\sigma}_1R\\
    & \leq O\left(d\sqrt{\log T}T^{\alpha} + \sqrt{d}(\log T)^{\frac{1}{4}} T^{\frac{1}{2}(1 - \alpha)}T^{\rho} + dT^{\rho}\right),
\end{align*}
which completes the proof of robustness statement.
\end{proof}
\subsection{Proof for Efficiency Statement in Proposition \ref{theorem:regret_upper_bound_unknown}}
\begin{proof}
The robustness statement in Proposition \ref{theorem:regret_upper_bound_unknown} implies that RoSMID could afford agnostic corruption level $\cO(T^{\frac{1+\alpha}{2}})$. If Lemma \ref{lemma:bound_strong} holds, this implies that there exists a hard instance which makes RoSMID incur regret in order of $\Omega(T^{\alpha})$ in non-corrupt setting. We can prove this by contradiction. Assume the efficiency statement in Proposition \ref{theorem:regret_upper_bound_unknown} is not true, which implies that for all \(\mu\), and \(\sigma\), there exists an \(\epsilon > 0\) such that \(\text{Reg}_T = c_0T^{\alpha - \epsilon}\), for some constant \(c_0 > 0\), we assume \(\epsilon\) to be the least possible. In this scenario, when we consider a linear link function \(\sigma(x) = \frac{1 + x}{2}\), Lemma \ref{lemma:bound_strong} implies that an agnostic corruption budget \(2\sqrt{c_0}T^{\frac{1+\alpha-\epsilon}{2}}\) suffices to induce linear regret, which contradicts the robustness statement in Proposition \ref{theorem:regret_upper_bound_unknown}.

\begin{lemma}\label{lemma:bound_strong}
    For any algorithm \(\cG\) incurs regret \(\text{Reg}_T \leq c_0 T^{\alpha}\), for some constant \(c_0 > 0\), on learning from bandit with reward feedback for strongly concave utilities \(\mu\) in non-corrupted setting, then there exists an instance \(\mu\) such that \(\cG\) will suffer linear regret under agnostic corruption with budget \(\sqrt{c_0} T^{\frac{1 + \alpha}{2}}\).
\end{lemma}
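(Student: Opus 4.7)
The plan is to exhibit two strongly concave instances that look identical to algorithm $\cG$ under the allowed corruption budget, yet have well-separated optima; the adversary's job is simply to rewrite the rewards of one instance into rewards of the other.

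I would fix $\mu_1(a) = -\tfrac12\|a\|_2^2$, whose optimum is $a_1^\ast = 0$, and consider the perturbed instance $\mu_2(a) = -\tfrac12\|a\|_2^2 + \theta^\top a$ with $\theta$ a constant-norm vector (with $\|\theta\|_2 = 1/\sqrt{2}$ for a clean constant, to be tuned at the end), whose optimum is $a_2^\ast = \theta$. Both are $1$-strongly concave. Now suppose the true instance is $\mu_2$ but, at every round $t$, the adversary adds the corruption $c_t := \mu_1(a_t) - \mu_2(a_t) = -\theta^\top a_t$ to the reward. The corrupted reward equals $\mu_2(a_t) + c_t + \xi_t = \mu_1(a_t) + \xi_t$, which is a clean reward from $\mu_1$. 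Since $\cG$ is a deterministic (or coupled-randomness) function of the rewards it observes, the sequence $\{a_t\}$ it plays against corrupted $\mu_2$ is \emph{identical in distribution} to the one it would play against uncorrupted $\mu_1$.

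By the hypothesis, the $\mu_1$-regret of this action sequence is at most $c_0 T^\alpha$. Strong concavity of $\mu_1$ with optimum $0$ gives $\tfrac12 \sum_{t=1}^T \|a_t\|_2^2 \le c_0 T^\alpha$, and Cauchy--Schwarz yields $\sum_{t=1}^T \|a_t\|_2 \le \sqrt{T \sum_t \|a_t\|_2^2} \le \sqrt{2 c_0}\, T^{(1+\alpha)/2}$. Consequently the total corruption used satisfies
\begin{equation*}
\sum_{t=1}^T |c_t| \;=\; \sum_{t=1}^T |\theta^\top a_t| \;\le\; \|\theta\|_2 \sum_{t=1}^T \|a_t\|_2 \;\le\; \|\theta\|_2\,\sqrt{2 c_0}\, T^{(1+\alpha)/2},
\end{equation*}
which, for $\|\theta\|_2 = 1/\sqrt{2}$, is exactly the budget $\sqrt{c_0}\, T^{(1+\alpha)/2}$ claimed in the lemma.

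It remains to lower bound the true (i.e.\ $\mu_2$) regret of the same action sequence. Strong concavity of $\mu_2$ about $\theta$ gives
\begin{equation*}
\mathrm{Reg}_T \;\ge\; \tfrac12 \sum_{t=1}^T \|a_t - \theta\|_2^2 \;=\; \tfrac12\Bigl(T\|\theta\|_2^2 - 2\theta^\top \sum_{t=1}^T a_t + \sum_{t=1}^T \|a_t\|_2^2\Bigr) \;\ge\; \tfrac12 T\|\theta\|_2^2 - \|\theta\|_2 \sqrt{2 c_0}\, T^{(1+\alpha)/2},
\end{equation*}
and since $\alpha < 1$, the first term dominates, giving $\mathrm{Reg}_T = \Omega(T)$ on the instance $\mu_2$. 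The only subtle step is ensuring that $\cG$'s action distribution genuinely matches between the two experiments; this is a standard coupling argument (condition on shared random bits and on the noise $\xi_t$), so there is no real obstacle. The main thing to get right is balancing $\|\theta\|_2$ against the budget requirement, which is why a constant-norm perturbation along a fixed direction is the right choice rather than a shrinking one.
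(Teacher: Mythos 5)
Your proposal is correct and follows essentially the same route as the paper's proof: a two-instance indistinguishability argument in which the corruption rewrites the true instance's rewards into those of a second strongly concave instance, and the assumed clean-world bound $c_0T^{\alpha}$ is used both to cap the total corruption $\sum_t|c_t|\leq \|\theta\|_2\sum_t\|a_t\|_2 = \cO(\sqrt{c_0}\,T^{(1+\alpha)/2})$ via Cauchy--Schwarz and to force the played actions to concentrate near the wrong optimum, yielding $\Omega(T)$ regret on the true instance. The only differences are cosmetic -- you use a dimension-free pair $\mu_1 = -\tfrac12\|a\|_2^2$ and $\mu_2 = \mu_1 + \theta^{\top}a$ with $\|\theta\|_2$ tuned to hit the stated budget exactly, whereas the paper fixes a specific $d=2$ polytope instance -- and your write-up is, if anything, the cleaner of the two.
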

\begin{proof}
    Consider the scenario when \(d = 2\), the action space $\cA := \{(a_1, a_2): a_1 \geq 0, a_2 \geq 0, a_1 + a_2 \leq  1\}$, the utility function \(\mu_{\theta}(a) = \langle \theta, a\rangle - \frac{1}{2}\|a\|^2_2\) with \(\theta := [1, 0]\). In this scenario, the optimal action is \(a^{*} := [1, 0]\). Assume that the adversary want to ``fool'' the algorithm \(\cG\) to make it believe the \(\theta := [1, 1]\), with optimal arm \(a^* = [\frac{1}{2}, \frac{1}{2}]\), each round, the corruption it should introduce is \(c_t(a_t) = a_{t2}\), where \(a_{ti}\) is the \(i\)-th coordinate value of action \(a_t\). To make reward indistinguishable from these two environments, the total corruption budget which the adversary has to pay is \(\sum^T_{t=1}|a_{t2}|\). We know that \(\cG\) incurs regret \(\text{Reg}_T := \sum^T_{t=1}\langle \theta, a^* - a_t\rangle - \frac{1}{2}\|a^*\|^2_2 + \frac{1}{2}\|a_t\|^2_2 = \frac{1}{2} - a_{t1} +  \frac{1}{2}a^2_{t1} + \frac{1}{2}a^2_{t2} \leq c_0T^{\alpha}\). Given this constraint, the maximum possible corruption paid by the adversary to make \(\theta = [1, 1]\) and \(\theta = [1, 0]\) indistinguishable is \(\max \sum^T_{t=1}|a_{t2}| = \sqrt{c_0}T^{\frac{\alpha + 1}{2}}\). Therefore, for any agnostic corruption no less than \(\sqrt{c_0}T^{\frac{\alpha + 1}{2}}\), \(\cG\) can not distinguish \(\theta=[1,0]\) and \(\theta=[1,1]\) and has to incur linear regret in either of these environments. 
\end{proof}
\end{proof}

\section{Proof for Proposition \ref{theorem:regret_upper_bound_DBGD}:}\label{proof:theorem:regret_upper_bound_DBGD}
In this section, we present the regret upper for dueling bandit gradient descent \cite{Yue2009InteractivelyOI} in presence of \emph{arbitrary} and \emph{agnostic} corruption. Before presenting the proof, we make several remarks. First, in \cite{Yue2009InteractivelyOI}, the utility function $\mu$ is assumed to be strictly concave to ensure the uniqueness of the maximizer. However, in scenarios where the maximizer is unique within the \emph{action} set for a concave utility function $\mu$, the requirement for ``strictness" can be relaxed. Second, we highlight the notation differences. We use $a$ to denote actions while \cite{Yue2009InteractivelyOI} uses $w$. In our proof, we define $P_t(a) := \sigma\left(\mu(a_t) - \mu(a)\right)$ to represent the probability of the event $a_t \succ a$. While in \cite{Yue2009InteractivelyOI}, it is denoted as $c_t(w)$. Just to highlight, its corrupted version $\hat P_t(a): = \sigma\left(\mu(a_t) - \mu(a) + c_t(a_t, a)\right)$ is newly introduced in our paper. Moreover, $\bar P_t(a): = \bE_{x \in \bB}\left[P_t(\cP_{\cA}(a + \delta x))\right]$ is denoted as $\hat \epsilon_t(w) + \frac{1}{2}$ in \cite{Yue2009InteractivelyOI}. The definition of $\cF\left(\cP_{\cA}\left(a_t + \delta u_t\right), a_t\right)$ is same as $X_t\left(\cP_{W}(w_t + \delta u_t)\right)$ defined in \cite{Yue2009InteractivelyOI}, except different notation. We remark $\cP_{\cA}(a)$ represent the projection of action $a$ on $\cA$.

\begin{algorithm}[H]\label{algo:DBGD}
\caption{Dueling Bandit Gradient Descent (DBGD) \citep{Yue2009InteractivelyOI}}
\SetKwInput{KwInput}{Input}
\SetKwInOut{KwOutput}{Output}
\KwInput{Exploration rate $\delta$, exploitation rate $\gamma$,  initial action $a_1 = \mathbf{0} \in \cA$.}
\For{$t \in [T]$}
{
    Sample unit vector $u_t$ uniformly and set $a'_t = \cP_{\cA}\left(a_t + \delta u_t\right)$.\\
    Present action pair $(a_t, a'_t)$ to user and receive corrupted dueling feedback $\hat \cF(a'_t, a_t)$.\\
    Compute gradient $\hat g_t = -\frac{d}{\delta}\hat \cF(a'_t, a_t) u_t$.\\
    Set learning rate $\eta = \frac{\gamma \delta}{d}$ and update $a_{t+1} = \cP_\cA(a_t - \eta \hat g_t)$.
}
\end{algorithm}

\begin{reproposition}[Proposition \ref{theorem:regret_upper_bound_DBGD} restated]
If the total corruption level satisfies \(\sum^{T}_{t=1}|c_t(a_t, a'_t)| \leq \cO(T^{\rho})\), then for any $\alpha \in (0,\frac{1}{4}]$, if utility function \(\mu\) is generally concave, for a sufficiently large round \(T\), choosing $\gamma := \frac{R}{\sqrt{T}}$ and $\delta := \frac{\sqrt{2Rd}}{\sqrt{13 L^{\sigma}L^{\mu}}T^{\alpha}}$ for Algorithm \ref{algo:DBGD}
\begin{enumerate}
    \item (Robustness) $\text{Reg}_T$ incurred by Algorithm \ref{algo:DBGD} satisfies $\text{Reg}_T \leq \cO(\sqrt{d}T^{1 - \alpha}+ \sqrt{d}T^{\alpha + \rho})$.
    \item (Efficiency) There exists a linear utility function $\mu$ and a link function $\sigma$ such that $\text{Reg}_T$ suffered by Algorithm \ref{algo:DBGD} is at least $\Omega\left(T^{1-\alpha}\right)$ in scenario without corruption.
\end{enumerate} 
\end{reproposition}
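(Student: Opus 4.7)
The plan is to invoke the general regret analysis framework from Steps 1--3 of the proof of Theorem \ref{proposition:matching_upper_bound}, specializing it to DBGD's single-point FKM-style gradient estimator and projected gradient update. Concretely, a DBGD analogue of Lemma \ref{lemma:bt} gives $\bE[\hat g_t \mid a_t] = \nabla \bar P_t(a_t) + b'_t$, where $\bar P_t(a) := \bE_{x \in \bB}[P_t(\cP_\cA(a + \delta x))]$ is the ball-smoothed win-probability and the corruption bias satisfies $\|b'_t\|_2 \leq \tfrac{d}{\delta} L^\sigma |c_t(a_t, a'_t)|$ by Lipschitzness of $\sigma$. Combined with a standard projected gradient argument using nonexpansivity of $\cP_\cA$, one obtains (the analogue of Lemma \ref{lemma:regret decomposition})
\[
\sum_{t=1}^T \bE[\bar P_t(a_t) - \bar P_t(a^*)] \;\leq\; \frac{R^2}{2\eta} + \frac{\eta}{2}\sum_{t=1}^T \bE\|\hat g_t\|_2^2 + \sum_{t=1}^T \bE[b'^{\top}_t(a^* - a_t)],
\]
with $\eta = \gamma\delta/d$ and $\|\hat g_t\|_2^2 \leq d^2/\delta^2$. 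The feedback-error term is bounded by $\tfrac{2dR L^\sigma}{\delta}\sum_t |c_t| \leq \tfrac{2dR L^\sigma}{\delta}\,T^\rho$ as in Lemma \ref{lemma: estimation error}. Translating back to $\text{Reg}_T$ via the Lipschitz link ($L^\sigma$), the Lipschitz utility ($L^\mu$), and the smoothing bias $|\bar P_t - P_t| \leq L^\sigma L^\mu \delta$, one gets
\[
\text{Reg}_T \;\leq\; \cO\!\left(\tfrac{Rd\sqrt{T}}{\delta} + L^\sigma L^\mu \delta T + \tfrac{dR L^\sigma T^\rho}{\delta}\right),
\]
and plugging in $\gamma = R/\sqrt{T}$ and $\delta = \sqrt{2Rd/(13 L^\sigma L^\mu)}\, T^{-\alpha}$ balances the first two terms at order $\sqrt{d}\,T^{1-\alpha}$ (note $\alpha \leq 1/4$ implies $1/2 + \alpha \leq 1 - \alpha$) and puts the corruption term at $\sqrt{d}\,T^{\alpha + \rho}$, matching the claim.

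For the efficiency claim, the plan is to exhibit a one-dimensional linear instance in which DBGD's mandatory exploration by $\delta$ alone already incurs $\Omega(\delta T) = \Omega(T^{1-\alpha})$ regret, independently of corruption or of how fast $a_t$ evolves. Take $d=1$, $\cA = [-R, R]$, $\mu(a) = a$ so that $a^* = R$ lies on the boundary of $\cA$, and let $\sigma$ be any smooth link with $\sigma'(0) > 0$. Then $u_t \in \{-1, +1\}$ uniformly, and even in the favourable scenario $a_t = a^*$, whenever $u_t = -1$ (probability $1/2$) the query action is forced to $a'_t = R - \delta$, which contributes at least $\sigma'(0)\,\delta/2 - \cO(\delta^2)$ per round via a Taylor expansion of $\sigma$ around $0$. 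Summing over $T$ rounds yields $\text{Reg}_T \geq \Omega(\delta T) = \Omega(T^{1-\alpha})$, independent of any corruption.

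The main obstacle is bookkeeping rather than conceptual: one has to tune $\gamma$, $\delta$, and $\eta = \gamma\delta/d$ simultaneously so that the three non-corruption terms $R^2/(2\eta)$, $\eta T d^2/(2\delta^2)$, and $L^\sigma L^\mu \delta T$ all collapse to the same order $\sqrt{d}\,T^{1-\alpha}$ under the specific parameter choice given, which is precisely what pins down the constant in $\delta$ and forces the constraint $\alpha \leq 1/4$. A secondary subtlety is that, unlike the RoSMID setting, here $\mu$ is only (generally) concave rather than strongly concave, so the iterative regret refinement of Step 4 of Theorem \ref{proposition:matching_upper_bound} is unavailable; however, Step 4 was precisely the ingredient tailored to the decaying-scale $\rho$-LIHF regime and is not needed for the arbitrary and agnostic corruption setting considered here.
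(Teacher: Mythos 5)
Your robustness argument follows essentially the same route as the paper: the same bias identity for the FKM-style estimator ($b_t = \tfrac{d}{\delta}(\hat P_t(a'_t)-P_t(a'_t))u_t$, bounded via Lipschitzness of $\sigma$), the same decomposition into a decision term of order $\tfrac{Rd\sqrt{T}}{\delta}+\delta L^{\sigma}L^{\mu}T$ plus a feedback-error term bounded by $\tfrac{2RdL^{\sigma}}{\delta}T^{\rho}$, and the same parameter balancing with $\alpha\le 1/4$ ensuring $T^{1/2+\alpha}\le T^{1-\alpha}$. The one step you gloss over is the passage from the linearized regret $\sum_t \nabla\bar P_t(a_t)^{\top}(a_t-a^*)$ to $\sum_t \bar P_t(a_t)-\bar P_t(a^*)$: since $\mu$ is only concave and $\sigma$ is concave on $[0,\infty)$, the function $\bar P_t$ is not convex in $a$, so "a standard projected gradient argument" does not directly yield your displayed inequality. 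The paper handles this via Lemma 4 of \citet{Yue2009InteractivelyOI}, which supplies a relaxed convexity inequality at the cost of a multiplicative factor $\lambda = L^{\sigma}/(L^{\sigma}-\delta L^{\mu}L_2)$ and an additive $O(\delta L^{\sigma}L^{\mu})$ per round; since $\lambda = 1+O(\delta)$ for large $T$, this does not affect the asymptotics, but the step should be made explicit.

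Your efficiency argument is genuinely different from the paper's and is valid. The paper proves efficiency by contradiction: assuming $\text{Reg}_T = O(T^{1-\alpha-\epsilon})$ without corruption, it constructs two instances (differing in their action sets) that an adversary with budget $O(T^{1-\alpha-\epsilon})$ can render indistinguishable, forcing linear regret in one of them and contradicting the robustness statement. That argument has the virtue of exposing the general principle that tolerance to corruption level $C$ forces $\Omega(C)$ clean-environment regret for \emph{any} algorithm with the stated robustness guarantee. Your argument instead exploits the specific structure of DBGD: with $a^*$ on the boundary of a one-dimensional action set and a linear $\mu$, the mandatory exploration offset $\delta$ guarantees that with probability $1/2$ the probe $a'_t$ satisfies $\mu(a^*)-\mu(a'_t)\ge\delta$, so each round contributes $\Omega(\sigma'(0)\delta)$ to the dueling regret regardless of where $a_t$ sits, giving $\Omega(\delta T)=\Omega(T^{1-\alpha})$ directly. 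This is simpler, self-contained (it does not lean on the robustness upper bound or a parallel-worlds construction), and pins the lower bound concretely on the exploration radius; its cost is that it is specific to DBGD's forced-exploration mechanism rather than illustrating the general efficiency--robustness tradeoff the paper emphasizes. Both establish the stated claim.
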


\subsection{Proof for Robustness Statement in Proposition \ref{theorem:regret_upper_bound_DBGD}}
\begin{proof}
The proof of Theorem \ref{theorem:regret_upper_bound_DBGD} builds on the proof of \cite{Yue2009InteractivelyOI}, where we extend it to a setting in presence of agnostic corruption. The proof follows Step 1-3 introduced in Theorem \ref{proposition:matching_upper_bound}. Similarly, in Step 1, we decompose bias in the gradient caused by corruption. Notice that without corruption, DBGD performs expected gradient ascent, where the gradient $g_t$ is defined as follows
    \begin{equation*}
        g_t = -\frac{d}{\delta}\cF\left(\cP_{\cA}\left(a_t + \delta u_t\right), a_t\right)u_t.
    \end{equation*}
    In the presence of adversarial corruption, the corrupted gradient is
    \begin{equation*}
        \hat g_t = -\frac{d}{\delta}\hat \cF\left(\cP_{\cA}\left(a_t + \delta u_t\right), a_t\right)u_t.
    \end{equation*}
    Therefore, we quantify the bias \(\bE(b_t|a_t)\) in the following lemma.  
\begin{lemma}\label{lemma:bt_DBGD}
 Let $P_t(a)$ represent the likelihood of action $a_t$ being preferred over $a$, where $P_t(a) := \sigma(\mu(a_t)-\mu(a))$. Likewise, the corrupted version is defined as $\hat{P}_t(a) = \sigma(\mu(a_t) - \mu(a) + c_t(a_t, a))$. The smoothed version of $P_t$ over $\cA$ is $\bar P_t(a): = \bE_{x \in \bB}\left[P_t(\cP_{\cA}(a + \delta x))\right]$. Let $a'_t = \cP_{\cA}(a_t + \delta u_t)$, where $u_t$ is uniformly sampled from $\bS$ and $\hat g_t = -\frac{d}{\delta}\hat \cF\left(a'_t, a_t\right)u_t$. Let $b_t := \frac{d}{\delta}\left(\hat P_t(a'_t) - P_t(a'_t)\right)u_t$. We have $\bE(\hat g_t|a_t) = \bE(\nabla \bar P_t(a_t)|a_t) -  \bE(b_t|a_t).$
\end{lemma}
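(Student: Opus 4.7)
The plan is to mirror the decomposition used in the proof of Lemma \ref{lemma:bt}, adapted to the simpler DBGD update in which the perturbation direction is an isotropic sphere vector and the projection \(\cP_{\cA}\) replaces the self-concordant preconditioner. First, I would condition on \(u_t\) and use the Bernoulli distribution of \(\hat \cF\) to obtain
\begin{equation*}
\bE[\hat g_t \mid a_t, u_t] \;=\; -\frac{d}{\delta}\,\hat\bP(a'_t \succ a_t)\,u_t.
\end{equation*}
Invoking the rotation symmetry of \(\sigma\), together with the natural anti-symmetry \(c_t(a_t,a'_t) = -c_t(a'_t,a_t)\) required for a consistent dueling model, this rewrites as \(-\frac{d}{\delta}(1-\hat P_t(a'_t))u_t\); the constant part \(-\frac{d}{\delta}u_t\) then drops out after averaging over \(u_t\), since \(\bE_{u_t}[u_t] = 0\) on the unit sphere.

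Next, I would split \(\hat P_t(a'_t) = P_t(a'_t) + (\hat P_t(a'_t) - P_t(a'_t))\), separating the uncorrupted gradient estimator from the corruption-induced bias. The bias term matches the definition of \(b_t\) in the lemma statement, up to the sign that is absorbed into the \(-\bE[b_t\mid a_t]\) on the right-hand side. For the uncorrupted part, the key tool is the classical sphere-to-ball gradient identity of \cite{10.5555/1070432.1070486}, which is a consequence of Stokes' theorem:
\begin{equation*}
\frac{d}{\delta}\,\bE_{u \in \bS}\!\bigl[f(\cP_{\cA}(a+\delta u))\,u\bigr] \;=\; \nabla_a \bE_{x \in \bB}\!\bigl[f(\cP_{\cA}(a+\delta x))\bigr].
\end{equation*}
Applied with \(f = P_t\) at \(a = a_t\), this yields exactly \(\nabla \bar P_t(a_t)\), which combined with the bias piece gives the claimed identity \(\bE[\hat g_t \mid a_t] = \bE[\nabla \bar P_t(a_t)\mid a_t] - \bE[b_t \mid a_t]\).

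No step is conceptually deep; the only subtlety, and where bookkeeping care is needed, is matching sign conventions, i.e., tracking when to use \(\bP(a_t \succ a'_t)\) versus its complement \(\bP(a'_t \succ a_t)\), and reconciling this with the negative sign in the DBGD update rule. The argument is in fact strictly simpler than the proof of Lemma \ref{lemma:bt}, because no anisotropic preconditioner \(\nabla^2\cR_t(a_t)^{1/2}\) appears, so the Flaxman-style identity can be applied in its vanilla form without additional change-of-variables considerations.
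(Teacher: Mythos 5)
Your proposal is correct and follows essentially the same route as the paper's proof: condition on $u_t$, take the Bernoulli expectation of $\hat \cF(a'_t, a_t)$, pass to $\hat P_t(a'_t)$ via rotation symmetry and $\bE[u_t]=0$, split off the corruption-induced difference $\hat P_t(a'_t)-P_t(a'_t)$, and apply the sphere-to-ball (Stokes) identity to recover $\nabla \bar P_t(a_t)$ — the paper simply imports your first two steps as Lemmas 1--2 of \cite{Yue2009InteractivelyOI}. The sign ambiguity you flag on $b_t$ (a careful computation gives $+\bE(b_t\mid a_t)$ for the stated definition of $b_t$, whose difference is written in the opposite order from Lemma \ref{lemma:bt}) is genuinely present in the paper's own derivation as well, and is immaterial downstream since only $\|b_t\|_2$ is used in the regret decomposition.
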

\begin{proof}
\textbf{(Proof for Lemma \ref{lemma:bt_DBGD}).}
Since $a'_t := \cP_{\cA}(a_t + \delta u_t)$, we have
\begin{align}
    \bE(\hat g_t|a_t) & = \bE_{u_t}(\bE(\hat  g_t | a_t, u_t)) \nonumber\\
    & = -\frac{d}{\delta}\bE_{u_t}\left(\bE(\hat \cF(a'_t, a_t)u_t|a_t, u_t) \right) \nonumber\\
    & = -\frac{d}{\delta}\bE\left(\hat P_t(a'_t)u_t|a_t\right) \nonumber\\
    & = -\frac{d}{\delta}\bE\left(\left[ P_t(a'_t) + \hat P_t(a'_t) - P_t(a'_t)\right]u_t|a_t\right)\nonumber \\
    & = \nabla \bE_{x\in \bB}\left(P_t(\cP_{\cA}(a_t + \delta x))|a_t\right) -\frac{d}{\delta}\bE\left[\left(\hat P_t(a'_t) - P_t(a'_t)\right)u_t|a_t\right]\label{new 1}\\
    & = \nabla \bar P_t(a_t)  -\frac{d}{\delta} \bE\left[\left(\hat P_t(a'_t) - P_t(a'_t)\right)u_t|a_t\right] \nonumber\\
    & = \bE(g_t|a_t)  -\frac{d}{\delta} \bE\left[\left(\hat P_t(a'_t) - P_t(a'_t)\right)u_t|a_t\right]\label{new 2}.
\end{align}

We get the equation \eqref{new 1} by using Lemma \ref{lemma:new1}. We get equation \eqref{new 2} by using Lemma \ref{lemma:new2}.
\begin{lemma}[Lemma 2 in \cite{Yue2009InteractivelyOI}]\label{lemma:new1}
    Fix $\delta > 0$, over random unit vector $u$, we have
    \begin{equation*}
        \bE[P_t(\cP_{\cA}(a + \delta u))u] = \frac{\delta}{d}\nabla \bar P_t(a).
    \end{equation*}
\end{lemma}
\begin{lemma}[Lemma 1 in \cite{Yue2009InteractivelyOI}]\label{lemma:new2}
        $\bE_{\cF, u}[\cF(a'_t, a_t)u] = -\bE_{u}[P_t(a'_t)u]$.
\end{lemma}
\end{proof}
In Step 2, we decompose regret under dueling bandits into two parts, regret of decision and feedback error.
\begin{lemma}[Regret Decomposition for DBGD] \label{lemma:decomposition_dbgd} Define $\lambda := \frac{L^{\sigma}}{L^{\sigma} - \delta L^{\mu}L_2}$, \(L_2\) is the Lipschitz constant for \(\sigma'\),  $\gamma = R/\sqrt{T}$, for $b_t$ defined in Lemma \ref{lemma:bt_DBGD}, we have
    \begin{align*} 
        \text{Reg}_T & \leq \underbrace{\lambda\left(\frac{2Rd \sqrt{T}}{\delta} + 13\delta L^{\sigma}L^{\mu}T\right)}_{\textcolor{red}{Regret \hspace{1mm} of \hspace{1mm} Decision}} + \underbrace{2 \lambda \sum^T_{t=1}\bE\left(b_t^{\top}(a_t - a^*)\right)}_{\textcolor{red}{Feedback \hspace{1mm} Error}}.
    \end{align*}
\end{lemma}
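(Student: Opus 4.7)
The plan is to mirror the four-step template of Lemma \ref{lemma:regret decomposition} (Step 2 of the RoSMID proof) but adapted to DBGD, whose gradient estimator is built from a single-point spherical perturbation $a'_t = \cP_{\cA}(a_t + \delta u_t)$ rather than a self-concordant push, and whose iterate update is Euclidean projected gradient ascent with learning rate $\eta = \gamma\delta/d$. I first reduce the dueling regret to a regret expressed in terms of the preference function $P_t$. Using rotation-symmetry of $\sigma$ and its $L^\sigma$-Lipschitzness, plus the fact that $P_t(a^*)=1/2$, I bound each term $\sigma(\mu(a^*)-\mu(a_t))+\sigma(\mu(a^*)-\mu(a'_t))-1$ by $2(P_t(a_t)-P_t(a^*)) + O(\delta L^\sigma L^\mu)$, where the $O(\delta L^\sigma L^\mu)$ slack absorbs the discrepancy between $a'_t$ and $a_t$ (this contributes to the $13\delta L^\sigma L^\mu T$ decision term).

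Next I pass from $P_t$ to its smoothed version $\bar P_t(a)=\bE_{x\in\bB}[P_t(\cP_{\cA}(a+\delta x))]$. A Taylor expansion with the $L_2$-Lipschitz bound on $\sigma'$ and the $L^\mu$-Lipschitzness of $\mu$ yields $|\bar P_t(a)-P_t(a)| \leq O(\delta L^\sigma L^\mu)$ pointwise, and more importantly, since $\mu$ is concave and $\sigma$ is concave on the positive half-line with rotation-symmetry, $\bar P_t$ is concave in $a$; this lets me write
\begin{equation*}
  \bar P_t(a_t) - \bar P_t(a^*) \;\leq\; \nabla \bar P_t(a_t)^{\top}(a_t - a^*).
\end{equation*}
Substituting the gradient identity from Lemma \ref{lemma:bt_DBGD}, namely $\nabla \bar P_t(a_t) = \bE(\hat g_t\,|\,a_t) + \bE(b_t\,|\,a_t)$, splits each round into a playable gradient term $\bE(\hat g_t)^\top(a_t-a^*)$ and the feedback-error term $\bE(b_t)^\top(a_t-a^*)$, the latter being exactly what appears (modulo constants) in the lemma statement.

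The third step is the standard online projected gradient analysis for the DBGD iterate $a_{t+1}=\cP_\cA(a_t - \eta \hat g_t)$. The non-expansiveness of the Euclidean projection together with $\|\hat g_t\|_2 \leq d/\delta$ gives the telescoping bound $\sum_t \hat g_t^\top(a_t-a^*) \leq \frac{\|a_1-a^*\|^2}{2\eta} + \frac{\eta T d^2}{2\delta^2}$. Plugging in $\eta=\gamma\delta/d$ with $\gamma = R/\sqrt T$ and $\|a_1-a^*\|\leq 2R$ yields the $\frac{2Rd\sqrt T}{\delta}$ piece of the decision term. All the other small error sources (smoothing bias of $\bar P_t$, the $a_t$-vs-$a'_t$ gap when comparing against $a^*$, the shrinkage of the feasible set to $\cA_\delta$ so that $a'_t \in \cA$, and the boundary correction near $\partial\cA$) collapse into additional $O(\delta L^\sigma L^\mu T)$ terms, and tallying them gives the constant $13$.

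The last ingredient is the factor $\lambda = L^\sigma/(L^\sigma-\delta L^\mu L_2)$. This arises when converting an inequality of the form $X \leq \text{Decision} + \text{Feedback} + \delta L^\mu L_2 \cdot X/L^\sigma$ into a clean bound on $X$: the $\delta L^\mu L_2/L^\sigma$ term is a residual coming from the second-order Taylor remainder used to replace $P_t(a'_t)$ by $P_t(a_t)$ (or to linearize $\sigma$ around the relevant utility gap), and absorbing it into the left-hand side produces exactly the prefactor $\lambda$. The main obstacle in this proof is careful bookkeeping of these three distinct error sources---single-point gradient variance, two-point smoothing/projection bias, and the Taylor residual that generates $\lambda$---without double-counting; once they are cleanly separated, each piece is bounded by an essentially standard calculation.
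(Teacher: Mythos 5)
There is a genuine gap at the central step of your argument: the claim that $\bar P_t$ is concave in $a$, from which you derive $\bar P_t(a_t)-\bar P_t(a^*)\leq \nabla\bar P_t(a_t)^{\top}(a_t-a^*)$. Write $P_t(a)=\sigma(\mu(a_t)-\mu(a))$; the inner function $a\mapsto \mu(a_t)-\mu(a)$ is \emph{convex} (it is the negative of a concave function), and $\sigma$ is increasing but, by rotation symmetry, concave only on $[0,\infty)$ and convex on $(-\infty,0]$. An increasing function composed with a convex function is concave only if the outer function is concave and the inner one is concave, which fails here; indeed on the region where $\mu(a)>\mu(a_t)$ (which contains $a^*$, the very point you compare against) the composition is increasing-convex of convex, hence convex. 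Smoothing by averaging over the ball does not repair this. So the first-order inequality you invoke is unavailable, and this is precisely why the paper routes through Lemma 4 of \citet{Yue2009InteractivelyOI} (Lemma \ref{lemma:4} in the appendix), which establishes the \emph{approximate} gradient inequality
\begin{equation*}
\bE\Bigl[\textstyle\sum^T_{t=1}\bar P_t(a_t)-\bar P_t(a^*)\Bigr]\;\leq\;\textstyle\sum^T_{t=1}\bE\bigl(\lambda\,\nabla\bar P_t(a_t)^{\top}(a_t-a^*)+(3+\lambda)\delta L^{\sigma}L^{\mu}\bigr),
\end{equation*}
with the multiplicative factor $\lambda$ compensating for the failure of concavity. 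The $\lambda$ then multiplies $\nabla\bar P_t(a_t)^{\top}(a_t-a^*)=\bE(\hat g_t+b_t\mid a_t)^{\top}(a_t-a^*)$, which is how it ends up in front of both the decision term and the feedback-error term.

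Your alternative account of where $\lambda$ comes from does not rescue the argument. You propose that $\lambda$ arises by absorbing a residual of the form $\delta L^{\mu}L_2\cdot X/L^{\sigma}$ back into the left-hand side $X$, attributing that residual to the Taylor remainder in replacing $P_t(a'_t)$ by $P_t(a_t)$. But that replacement costs an \emph{additive} $O(\delta L^{\sigma}L^{\mu})$ per round (it feeds the $13\delta L^{\sigma}L^{\mu}T$ term), not a term proportional to the accumulated regret $X$, so there is nothing of the form $cX$ to absorb and the prefactor $L^{\sigma}/(L^{\sigma}-\delta L^{\mu}L_2)$ never materializes by that mechanism. The remainder of your outline --- the reduction from $\text{Reg}_T$ to $\sum_t(\bar P_t(a_t)-\bar P_t(a^*))$ (the paper's Lemma \ref{lemma:3}), the bias identity from Lemma \ref{lemma:bt_DBGD}, and the telescoping projected-gradient bound with $\|\hat g_t\|_2\leq d/\delta$ and $\eta=\gamma\delta/d$ --- matches the paper and is sound; the proof is repaired by replacing the concavity claim with the cited approximate gradient inequality.
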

\begin{proof}
Because of Lemma \ref{lemma:4}, we have
    \begin{align*}
        & \bE\left[\sum^T_{t=1}\bar P_t(a_t) - \bar P_t(a^*) \right] \leq \sum^T_{t=1} \bE\left(\lambda \nabla \bar P_t(a_t)(a_t - a^*) + (3 + \lambda)\delta L^{\sigma}L^{\mu}\right)\\
        & = \lambda \sum^T_{t=1}\bE(\bE(g_t | a_t)^{\top}(a_t - a^*)) + (3+\lambda)\delta L^{\sigma}L^{\mu}T\\
        & = \lambda \sum^T_{t=1}\bE(\bE(\hat g_t + b_t| a_t)^{\top}(a_t - a^*)) + (3+\lambda)\delta L^{\sigma}L^{\mu}T\\
        & = \lambda \sum^T_{t=1}\bE(\hat g_t^{\top} (a_t - a^*)) + \lambda \sum^T_{t=1}\bE(b_t^{\top} (a_t - a^*)) + (3+\lambda)\delta L^{\sigma}L^{\mu}T
    \end{align*}
\begin{lemma}[Lemma 4 in \cite{Yue2009InteractivelyOI}]\label{lemma:4}
    Fix $\delta \in (0, \frac{L^{\sigma}}{L^{\mu}L_2})$, for \(\lambda\) defined in Lemma \ref{lemma:decomposition_dbgd}, we have
    \begin{equation*}
        \bE\left[\sum^T_{t=1}\bar P_t(a_t)- \bar P_t(a^*) \right] \leq \sum^T_{t=1} \bE\left(\lambda \nabla \bar P_t(a_t)^{\top} (a_t - a^*) + (3 + \lambda)\delta  L^{\sigma}L^{\mu}\right).
    \end{equation*}
\end{lemma}
    Since $a_{t+1} = \cP_{\cA}(a_t - \eta \hat g_t)$, and $\eta = \frac{\gamma \delta}{d}$, by applying the telescoping sum and because $a_1 = 0$, $\|g_t\|_2 \leq G, \forall t$, we have
    \begin{align*}
        \lambda \sum^T_{t=1}\bE\left(\hat g_t^{\top} (a_t - a^*)\right) \leq \lambda\left(\frac{R^2}{2\eta} + \frac{T\eta G^2}{2}\right).
    \end{align*}
\end{proof}

Noticing that $\|b_t\|_2 \leq \frac{d}{\delta}\min\left(2, L^{\sigma}|c_t(a_t, a'_t)|\right)$. Moreover, $\|a_t - a^*\|_2 \leq 2R$. Therefore, in Step 3, we can control the feedback error by the following.

\begin{lemma}\label{lemma: estimation_error_dbgd}
    $\sum^T_{t=1}\bE\left(b_t^{\top}(a_t - a^*)\right) \leq  2R dL^{\sigma}T^{\rho}/\delta$.
\end{lemma}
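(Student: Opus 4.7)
The plan is to proceed by a direct Cauchy–Schwarz bound, since Lemma \ref{lemma:bt_DBGD} has already given an explicit closed form for $b_t$, namely $b_t = \frac{d}{\delta}\bigl(\hat P_t(a'_t) - P_t(a'_t)\bigr)u_t$ with $u_t$ a unit vector. The sum $\sum_t \bE(b_t^\top (a_t - a^*))$ therefore decomposes into a product of the corruption-induced probability gap and the displacement from the optimum, and both factors admit clean deterministic upper bounds. The observation driving the whole argument is that although the gradient bias $b_t$ can be large in the $1/\delta$ sense, it is linear in the per-round corruption $|c_t(a_t,a'_t)|$, whose total budget is already constrained to $\cO(T^\rho)$.

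First I would pass to norms: $b_t^\top(a_t - a^*) \leq \|b_t\|_2\,\|a_t - a^*\|_2$. Since $\cA$ is contained in a ball of radius $R$, the second factor is at most $2R$. For the first factor, the unit vector $u_t$ gives $\|b_t\|_2 = \frac{d}{\delta}\bigl|\hat P_t(a'_t) - P_t(a'_t)\bigr|$, and the definitions $P_t(a'_t) = \sigma(\mu(a_t)-\mu(a'_t))$ and $\hat P_t(a'_t) = \sigma(\mu(a_t)-\mu(a'_t) + c_t(a_t,a'_t))$ together with $L^{\sigma}$-Lipschitzness of $\sigma$ yield $\|b_t\|_2 \leq \frac{d L^{\sigma}}{\delta}\,|c_t(a_t,a'_t)|$. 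Combining these two estimates gives the pointwise bound
\begin{equation*}
b_t^\top(a_t - a^*) \;\leq\; \frac{2Rd L^{\sigma}}{\delta}\,|c_t(a_t,a'_t)|.
\end{equation*}

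Finally I would take expectation, sum over $t \in [T]$, and invoke the hypothesis $\sum_{t=1}^T |c_t(a_t,a'_t)| \leq \cO(T^\rho)$ from the statement of Proposition \ref{theorem:regret_upper_bound_DBGD} to conclude $\sum_{t=1}^T \bE(b_t^\top(a_t-a^*)) \leq \frac{2RdL^{\sigma}T^\rho}{\delta}$, matching the target bound. There is essentially no obstacle here: the only subtlety is making sure the Lipschitz bound on $\sigma$ is applied to the utility-shift formulation of corruption (rather than to the $\{0,1\}$-feedback), which is precisely how $\hat P_t$ was defined in Section \ref{sec:model}. This is why the proof for DBGD in the arbitrary-corruption regime is markedly simpler than the $\rho$-LIHF analysis for RoSMID — we do not need the strong-concavity-driven refinement $\|a_t-a^*\|_2 \leq \sqrt{(2/\alpha)(\mu(a^*)-\mu(a_t))}$ that powered Step 4 of Theorem \ref{proposition:matching_upper_bound}, because here we are content with the crude diameter bound $2R$.
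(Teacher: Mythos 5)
Your proposal is correct and follows essentially the same route as the paper: Cauchy--Schwarz, the diameter bound $\|a_t - a^*\|_2 \leq 2R$, the Lipschitz estimate $\|b_t\|_2 \leq \frac{dL^{\sigma}}{\delta}|c_t(a_t,a'_t)|$ (the paper keeps an intermediate $\min(2, L^{\sigma}|c_t|)$ which it immediately drops), and then the total corruption budget. No gaps.
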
    
\begin{proof}
    \textbf{(Proof for Lemma \ref{lemma: estimation_error_dbgd}).}
    \begin{align*}
\sum^T_{t=1}\bE(b_t (a_t - a^*)) & \leq 2R \frac{d}{\delta} \sum^T_{t=1}\min\left(2, L^{\sigma}|c_t(a_t, a'_t)|\right)\\
        & \leq  2R \frac{dL^{\sigma}}{\delta} \sum^T_{t=1}|c_t(a_t, a'_t)|\\
        & = 2R \frac{dL^{\sigma}T^{\rho}}{\delta}.
    \end{align*}
\end{proof}
\begin{lemma}[Lemma 3 in \cite{Yue2009InteractivelyOI}]\label{lemma:3}
    $\text{Reg}_T  \leq 2\bE\left[\sum^T_{t=1}\bar P_t(a_t)- \bar P_t(a^*) \right] + 5\delta L^{\sigma}L^{\mu}T$
\end{lemma}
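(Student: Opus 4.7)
The plan is to bound each instantaneous dueling regret $r_t := \sigma(\mu(a^*) - \mu(a_t)) + \sigma(\mu(a^*) - \mu(a'_t)) - 1$ by twice a ``$\bar P_t$-gap'' plus an error linear in the exploration radius $\delta$, and then sum over $t$. Three elementary facts will drive the whole argument: (i) the rotation-symmetric identity $\sigma(x) + \sigma(-x) = 1$, which in particular forces $\sigma(0) = 1/2$, and hence $P_t(a_t) = \sigma(\mu(a_t) - \mu(a_t)) = 1/2$; (ii) the composition $\sigma \circ \mu$ is $L^\sigma L^\mu$-Lipschitz, since $\sigma$ is $L^\sigma$-Lipschitz and $\mu$ is $L^\mu$-Lipschitz; and (iii) the Euclidean projection $\cP_\cA$ onto the convex set $\cA$ is non-expansive.

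First I would rewrite each summand of $r_t$ as a $P$-gap. Using (i), $\sigma(\mu(a^*) - \mu(a_t)) = 1 - P_t(a^*) = [P_t(a_t) - P_t(a^*)] + 1/2$, and the analogous identity with the auxiliary probability $P'_t(a) := \sigma(\mu(a'_t) - \mu(a))$ converts the second summand in the same way. Adding them and using $P_t(a_t) = P'_t(a'_t) = 1/2$ yields the clean decomposition $r_t = [P_t(a_t) - P_t(a^*)] + [P'_t(a'_t) - P'_t(a^*)]$. To collapse the two into a single $P_t$-gap I would invoke (ii) and (iii): since $a'_t = \cP_\cA(a_t + \delta u_t)$ and $a_t \in \cA$ is its own projection, non-expansiveness of $\cP_\cA$ gives $\|a'_t - a_t\| \leq \delta$; hence $|P'_t(a^*) - P_t(a^*)| \leq L^\sigma L^\mu \delta$, while $P'_t(a'_t) = P_t(a_t)$ exactly. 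This delivers $r_t \leq 2[P_t(a_t) - P_t(a^*)] + L^\sigma L^\mu \delta$.

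Finally I would pass from $P_t$ to the smoothed $\bar P_t$. For any $a \in \cA$ and any unit $x$, (iii) gives $\|\cP_\cA(a + \delta x) - a\| = \|\cP_\cA(a + \delta x) - \cP_\cA(a)\| \leq \delta$; combining with (ii) yields $|P_t(\cP_\cA(a + \delta x)) - P_t(a)| \leq L^\sigma L^\mu \delta$; averaging over $x \in \bB$ then gives $|\bar P_t(a) - P_t(a)| \leq L^\sigma L^\mu \delta$ uniformly in $a \in \cA$. Applying this at both $a_t$ and $a^*$ and plugging into the previous display gives $r_t \leq 2[\bar P_t(a_t) - \bar P_t(a^*)] + 5\, \delta L^\sigma L^\mu$. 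Summing over $t \in [T]$ and taking expectation concludes.

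The argument is essentially purely geometric/smoothing: it never touches the gradient update, the corruption, or the parameter $\rho$. The main bookkeeping care will be in tracking the constant $5 = 1 + 4$ (one $\delta$ from bridging $a'_t$ to $a_t$ in Step~2, four $\delta$'s from replacing $P_t$ by $\bar P_t$ at both $a_t$ and $a^*$ with an overall factor of two), and the only mild subtlety is that the smoothing bound in the last step must hold uniformly for every $a \in \cA$, \emph{including boundary points} --- which is precisely why $\bar P_t$ is defined with the projection $\cP_\cA$ built in rather than as a naive averaging over $a + \delta \bB$. That boundary issue is the only place where a careless version of the proof could break, and non-expansiveness of $\cP_\cA$ handles it cleanly.
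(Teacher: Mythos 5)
Your proof is correct and is essentially the standard argument for this lemma (the paper itself only cites it from \citet{Yue2009InteractivelyOI} without reproving it): rewrite each summand of the instantaneous regret as a $P$-gap via rotation symmetry, use non-expansiveness of $\cP_\cA$ together with the $L^\sigma L^\mu$-Lipschitzness of $a \mapsto \sigma(\mu(a_t)-\mu(a))$ to merge the two gaps at a cost of $\delta L^\sigma L^\mu$, and pay $4\delta L^\sigma L^\mu$ more to replace $P_t$ by the smoothed $\bar P_t$ at both $a_t$ and $a^*$. Your accounting of the constant $5 = 1 + 4$ and your handling of boundary points through the projection built into $\bar P_t$ are both right.
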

Set $G = \frac{d}{\delta}$, and by Lemma \ref{lemma:3}, we have
\begin{equation*}
\text{Reg}_T  \leq \lambda\left(\frac{2Rd \sqrt{T}}{\delta} + 13\delta L^{\sigma}L^{\mu}T\right) + 4R \frac{\lambda dL^{\sigma}T^{\rho}}{\delta}.
\end{equation*}

Therefore, by choosing $\delta := \frac{\sqrt{2Rd}}{\sqrt{13L^{\sigma}L^{\mu}}T^{\alpha}}, \gamma := \frac{R}{\sqrt{T}}$, and $T > \left(\frac{\sqrt{2Rd}L_{\mu}L_2}{\sqrt{13 L^{\sigma}L^{\mu}}L^{\sigma}}\right)^4$, we have
\begin{equation*}
        \text{Reg}_T \leq 2 \lambda_T \sqrt{26 RdL^{\sigma}L^{\mu}} T^{1-\alpha}+ 2L^{\sigma}\sqrt{26RdL^{\sigma}L^{\mu}}T^{\alpha + \rho}.
\end{equation*}
$\lambda_T = \frac{L^{\sigma} \sqrt{13L^{\sigma}L^{\mu}}T^{\alpha}}{L^{\sigma}\sqrt{13L^{\sigma}L^{\mu}}T^{\alpha} - L^{\mu}L_2\sqrt{2Rd}}$, completes the proof.   
\end{proof}
\subsection{Proof for Efficiency Statement in Proposition \ref{theorem:regret_upper_bound_DBGD}}
\begin{proof}
The robustness statement in Proposition \ref{theorem:regret_upper_bound_DBGD} implies that DBGD could afford agnostic corruption level $\cO(T^{1 - \alpha})$. This implies that there exists a hard instance which makes DBGD attain regret in order of $\Omega(T^{1 - \alpha})$ in non-corrupt setting, formally described as follows. To start a proof by contradiction, we assume that the efficiency statement in Proposition \ref{theorem:regret_upper_bound_DBGD} is false. Specifically, for all problem instance $\mu, \sigma$, there exist a $\epsilon > 0$, such that $\text{Reg}_T \leq \cO(T^{1-\alpha-\epsilon})$. We assume $\epsilon$ to be the least possible. In particular, there exists a pair of instance $\mu, \sigma$ such that $\text{Reg}_T = c_0 T^{1-\alpha-\epsilon}$, $c_0$ is a positive constant. In other words, it says that there exists a pair of instance $\mu, \sigma$ such that DBGD suffers regret in $\Theta(T^{1-\alpha-\epsilon})$. 

Consider the following problem instance. $\mu(a) = \theta^{\top}a$, specifically $\mu$ is linear function. Let $d = 2$ with action set $\cA_1 := \{(a_1, a_2): a_1 \geq 0, a_2 \geq 0, \frac{1}{2}a_1 + a_2 - \frac{1}{4} \leq 0\}$ with $\theta = [\frac{1}{2}, \frac{1}{2}]$. The optimal arm is $a_1 = [\frac{1}{2}, 0]$, which is unique. Let the link function $\sigma(x) = \frac{1}{2} + \frac{1}{2}x$. It is easy to verify that it is rotational symmetric and Lipschitz. Denote $\text{Reg}_T$ as the regret occurred by DBGD on this problem instance. We know $\text{Reg}_T \leq \cO(T^{1-\alpha- \epsilon})$. Consequently, it implies that $a_2$ is at most proposed $8c_0T^{1-\alpha - \epsilon}$ times. This is because $\text{Reg}_T \leq c_0 T^{1-\alpha-\epsilon}$ and if the proposed action pair $(a_t, a'_t)$ including $a_2$ at round $t$, it occurs regret at least $\frac{1}{8}$.
 
 Now consider a different problem instance with the same $\mu, \sigma$ but different action set, which is  $\cA_2 := \{(a_1, a_2): a_1 \geq 0, a_2 \geq 0, \frac{3}{2}a_1 + a_2 - \frac{3}{4} \leq 0\}$. The optimal arm in this action set $a_2 = [0, \frac{3}{4}]$, which is also unique. Proposing arm $a_1 = [\frac{1}{2}, 0]$ occurs at least $\frac{1}{8}$ regret. Consider an adversary that pulls the utility of $a_2$ from $\frac{3}{8}$ down to $\frac{1}{8}$ whenever it is pulled at a cost of $c_t(A_t) = \frac{1}{4}$. It is equivalent to say that every time when the proposed arm $w$ falls in the region $\mathcal{W} = \{a_1 \geq 0, a_2 \geq 0 , \frac{1}{2}a_1 + a_2 -\frac{1}{4} \geq 0, \frac{3}{2}a_1 + a_2 -\frac{3}{4} \leq 0\}$, we assume that the utility is sampled from $\theta^{\top}\cP_{\cA_1}(w)$ and $|c_t(w)| = |\theta^{\top}\cP_{\cA_1}(w) - \theta^{\top}w| \leq \frac{1}{4}$. Choosing the total corruption budget as $2c_0T^{1-\alpha-\epsilon}$, the adversary can afford to corrupt $8c_0T^{1-\alpha-\epsilon}$ times.

 This means that as long as the adversary is corrupting, the utility observed in the first problem instance is exactly the same as the utility observed in the second problem instance, in which we pull $a_2$ as most $8c_0T^{1-\alpha-\epsilon}$. However, $a_2$ is the optimal arm in the second problem instance, which implies that the regret occurred on the second problem instance is at least $T - 8c_0T^{1-\alpha-\epsilon}$, which is linear in $T$. This contradicts the robustness statement in Proposition \ref{theorem:regret_upper_bound_DBGD}, which says that when agnostic corruption is $2c_0T^{1-\alpha - \epsilon}$, the regret upper bound is $O(T^{1 - \epsilon})$, which is sublinear. To reconcile the conflicts, we must have the efficiency statement in Proposition \ref{theorem:regret_upper_bound_DBGD} true to make the agnostic corruption budget at least $\Omega(T^{1-\alpha})$ to make the parallel world argument valid.
\end{proof}
\section{Additional Experiments}\label{sec:appendix-exp}
In this section, we list all the experiment details. At the beginning, we introduce the regret order fitting method, and baseline algorithms for comparison.

\textbf{Fitted Order of Regret.} We introduce the methodology that we use to compute the order of $\text{Reg}_T$ in terms of the number of iteration, $T$. To fit the order of the regret, we first convert it into $\log$ scale. Then we input the last $1\%$ of data, run linear regression, and use ordinary least squares to estimate the slope of the line, which is the \emph{fitted order} of $\text{Reg}_T$.

\textbf{Baseline Algorithms.} We consider three baseline algorithms for comparison, Doubler \citep{ailon2014reducing} and Sparring \citep{ailon2014reducing}

\textbf{Doubler} is the first approach that transforms a dueling bandit problem into a standard multi-armed bandit (MAB) problem. It operates in epochs of exponentially increasing length: in each epoch, the left arm is sampled from a fixed distribution, and the right arm is selected using a MAB algorithm to minimize regret against the left arm. The feedback received by the MAB algorithm is the number of wins the right arm achieves compared to the left arm. Under linear link assumption, Doubler has been proven to experience regret as the same order as underlying MAB algorithm. For continuous \emph{action} space and general concave utility, we choose Bandit Gradient Descent (BGD, \citep{flaxman2004online}), with regret $\cO(T^{3/4})$, as the underlying MAB algorithm.

\textbf{Sparring} initializes two MAB instances and lets them compete against each other. It is a heuristic improvement over Doubler. Although it does not come with a regret upper bound guarantee, it is reported to enjoy better performance compared to Doubler \citep{ailon2014reducing}. We also choose BGD as the underlying MAB algorithm. 

To validate efficiency-robustness trade-off in Proposition \ref{theorem:regret_upper_bound_unknown} and Proposition \ref{theorem:regret_upper_bound_DBGD}, we try different $\alpha$ values. In the first row of Figure \ref{fig:D2}, we consider $\alpha = 0.05$ for DBGD and $\alpha = 0.9$ for RoSMID. We consider $\rho \in [0.5, 0.95]$. According to the theoretical prediction, both DBGD and RoSMID can tolerate at most $\cO(T^{0.95})$ agnostic corruption, which aligns with experiment results. This is because the estimated order of regret is less than 1. In the second row of Figure \ref{fig:D2}, we consider $\alpha = 0.1$ for DBGD and $\alpha = 0.8$ for RoSMID. According to the theoretical prediction, both DBGD and RoSMID can tolerate at most $\cO(T^{0.9})$ agnostic corruption. From the figure, we can see they have smaller fitted order of regret when $\rho = 0.5$ while at a cost of tolerating smaller magnitude of agnostic corruption (it has linear regret when $\rho = 0.95$), which reveals intrinsic tradeoff between efficiency and robustness. 

\begin{figure}[H]
    \centering
    \includegraphics[width = \textwidth]{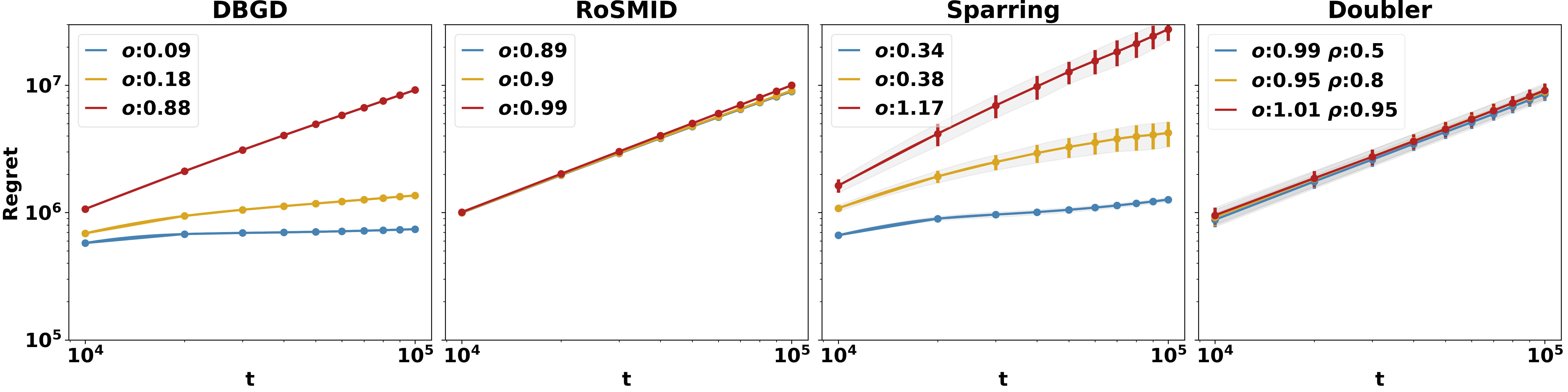}
    \includegraphics[width = \textwidth]{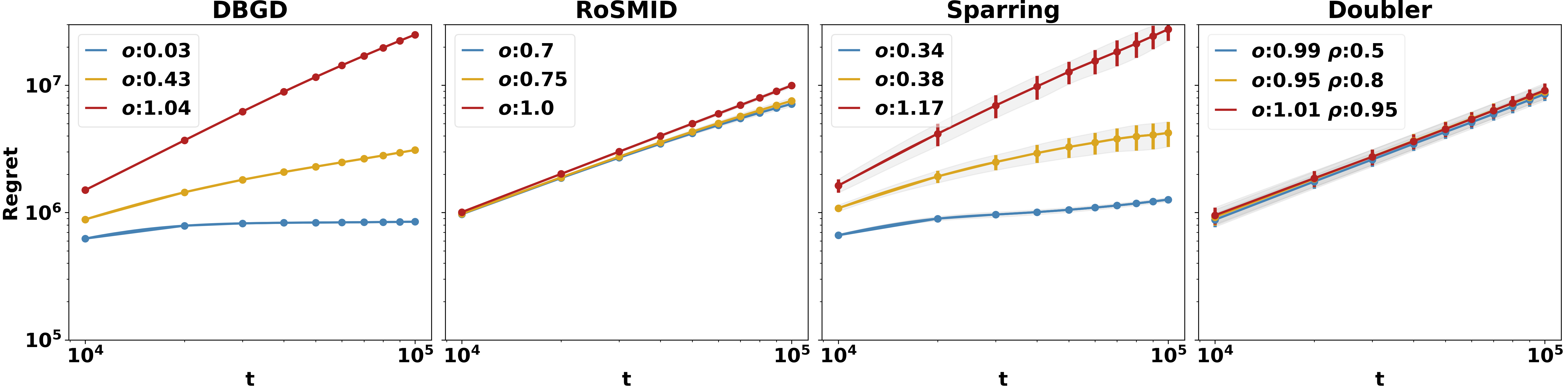}
    \caption{In the first row, we consider $\alpha = 0.05$ for DBGD and $\alpha = 0.9$ for RoSMID. In the second row, we consider $\alpha = 0.1$ for DBGD and $\alpha = 0.8$ for RoSMID. Given the \(\alpha\), for each algorithm, we tested its performance under \(\rho\)-Imperfect Human feedback with \(\rho = 0.5, 0.8, 0.95\). For each \(\rho\), we presented a line plot of the average regret over five simulations, accompanied by \(\pm\) one standard deviation shown by the shaded region. In the legend, \(o\) denotes the estimated line slope, calculated using least squares on the last \(1\%\) of the data.}
    \label{fig:D2}
\end{figure}
\end{document}